\newif\iffullpaper
\newtheorem{theorem}{Theorem}[section]
\newtheorem{proposition}[theorem]{Proposition}
\newtheorem{lemma}[theorem]{Lemma}
\newtheorem{corollary}[theorem]{Corollary}
\newtheorem*{mainthm}{Main Theorem}
\theoremstyle{definition}
\theoremstyle{remark}
\newtheorem{rem}[theorem]{Remark}
\numberwithin{equation}{section}
\DeclareMathOperator*{\argmin}{argmin}
\newcommand{\RR}{\mathbb{R}}
\newcommand{\eps}{\varepsilon}
\newcommand{\vu}{\mathbf{u}}
\newcommand{\vw}{\mathbf{w}}
\newcommand{\vx}{\mathbf{x}}
\newcommand{\vb}{\mathbf{b}}
\newcommand{\dd}{\mathrm{d}}
\newcommand{\vz}{\mathbf{z}}
\newcommand{\vy}{\mathbf{y}}
\newcommand{\mB}{\mathbf{B}}
\newcommand{\mA}{\mathbf{A}}
\newcommand{\mI}{\mathbf{I}}
\newcommand{\vxi}{\bm{\xi}}
\newcommand{\vomega}{\bm{\omega}}
\newcommand{\KL}{\mathrm{KL}}
\newcommand{\TV}{\mathrm{TV}}
\newcommand{\innp}[1]{\left\langle #1 \right\rangle}
\newcommand{\lv}{\lVert}
\newcommand{\rv}{\rVert}
\renewcommand{\epsilon}{\varepsilon}
\begin{document}
\iffullpaper
\title{\textbf{Langevin Monte Carlo without smoothness}}
\author{
Niladri S. Chatterji\footnote{Equal contribution.}\phantom{$^\ast$}\\
University of California, Berkeley\\
\textsf{chatterji@berkeley.edu}\\
\and
Jelena Diakonikolas$^\ast$\\
University of Wisconsin, Madison\\
\textsf{jdiakonikola@wisc.edu}\\
\and
Michael I. Jordan\\
University of California, Berkeley\\
\textsf{jordan@cs.berkeley.edu}\\
\and
Peter L. Bartlett\\
University of California, Berkeley\\
\textsf{peter@berkeley.edu}\\
}
\else
\twocolumn[
\aistatstitle{Langevin Monte Carlo without smoothness}

\aistatsauthor{Niladri S.~Chatterji{$^\ast$} \And Jelena Diakonikolas$^\ast$ \And Michael I.~Jordan \And Peter L.~Bartlett}

\aistatsaddress{UC Berkeley \And UW-Madison \And UC Berkeley \And UC Berkeley} ]
\fi
\maketitle

\begin{abstract}
Langevin Monte Carlo (LMC) is an iterative algorithm used to generate samples from a distribution that is known only up to a normalizing constant. The nonasymptotic dependence of its mixing time on the dimension and target accuracy is understood mainly in the setting of smooth (gradient-Lipschitz) log-densities, a serious limitation for applications in machine learning. In this paper, we remove this limitation, providing polynomial-time convergence guarantees for a variant of LMC in the setting of nonsmooth log-concave distributions. At a high level, our results follow by leveraging the implicit smoothing of the log-density that comes from a small  Gaussian perturbation that we add to the iterates of the algorithm and controlling the bias and variance that are induced by this perturbation.
\end{abstract}
\section{Introduction}\label{sec:intro}
The problem of generating a sample from a distribution that is known up to a normalizing constant is a core problem across the computational and inferential sciences~\citep{robert2013monte, kaipio2006statistical, cesa2006prediction, rademacher2008dispersion,  vempala2005geometric, chen2018fast}. A prototypical example involves generating a sample from a log-concave distribution---a probability distribution of the following form:
$$
p^*(\vx) \propto e^{-U(\vx)},
$$
where the function $U(\vx)$ is convex and is referred to as the \emph{potential function}. While generating a sample from the exact distribution $p^*(\vx)$ is often computationally intractable, for most applications it suffices to generate a sample from a distribution $\Tilde{p}(\vx)$ that is close to $p^*(\vx)$ in some distance (such as, e.g., total variation distance, Wasserstein distance, or Kullback-Leibler divergence).

The most commonly used methods for generating a sample from a log-concave distribution are (i) random walks
~\citep{dyer1991random,lovasz2007geometry}, (ii) different instantiations of Langevin Monte Carlo (LMC)~\citep{parisi1981correlation}, and (iii) Hamiltonian Monte Carlo (HMC)~\citep{neal2011mcmc}. 
These methods trade off rate of convergence against per-iteration complexity and applicability: random walks are typically the slowest in terms of the total number of iterations, but each step is fast as it does not require gradients of the log-density and they are broadly applicable, while HMC is the fastest in the number of iterations, but each step is slow as it uses gradients of the log-density and it mainly applies to distributions with smooth log-densities.

LMC occupies a middle ground between random walk and HMC.  In its standard form, LMC updates its iterates as:
\begin{equation}\label{eq:discrete-langevin} \tag{LMC}
    \vx_{k+1} = \vx_k - \eta \nabla U(\vx_k) + \sqrt{2\eta}\vxi_{k},
\end{equation}
where $\vxi_k \sim \mathcal{N}(\mathbf{0}, I_{d\times d})$ are independent Gaussian random vectors.  The per-iteration complexity is reduced relative to HMC because it only requires stochastic gradients of the log-density~\citep{welling2011bayesian}.  This also increases its range of applicability relative to HMC. While it is not a reversible Markov chain and classical theory of MCMC does not apply, it is nonetheless amenable to theoretical analysis given that it is obtained via discretization of an underlying stochastic differential equation (SDE).  There is, however, a fundamental difficulty in connecting theory to the promised wide range of applications in statistical inference. In particular, the use of techniques from SDEs generally requires $U(\vx)$ to have Lipschitz-continuous gradients. This assumption excludes many natural applications~\citep{kaipio2006statistical,durmus2018efficient,marie2019preconditioned,li2018graph}. 

%
A prototypical example of sampling problems with nonsmooth potentials are different instantiations of sparse Bayesian inference. In this setting, one wants to sample from the posterior distribution of the form:
\begin{align*}
    p^*(\vx) \propto \exp\big(-f(\vx)-\|\Phi \vx\|_p^p\big),
\end{align*}
where $f(\vx)$ is the log-likelihood function, $\Phi$ is a sparsifying dictionary (e.g., a wavelet dictionary), and $p \in [1, 2]$. In the simplest case of Bayesian LASSO~\citep{park2008bayesian}, $f(\vx) = \|\mA \vx - \vb\|_2^2$, $\Phi = \mI,$ and $p = 1,$ where $\mA$ is the measurement matrix, $\vb$ are the labels, and $\mI$ denotes the identity matrix. In general, when $\Phi$ is the identity or an orthogonal wavelet transform, proximal maps (i.e., solutions to convex minimization problems of the form $\min_{\vx \in \RR^d}\{\|\Phi\vx\|_p^p + \frac{1}{2\lambda}\|\vx - \vz\|_2^2\},$ where $\lambda$ and $\vz$ are parameters of the proximal map) are easily computable and proximal LMC methods apply~\citep{cai2018uncertainty,price2018sparse,durmus2019analysis,durmus2018efficient,atchade2015moreau}. However, in the so-called analysis-based approaches with overcomplete dictionaries, $\Phi$ is non-orthogonal and the existence of efficient proximal maps becomes unclear~\citep{elad2007analysis,cherkaoui2018analysis}. 


In this work, we tackle this problem head-on and pose the following question:
%
\begin{center}
   \emph{Is it possible to obtain nonasymptotic convergence results for LMC with a nonsmooth potential?}
\end{center}
Here, we focus on standard LMC (allowing only minor modifications) and the general case in which proximal maps are not efficiently computable. 
We answer this question positively through a series of results that involve transformations of the basic stochastic dynamics in~\eqref{eq:discrete-langevin}. 
In contrast to previous work that considered nonsmooth potentials \citep[e.g.,][]{atchade2015moreau,durmus2018efficient,hsieh2018mirrored,durmus2019analysis}, the transformations we consider are simple (such as perturbing a gradient query point by a Gaussian), they do not require strong assumptions such as the existence of proximal maps, they can apply directly to nonsmooth Lipschitz potentials without any additional structure (such as composite structure in~\citet{atchade2015moreau,durmus2018efficient} or strong convexity in~\citet{hsieh2018mirrored}), and the guarantee we provide is on the distribution of the last iterate of LMC as opposed to an average of distributions over a sequence of iterates of LMC in~\citet{durmus2019analysis}. 

Our main theorem is based on a Gaussian smoothing result  summarized in the following theorem.

\begin{mainthm}[Informal]
Let $\bar{p}^*(\vx) \propto \exp(-\bar{U}(\vx))$ be a probability distribution, where $\bar{U}(\vx) = U(\vx) + \psi(\vx),$ $U(\cdot)$ is a convex subdifferentiable function whose subgradients $\nabla U(\cdot)$ satisfy 
\iffullpaper
$$
(\exists L <\infty,\, \alpha \in [0, 1]):\; \|\nabla U(\vx) - \nabla U(\vy)\|_2 \leq L \|\vx - \vy\|_2^{\alpha}, \quad \forall \vx,\, \vy \in \RR^d,
$$
\else
$$
\; \|\nabla U(\vx) - \nabla U(\vy)\|_2 \leq L \|\vx - \vy\|_2^{\alpha}, \quad \forall \vx,\, \vy \in \RR^d,
$$
for some $L <\infty,\, \alpha \in [0, 1]$,
\fi
and $\psi(\cdot)$ is $\lambda$-strongly convex and $m$-smooth. There exists an algorithm---Perturbed Langevin Monte Carlo~\eqref{eq:modifed-LMC}---whose iterations have the same computational complexity as~\eqref{eq:discrete-langevin} and that requires no more than $\widetilde{\cal O}(d^{\frac{5-3\alpha}{2}}\big/\varepsilon^{\frac{4}{1+\alpha}})$ iterations to generate a sample that is $\varepsilon$-close to $\bar{p}^*$ in 
2-Wasserstein distance.

Further, if the goal is to sample from $p^*(\vx) \propto \exp(-U(\vx)),$ a variant of~\eqref{eq:modifed-LMC} takes 
{poly}$(d/\varepsilon)$ 
iterations to generate a sample from a distribution that is $\varepsilon$-close to $p^*$ in total variation distance.
\end{mainthm}
This informal version of the theorem displays only the dependence on the dimension $d$ and accuracy $\varepsilon$. A detailed statement is provided in Theorems~\ref{thm:g-smoothing-mixing-time-mLMC} and \ref{thm:TVcompositecontract}, and Corollary~\ref{cor:reg-potentials-result}. 

Our assumption on the subgradients of $U$ from the statement of the Main Theorem is known as H\"{o}lder-continuity, or $(L, \alpha)$-weak smoothness of the function.  It interpolates between Lipschitz gradients (smooth functions, when $\alpha =1$) and bounded gradients (nonsmooth Lipschitz functions, when $\alpha = 0$). In Bayesian inference, the general $(L, \alpha)$-weakly smooth potentials arise in the Bayesian analog of ``bridge regression,'' which interpolates between LASSO and ridge regression \citep[see, e.g.,][]{park2008bayesian} . To the best of our knowledge, our work is the first to consider the convergence of LMC in this general weakly-smooth model of the potentials -- previous work only considered its extreme cases obtained for $\alpha = 0$ and $\alpha = 1.$

To understand the behavior of LMC on weakly smooth (including nonsmooth) potentials, we leverage results from the optimization literature. First, by using the fact that a weakly smooth function can be approximated by a smooth function---a result that has been exploited in the optimization literature to obtain methods with optimal convergence rates~\citep{nesterov2015universal,devolder2014first}---we show that even the basic version of LMC can generate a sample in polynomial time, as long as $U$ is ``not too nonsmooth'' (namely, as long as $1/\alpha$ can be treated as a constant). 

The main impediment to the convergence analysis of LMC when treating a weakly smooth function $U$ as an inexact version of a nearby smooth function is that a constant bias is induced on the 
gradients, as discussed in Section~\ref{sec:main-deterministic}. To circumvent this issue, in Section~\ref{sec:main-gaussian} we argue that an LMC algorithm can be analyzed as a different LMC run on a Gaussian-smoothed version of the potential using \emph{unbiased stochastic estimates of the gradient}.\footnote{A similar idea was used in~\citet{kleinberg2018alternative} to view expected iterates of stochastic gradient descent as gradient descent on a smoothed version of the objective. Stochastic smoothing has also been used to lower the parallel complexity of nonsmooth minimization~\citep{duchi2012randomized}.} Building on this reduction, we define a Perturbed Langevin Monte Carlo~\eqref{eq:modifed-LMC} algorithm that reduces the additional variance that arises in the gradients from the reduction. 

To obtain our main theorem, we couple a result about convergence of LMC with stochastic gradient estimates in \emph{Wasserstein} distance~\citep{durmus2019analysis} with carefully combined applications of inequalities relating Kullback-Leibler divergence, Wasserstein distance, and total variation distance. Also useful are structural properties of the weakly smooth potentials and their Gaussian smoothing. As a byproduct of our techniques, we obtain a nonasymptotic result for convergence in \emph{total variation} distance for (standard) LMC with stochastic gradients, which, to the best of our knowledge, was not known prior to our work. 

\subsection{Related work} 
Starting with the work of Dalalyan~\citep{dalalyan2017theoretical}, a variety of theoretical results have established mixing time results for LMC~\citep{durmus2016high,raginsky2017non,zhang2017hitting,cheng2018convergence,cheng2018underdamped,dalalyan2019user,xu2018global,lee2018algorithmic} and closely related methods, such as Metropolis-Adjusted LMC~\citep{dwivedi2018log} and HMC~\citep{mangoubi2017rapid,bou2018coupling,mangoubi2018dimensionally,cheng2018sharp}. These results apply to sampling from well-behaved distributions whose potential function $U$ is \emph{smooth} (Lipschitz gradients) and (usually) strongly convex. For standard~\eqref{eq:discrete-langevin} with smooth and strongly convex potentials, the tightest upper bounds for the mixing time are $\widetilde{\mathcal{O}}(d/\epsilon^2)$.  They were obtained in \citet{dalalyan2017theoretical,durmus2016high} for convergence in total variation (with a \emph{warm start}; without a warm start the total variation result scales as $\widetilde{\mathcal{O}}(\frac{d^3}{\epsilon^2})$) and in 2-Wasserstein distance.  

When it comes to using~\eqref{eq:discrete-langevin} with nonsmooth potential functions, there are far fewer results. In particular, there are two main approaches: relying on the use of proximal maps~\citep{atchade2015moreau,durmus2018efficient,durmus2019analysis} and relying on averaging of the distributions over iterates of LMC~\citep[SSGLD]{durmus2019analysis}. Methods relying on the use of proximal maps require a composite structure of the potential (namely, that the potential is a sum of a smooth and a nonsmooth function) and that the proximal maps can be computed efficiently. 
Note that this is a very strong assumption. 
In fact, when the composite structure exists in convex optimization \emph{and} proximal maps are efficiently computable, it is possible to solve nonsmooth optimization problems with the  same iteration complexity as if the objective were smooth \citep[see, e.g.,][]{beck2009fast}. 
 Thus, while the methods from~\citet{durmus2018efficient,durmus2019analysis} have a lower iteration complexity than our approach
 , the use of proximal maps increases their per-iteration complexity (each iteration needs to solve a convex optimization problem). It is also unclear how the performance of the methods degrades when the proximal maps are computed only approximately. Finally, unlike our work,~\citet{atchade2015moreau,durmus2018efficient} and~\citet[SGLD]{durmus2019analysis} do not handle potentials that are purely nonsmooth, without a composite structure. 
 
 The only method that we are aware of and that is directly applicable to nonsmooth potentials is~\citep[SSGLD]{durmus2019analysis}. On a technical level,~\citet{durmus2019analysis} interprets LMC as a gradient flow in the space of measures and leverages techniques from convex optimization to analyze its convergence. The convergence guarantees are obtained for a weighted average of distributions of individual iterates of LMC, which, roughly speaking, maps the standard convergence analysis of the average iterate of projected gradient descent or stochastic gradient descent to the setting of sampling methods. While the iteration complexity for the average distribution~\citep{durmus2019analysis} is much lower than ours, their bounds for individual iterates of LMC are uninformative. By contrast, our results are for the \emph{last iterate} of perturbed LMC (P-LMC). Note that in the related setting of convex optimization, last-iterate convergence is generally more challenging to analyze and has been the subject of recent research~\citep{shamir2013stochastic,jain2019making}. 

It is also worth mentioning that there exist approaches such as the Mirrored Langevin Algorithm \citep{hsieh2018mirrored} that can be used to efficiently sample from  structured nonsmooth distributions such as the Dirichlet posterior. However, this algorithm's applicability to general nonsmooth densities is unclear.
\subsection{Outline}  
 Section~\ref{sec:prelims} provides the notation and background. 
 Section~\ref{sec:main} provides our main theorems, stated for deterministic and stochastic approximations of the potential (negative log-density) and composite structure of the potential. Section~\ref{sec:reg-potentials} extends the result of Section~\ref{sec:main} to non-composite potentials. We conclude in Section~\ref{sec:discussion}. 
\section{Preliminaries}\label{sec:prelims} 
The goal is to generate samples from a distribution $p^* \propto \exp(-U(\vx))$, where $\vx \in \mathbb{R}^d$. We equip 
$\RR^d$ with the standard Euclidean norm $\|\cdot\| = \|\cdot\|_2$ and use $\innp{\cdot, \cdot}$ to denote inner products. We assume the following for the potential (negative log-density) $U$:
\begin{enumerate}[({A}1)]
    \item \label{assumption:convex} $U$ is convex and subdifferentiable. Namely, for all $\vx \in \RR^d,$ there exists a subgradient of $U,$ $\nabla U(\vx) \in \partial U(\vx),$ such that $\forall \vy \in \RR^d:$ 
    $$ U(\vy) \geq U(\vx) + \innp{\nabla U(\vx), \vy - \vx}.$$
    \item \label{assumption:holder} There exist $L < \infty$ and $\alpha \in [0, 1]$ such that $\forall \vx,\vy \in \mathbb{R}^d$, we have 
    \begin{equation}\label{eq:def-holder-cont-grad}
    \lv \nabla U(\vx) - \nabla U(\vy) \rv_2 \le L \lv \vx -\vy \rv_2^\alpha,
    \end{equation}
    where $\nabla U(\vx)$ denotes an arbitrary subgradient of $U$ at $\vx$. 
    \item  \label{assumption:fourthmomentbound} The distribution $p^*$ has a finite \emph{fourth moment}:
    \begin{align*}
        \int_{\vx \in \mathbb{R}^d} \lv \vx - \vx^* \rv_2^4 \cdot p^*(\vx) \dd \vx = \mathcal{M}_4 < \infty,
    \end{align*}
    where $\vx^* \in \argmin_{\vx \in \mathbb{R}^d} U(\vx)$ is an arbitrary minimizer of $U$.
\end{enumerate}
Assumption~(\textcolor{blue}{A}\ref{assumption:holder}) is known as the $(L, \alpha)$-weak smoothness or H\"{o}lder continuity of the (sub)gradients of $U.$ When $\alpha =1,$ it corresponds to the standard \emph{smoothness} (Lipschitz continuity of the gradients), while at the other extreme, when $\alpha = 0,$ $U$ is (possibly) \emph{non-smooth} and \emph{Lipschitz-continuous}. 

\paragraph{Properties of weakly smooth functions.} 
A property that follows directly from~\eqref{eq:def-holder-cont-grad} 
\iffullpaper
is that: 
\begin{equation}\label{eq:weak-smoothness}
    \quad U(\vy) \leq U(\vx) + \innp{\nabla U(\vx), \vy - \vx} + \frac{L}{1+\alpha}\|\vy - \vx\|^{1+\alpha}, \quad \forall \vx, \vy \in \RR^d.
\end{equation}
\else
is that 
$\forall \vx, \vy \in \RR^d$: 
\begin{equation}\label{eq:weak-smoothness}
\begin{aligned}
     U(\vy) \leq\, & U(\vx) + \innp{\nabla U(\vx), \vy - \vx}\\
     &+ \frac{L}{1+\alpha}\|\vy - \vx\|^{1+\alpha}.
\end{aligned}
\end{equation}
\fi
One of the most useful properties of weakly smooth functions that has been exploited in optimization is that they can be approximated by smooth functions to an arbitrary accuracy, at the cost of increasing their smoothness parameter~\cite{nesterov2015universal, devolder2014first}. This was shown in~\cite[Lemma 1]{nesterov2015universal} and is summarized in the following lemma for the special case of the unconstrained Euclidean setting.
\begin{lemma}\label{lemma:smooth-approx-for-weak-sm}
Let $U:\mathbb{\RR}^d \rightarrow \RR$ be a convex function that satisfies~\eqref{eq:def-holder-cont-grad} for some $L < \infty$ and $\alpha \in [0, 1].$ Then, for any $\delta > 0$ and 
$
M = \big(\frac{1}{\delta}\big)^{\frac{1-\alpha}{1+\alpha}} L^{\frac{2}{1+\alpha}},
$
we have that, $\forall \vx, \vy \in \RR^d:$
\iffullpaper
\begin{equation}\label{eq:inexact-grad-model}
\begin{aligned}
U(\vy) \leq U(\vx) + \innp{\nabla U(\vx), \vy - \vx}+ \frac{M}{2}\|\vy - \vx\|^2 + \frac{\delta}{2}.
\end{aligned}
\end{equation}
\else
\begin{equation}\label{eq:inexact-grad-model}
\begin{aligned}
U(\vy) \leq &\, U(\vx) + \innp{\nabla U(\vx), \vy - \vx}\\
&+ \frac{M}{2}\|\vy - \vx\|^2 + \frac{\delta}{2}.
\end{aligned}
\end{equation}
\fi
\end{lemma}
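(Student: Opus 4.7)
The plan is to reduce the claim directly to the already-established weak-smoothness inequality \eqref{eq:weak-smoothness}: starting from
$$U(\vy) \leq U(\vx) + \innp{\nabla U(\vx), \vy - \vx} + \tfrac{L}{1+\alpha}\|\vy - \vx\|^{1+\alpha},$$
the remaining task is purely scalar, namely to dominate the H\"older term $\tfrac{L}{1+\alpha}\|\vy-\vx\|^{1+\alpha}$ by $\tfrac{M}{2}\|\vy - \vx\|^2 + \tfrac{\delta}{2}$ for the specified $M$. Since $1+\alpha \in [1,2]$, the map $t \mapsto t^{1+\alpha}$ is subquadratic, so such a bound is available via Young's inequality with a free parameter.

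Concretely, setting $t = \|\vy - \vx\| \geq 0$, I would apply the weighted AM--GM inequality $a^\beta b^{1-\beta} \leq \beta a + (1-\beta) b$ with $\beta = \tfrac{1+\alpha}{2}$ and a free parameter $\lambda > 0$, taking $a = \lambda t^2$ and $b = \lambda^{-(1+\alpha)/(1-\alpha)}$. This choice of $b$ is exactly what makes the $\lambda$ exponents on the left-hand side cancel, producing the one-parameter family of bounds
$$t^{1+\alpha} \leq \tfrac{1+\alpha}{2}\lambda t^2 + \tfrac{1-\alpha}{2}\lambda^{-(1+\alpha)/(1-\alpha)}.$$
(The same inequality can be read off by maximizing $t^{1+\alpha} - A t^2$ over $t \geq 0$ using the first-order condition.) Multiplying through by $\tfrac{L}{1+\alpha}$ and setting $\lambda = M/L$ matches the quadratic coefficient to $M/2$, and the remaining additive term is $\tfrac{L(1-\alpha)}{2(1+\alpha)}(L/M)^{(1+\alpha)/(1-\alpha)}$.

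Imposing that this additive term be at most $\delta/2$ gives a single conjugate equation tying $M$ to $\delta$; solving it yields the tight value $M = L^{2/(1+\alpha)} \delta^{-(1-\alpha)/(1+\alpha)} \cdot ((1-\alpha)/(1+\alpha))^{(1-\alpha)/(1+\alpha)}$. Since the trailing factor lies in $[0,1]$, the larger closed form $M = (1/\delta)^{(1-\alpha)/(1+\alpha)} L^{2/(1+\alpha)}$ stated in the lemma also suffices (larger $M$ only strengthens the right-hand side). The main obstacle is purely bookkeeping in this parameter selection --- choosing $\lambda$ so that the quadratic coefficient and the $\delta/2$ offset simultaneously match the claim --- with a small check at the boundary cases $\alpha = 1$ (where the statement degenerates into the standard descent lemma with any $\delta \geq 0$ admissible) and $\alpha = 0$ (where the exponent $(1-\alpha)/(1+\alpha) = 1$ remains well-defined and the bound reduces to a quadratic upper bound with $M = L^2/\delta$, consistent with $U$ being merely Lipschitz).
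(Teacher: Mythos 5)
Your proof is correct. The paper does not reproduce a proof of this lemma---it is cited directly as Lemma~1 of \citet{nesterov2015universal}---and your argument (reduce to the weak-smoothness bound~\eqref{eq:weak-smoothness}, then dominate the scalar H\"older term $\frac{L}{1+\alpha}t^{1+\alpha}$ by $\frac{M}{2}t^2 + \frac{\delta}{2}$ via weighted AM--GM, or equivalently by maximizing $\frac{L}{1+\alpha}t^{1+\alpha} - \frac{M}{2}t^2$) is precisely the argument used there. Your observation that the tight constant carries an extra factor $\big(\frac{1-\alpha}{1+\alpha}\big)^{\frac{1-\alpha}{1+\alpha}} \leq 1$, which the paper's cleaner choice of $M$ simply drops, is also accurate and matches the form of the original Nesterov statement.
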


Furthermore, it is not hard to show that Eq.~\eqref{eq:inexact-grad-model} implies 
\citep[see][Section 2.2]{devolder2014first}:
\begin{align}
    \label{eq:smoothnesscondition}
    \lv \nabla U(\vx) - \nabla U(\vy) \rv_2 \le M \lv \vx - \vy\rv_2 + 2\sqrt{\delta M}
\end{align}
where $M = \left(\frac{1}{\delta}\right)^{\frac{1-\alpha}{1+\alpha}}\cdot L^{2/(1+\alpha)}$, 
as in Lemma~\ref{lemma:smooth-approx-for-weak-sm}. 

%
%
\paragraph{Gaussian smoothing.} 
Given $\mu \geq 0$, define the Gaussian smoothing $U_{\mu}$ of $U$ as:
$$
U_\mu (\vy) := \mathbb{E}_{\vxi}[U(\vy + \mu \vxi)],
$$
where $\vxi \sim \mathcal{N}(\mathbf{0},I_{d\times d}).$
The reason for considering the Gaussian smoothing $U_\mu$ instead of $U$ is that it generally enjoys better smoothness properties. In particular, $U_\mu$ is smooth even if $U$ is not. Here we review some basic properties of $U_\mu,$ most of which can be found in~\cite[Section 2]{nesterov2017random} for non-smooth Lipschitz functions. We generalize some of these results to weakly smooth functions. While the results can be obtained for arbitrary normed spaces, here we state all the results for the space $(\mathbb{R}^d, \, \|\cdot\|_2),$ which is the only setting considered in this paper.


The following lemma is a simple extension of the results from~\cite[Section 2]{nesterov2017random} and it establishes certain regularity conditions for Gaussian smoothing that will be used in our analysis.
\begin{restatable}{lemma}{lemsmoothingfunerr}\label{lemma:closeness-of-smoothing}
Let $U:\RR^d \rightarrow \RR$ be a convex function that satisfies Eq.~\eqref{eq:def-holder-cont-grad} for some $L < \infty$ and $\alpha \in [0, 1].$ Then:
\iffullpaper
\begin{itemize}
\item[(i)]
$
\forall \vx \in \RR^d: \quad |U_\mu(\vx) - U(\vx)| = U_\mu(\vx) - U(\vx) \leq \frac{L \mu^{1+\alpha} d^{\frac{1+\alpha}{2}}}{1+\alpha}.
$
\item[(ii)] 
$
\forall \vx, \vy \in \RR^d: \quad \|\nabla U_\mu(\vy) - \nabla U_\mu(\vx)\|_2 \leq \frac{L d^{\frac{1-\alpha}{2}}}{\mu^{1-\alpha} (1+\alpha)^{1-\alpha}} \|\vy - \vx\|_2.
$
\end{itemize}
\else
\begin{itemize}
\item[(i)] For all $\vx \in \RR^d$:

$
 |U_\mu(\vx) - U(\vx)| = U_\mu(\vx) - U(\vx) \leq \frac{L \mu^{1+\alpha} d^{\frac{1+\alpha}{2}}}{1+\alpha}.
$
\item[(ii)] For all $\vx, \vy \in \RR^d:$

$
\|\nabla U_\mu(\vy) - \nabla U_\mu(\vx)\|_2 \leq \frac{L d^{\frac{1-\alpha}{2}}}{\mu^{1-\alpha} (1+\alpha)^{1-\alpha}} \|\vy - \vx\|_2.
$
\end{itemize}
\fi
\end{restatable}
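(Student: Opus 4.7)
For part~(i), the plan is to combine convexity and weak smoothness of $U$. Jensen's inequality applied to convex $U$ and the random point $\vx + \mu\vxi$ (whose mean is $\vx$) gives $U_\mu(\vx) \geq U(\vx)$, which justifies dropping the absolute value. For the upper bound I would instantiate~\eqref{eq:weak-smoothness} at $\vy = \vx + \mu\vxi$, take expectation over $\vxi \sim \mathcal{N}(\mathbf{0}, I)$, and use $\EE[\vxi] = \mathbf{0}$ to kill the linear term. What remains is $\tfrac{L\mu^{1+\alpha}}{1+\alpha}\EE[\|\vxi\|^{1+\alpha}]$, and a second Jensen step on the concave map $t \mapsto t^{(1+\alpha)/2}$ bounds $\EE[\|\vxi\|^{1+\alpha}] = \EE[(\|\vxi\|^2)^{(1+\alpha)/2}] \leq (\EE[\|\vxi\|^2])^{(1+\alpha)/2} = d^{(1+\alpha)/2}$, matching the claim exactly.

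For part~(ii), my plan is to uniformly bound $\sup_\vz \|\nabla^2 U_\mu(\vz)\|_{\mathrm{op}}$ and then obtain the Lipschitz bound on $\nabla U_\mu$ from the mean-value inequality $\|\nabla U_\mu(\vy) - \nabla U_\mu(\vx)\|_2 \leq \sup_\vz\|\nabla^2 U_\mu(\vz)\|_{\mathrm{op}}\cdot\|\vy-\vx\|_2$. Writing $U_\mu$ as a convolution with the Gaussian density and using integration by parts yields two equivalent Stein-type representations: $\nabla^2 U_\mu(\vz) = \tfrac{1}{\mu}\EE[(\nabla U(\vz+\mu\vxi) - \nabla U(\vz))\vxi^\top]$ (using $\EE[\vxi]=\mathbf{0}$) and $\nabla^2 U_\mu(\vz) = \tfrac{1}{\mu^2}\EE[(U(\vz+\mu\vxi) - U(\vz) - \innp{\nabla U(\vz), \mu\vxi})(\vxi\vxi^\top - I)]$ (using that $\vxi\vxi^\top - I$ and $\mu\vxi\innp{\nabla U(\vz),\vxi\vxi^\top - I}$ are centered under $\vxi \sim \mathcal{N}(\mathbf{0},I)$). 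Evaluating $\vu^\top\nabla^2 U_\mu(\vz)\vu$ on a unit vector $\vu$, inserting the Hölder-continuity bound $\|\nabla U(\vz+\mu\vxi) - \nabla U(\vz)\|_2 \leq L\mu^\alpha\|\vxi\|^\alpha$ (or the corresponding weak-smoothness bound $|U(\vz+\mu\vxi) - U(\vz) - \innp{\nabla U(\vz),\mu\vxi}| \leq \tfrac{L\mu^{1+\alpha}\|\vxi\|^{1+\alpha}}{1+\alpha}$), and closing with Cauchy--Schwarz together with Gaussian moment bounds of the form $\EE[\|\vxi\|^{p}] \leq d^{p/2}$ would produce the claimed operator-norm estimate.

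The hard part, I expect, is pinning down the precise dimension exponent $d^{(1-\alpha)/2}$ and the $(1+\alpha)^{1-\alpha}$ denominator rather than the cruder $d^{\alpha/2}$ or $d^{(1+\alpha)/2}$ one gets from a naive Cauchy--Schwarz on either Stein representation in isolation. Neither crude bound interpolates between Nesterov's $L\sqrt{d}/\mu$ at $\alpha=0$ and the trivial Lipschitz-gradient bound $L$ at $\alpha=1$ (the latter coming directly from Jensen applied to $\nabla U_\mu(\vy) - \nabla U_\mu(\vx) = \EE[\nabla U(\vy+\mu\vxi) - \nabla U(\vx+\mu\vxi)]$ together with the $L$-Lipschitzness of $\nabla U$). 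The cleanest route I see is to interpolate \emph{between} the ``gradient'' and ``function-value'' Stein forms via Hölder's inequality with conjugate exponents tuned to $\alpha$; the shape of the $(1+\alpha)^{1-\alpha}$ factor in the denominator is a strong hint that such an interpolation---effectively with weights $\alpha$ and $1-\alpha$ on the two representations---is the mechanism that produces the stated bound.
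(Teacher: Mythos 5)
For part~(i), your plan coincides with the paper's proof: Jensen on convex $U$ gives $U_\mu \geq U$; integrating Eq.~\eqref{eq:weak-smoothness} against the centered Gaussian kills the linear term and leaves $\tfrac{L\mu^{1+\alpha}}{1+\alpha}\EE\|\vxi\|^{1+\alpha}$; and concavity of $t\mapsto t^{(1+\alpha)/2}$ gives $\EE\|\vxi\|^{1+\alpha}\leq d^{(1+\alpha)/2}$.

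For part~(ii), your instinct that an $\{\alpha,1-\alpha\}$-weighted geometric mean is at work is right, but the objects you propose to interpolate are not the ones that make it work, and the route you sketch has a real gap. The paper never forms $\nabla^2 U_\mu$; it bounds the finite difference $A := \|\nabla U_\mu(\vy)-\nabla U_\mu(\vx)\|_2$ by two \emph{incompatible} H\"older moduli and then multiplies. Jensen applied to $\nabla U_\mu(\vy)-\nabla U_\mu(\vx) = \EE[\nabla U(\vy+\mu\vxi)-\nabla U(\vx+\mu\vxi)]$ with \eqref{eq:def-holder-cont-grad} gives the dimension-free \emph{sub}-Lipschitz bound $A\leq L\|\vy-\vx\|^\alpha$. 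The representation $\nabla U_\mu(\vx)=\tfrac{1}{\mu}\EE[U(\vx+\mu\vxi)\vxi]$, Jensen, and the lateral-increment estimate
\[
|U(\vx+\mu\vxi)-U(\vy+\mu\vxi)|\ \leq\ \tfrac{L}{1+\alpha}\|\vy-\vx\|_2^{1+\alpha}
\]
(obtained by averaging the two instances of \eqref{eq:weak-smoothness} centered at $\vx+\mu\vxi$ and at $\vy+\mu\vxi$ and then discarding the cross term $\tfrac12\innp{\nabla U(\vy+\mu\vxi)-\nabla U(\vx+\mu\vxi),\vx-\vy}\leq 0$ by subgradient monotonicity, i.e.\ by convexity) give the \emph{super}-Lipschitz bound $A\leq \tfrac{Ld^{1/2}}{\mu(1+\alpha)}\|\vy-\vx\|^{1+\alpha}$. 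Writing $A=A^\alpha A^{1-\alpha}$ and plugging one bound into each factor produces exponent $\alpha^2+(1+\alpha)(1-\alpha)=1$ on $\|\vy-\vx\|$ and prefactor $L^\alpha\bigl(Ld^{1/2}/(\mu(1+\alpha))\bigr)^{1-\alpha}$, exactly the claim.

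Two ingredients from this are missing from your plan. First, the lateral-increment lemma above is the load-bearing step: unlike your rep-2 Taylor-remainder bound $|U(\vz+\mu\vxi)-U(\vz)-\innp{\nabla U(\vz),\mu\vxi}|\leq\tfrac{L\mu^{1+\alpha}\|\vxi\|^{1+\alpha}}{1+\alpha}$, which carries a factor $\|\vxi\|^{1+\alpha}$, the lateral bound is uniform in $\vxi$ thanks to convexity, and that is precisely what prevents an extra $d^{\alpha}$ from appearing. Second, the sub-Lipschitz modulus $L\|\vy-\vx\|^\alpha$ has no Hessian-level analogue (it diverges relative to $\|\vy-\vx\|$ as $\|\vy-\vx\|\to 0$), so no bound on $\sup_\vz\|\nabla^2 U_\mu(\vz)\|_{\mathrm{op}}$ can encode it; the interpolation must be performed on the finite differences themselves. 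Quantitatively, closing your two Stein forms with Cauchy--Schwarz and $\EE\|\vxi\|^p\lesssim d^{p/2}$ yields $\|\nabla^2 U_\mu\|_{\mathrm{op}}\lesssim L\mu^{\alpha-1}d^{\alpha/2}$ from rep~1 and $\lesssim \tfrac{L\mu^{\alpha-1}}{1+\alpha}d^{(1+\alpha)/2}$ from rep~2; solving $\theta\cdot\tfrac{\alpha}{2}+(1-\theta)\cdot\tfrac{1+\alpha}{2}=\tfrac{1-\alpha}{2}$ forces $\theta=2\alpha\notin[0,1]$ once $\alpha>1/2$, so no geometric mean of those two Hessian bounds can reach $d^{(1-\alpha)/2}$ across the whole range $\alpha\in[0,1]$.
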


Additionally, we show that Gaussian smoothing preserves strong convexity, stated in the following (simple) lemma. Recall that a differentiable function $\psi$ is $\lambda$-strongly convex if, $\forall \vx, \, \vy \in \RR^d:$
$$
\psi(\vy) \geq \psi(\vx) + \innp{\nabla \psi(\vx), \vy - \vx} + \frac{\lambda}{2}\|\vy - \vx\|_2^2.
$$
\begin{restatable}{lemma}{lemgstrsc}
Let $\psi: \RR^d \rightarrow \RR$ be $\lambda$-strongly convex. Then $\psi_{\mu}$ is also $\lambda$-strongly convex.  
\end{restatable}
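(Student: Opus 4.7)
\bigskip

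\noindent\textbf{Proof proposal.} The plan is to transfer the strong convexity of $\psi$ inside the expectation defining $\psi_\mu$. The cleanest route is through the midpoint (three-point) characterization of $\lambda$-strong convexity, since it does not require $\psi$ to be differentiable and avoids any delicate swap of $\nabla$ and $\EE$.

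First I would write out the definition $\psi_\mu(\vy) = \EE_{\vxi}[\psi(\vy + \mu \vxi)]$ with $\vxi \sim \mathcal{N}(\mathbf{0}, I_{d\times d})$. Recall that a function $\psi$ is $\lambda$-strongly convex if and only if for every $\vx, \vy \in \RR^d$ and every $t \in [0,1]$:
\begin{equation*}
\psi(t\vx + (1-t)\vy) \leq t\, \psi(\vx) + (1-t)\, \psi(\vy) - \frac{\lambda\, t(1-t)}{2}\|\vx - \vy\|_2^2.
\end{equation*}
Applying this inequality to the shifted points $\vx + \mu\vxi$ and $\vy + \mu\vxi$ (note that $t\vx + (1-t)\vy + \mu\vxi = t(\vx + \mu\vxi) + (1-t)(\vy + \mu\vxi)$) gives, pointwise in $\vxi$:
\begin{equation*}
\psi\bigl(t\vx + (1-t)\vy + \mu\vxi\bigr) \leq t\, \psi(\vx + \mu\vxi) + (1-t)\, \psi(\vy + \mu\vxi) - \frac{\lambda\, t(1-t)}{2}\|\vx - \vy\|_2^2.
\end{equation*}

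Next I would take the expectation over $\vxi$ of both sides. Linearity of expectation yields
\begin{equation*}
\psi_\mu\bigl(t\vx + (1-t)\vy\bigr) \leq t\, \psi_\mu(\vx) + (1-t)\, \psi_\mu(\vy) - \frac{\lambda\, t(1-t)}{2}\|\vx - \vy\|_2^2,
\end{equation*}
which is precisely $\lambda$-strong convexity of $\psi_\mu$. If the statement of the lemma is instead interpreted through the gradient inequality stated just before it, one can equivalently apply strong convexity in its first-order form to $\psi(\cdot + \mu \vxi)$ and take expectations, using $\nabla \psi_\mu(\vx) = \EE_\vxi[\nabla \psi(\vx + \mu\vxi)]$ (justified by dominated convergence, since the Gaussian density makes differentiation under the integral valid for any $\psi$ of at most polynomial growth, which is automatic under the weak smoothness assumption).

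There is essentially no obstacle here; the only subtlety is ensuring the expectations are finite so that the inequality can be integrated term-by-term. This is handled by noting that $\psi$ being strongly convex implies $\psi$ grows at least quadratically, and hence at most at some polynomial rate locally, so $\EE_\vxi[|\psi(\vx + \mu\vxi)|] < \infty$ for every $\vx$ and $\mu \geq 0$, making the argument rigorous.
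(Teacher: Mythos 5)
Your proof is correct, and the underlying idea is the same as the paper's: apply $\lambda$-strong convexity of $\psi$ pointwise at the $\mu\vxi$-shifted arguments and then take expectation over $\vxi$, using linearity. The difference is which characterization you lead with. The paper works directly with the first-order (gradient) inequality $\psi(\vy) \geq \psi(\vx) + \innp{\nabla \psi(\vx), \vy-\vx} + \tfrac{\lambda}{2}\|\vy-\vx\|_2^2$ at the shifted points, implicitly using $\nabla \psi_\mu(\vx) = \EE_{\vxi}[\nabla \psi(\vx + \mu\vxi)]$ to identify the linear term after integration. You lead with the midpoint (secant) characterization, which avoids commuting $\nabla$ and $\EE$ entirely and so needs no differentiation-under-the-integral argument; you then note the gradient-based route as an alternative, which is precisely what the paper does. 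Both are sound. Your route is marginally more elementary; the paper's is marginally more direct given that it has already defined strong convexity via the gradient inequality immediately before the lemma and wants the conclusion stated in that same form. Note one small slip in your final justification: strong convexity gives a quadratic \emph{lower} bound, not an upper one, so it does not by itself imply at-most-polynomial local growth. The integrability $\EE_{\vxi}[|\psi(\vx+\mu\vxi)|] < \infty$ is instead guaranteed because in this paper $\psi$ is also assumed $m$-smooth, hence $\psi(\vx+\mu\vxi) \leq \psi(\vx) + \innp{\nabla\psi(\vx),\mu\vxi} + \tfrac{m}{2}\mu^2\|\vxi\|_2^2$, which is Gaussian-integrable; this does not affect the correctness of the main argument.
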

%
%
\paragraph{Composite potentials and regularization.} 
To prove convergence of the continuous-time process (which requires strong convexity), we work with potentials that have the following composite form:
\begin{align}
    \label{def:Ubar}
    \bar{U}(\vx) := U(\vx) + \psi(\vx),
\end{align}
where $\psi(\cdot)$ is $m$-smooth and  $\lambda$-strongly convex. For obtaining guarantees in terms of convergence to $\bar{p}^* \propto e^{-\bar{U}},$ we do not need Assumption~(\textcolor{blue}{A}\ref{assumption:fourthmomentbound}), which bounds the fourth moment of the target distribution---this is only needed in establishing the results for $p^* \propto e^{-U}.$  

If the goal is to sample from a distribution $p^*(\vx) \propto e^{-U(\vx)}$ (instead of $\bar{p}^*(\vx) \propto e^{-\bar{U}(\vx)}$), then we need to ensure that the distributions $p^*$ and $\bar{p}^*$ are sufficiently close to each other. 
  This can be achieved by choosing $\psi(\vx) = \frac{\lambda}{2} \lv \vx- \vx' \rv_2^2,$ where $\lambda$ and $\|\vx'-\vx^*\|_2$ are sufficiently small, for an arbitrary $\vx^* \in \argmin_{\vx \in \RR^d} U(\vx)$ ({see Corollary~\ref{cor:reg-potentials-result} for precise details}).  

Note that by the triangle inequality, we have that:
\iffullpaper
\begin{equation}\label{eq:change-in-barU-grads}
\begin{aligned}
\lv \nabla \bar{U}(\vx) - \nabla \bar{U}(\vy) \rv_2 &\leq \|\nabla U(\vx) - \nabla U(\vy)\|_2 + \|\nabla \psi(\vx) - \nabla \psi(\vy)\|_2\\
&\leq L\|\vx - \vy\|_2^{\alpha} + m\|\vx - \vy\|_2.
\end{aligned}
\end{equation}
\else
\begin{align}
&\lv \nabla \bar{U}(\vx) - \nabla \bar{U}(\vy) \rv_2 \notag\\ &\qquad \leq \|\nabla U(\vx) - \nabla U(\vy)\|_2 + \|\nabla \psi(\vx) - \nabla \psi(\vy)\|_2\notag\\
&\qquad\leq L\|\vx - \vy\|_2^{\alpha} + m\|\vx - \vy\|_2. \label{eq:change-in-barU-grads}
\end{align}
\fi
Thus, by~\eqref{eq:smoothnesscondition}, we have the following (deterministic) Lipschitz approximation of the gradients of $\bar{U}$: $\forall \vx,\vy\in \mathbb{R}^d$, any $\delta >0$, and $M = M(\delta)$ (as in Lemma~\ref{lemma:smooth-approx-for-weak-sm}):
\iffullpaper
\begin{align}\label{eq:Ubarsmoothness}
    \lv \nabla \bar{U}(\vx) - \nabla \bar{U}(\vy) \rv_2 \le M\lv \vx -\vy\rv_2 + m \lv \vx - \vy\rv_2 + 2\sqrt{\delta M}. 
\end{align}
\else
\begin{equation}\label{eq:Ubarsmoothness}
\begin{aligned}
    &\lv \nabla \bar{U}(\vx) - \nabla \bar{U}(\vy) \rv_2 \\
    &\; \qquad\le M\lv \vx -\vy\rv_2 + m \lv \vx - \vy\rv_2 + 2\sqrt{\delta M}.
\end{aligned}
\end{equation}
\fi
On the other hand, for Gaussian-smoothed composite potentials, using Lemma~\ref{lemma:closeness-of-smoothing}, we have:
\iffullpaper
\begin{equation}\label{U-bar-mu-smoothness}
    \|\nabla \bar{U}_{\mu}(\vx) - \nabla \bar{U}_{\mu}(\vy)\|_2 \leq \bigg(\frac{L d^{\frac{1-\alpha}{2}}}{\mu^{1-\alpha} (1+\alpha)^{1-\alpha}} +  m\bigg)\|\vx - \vy\|_2.
\end{equation}
\else
\begin{equation}\label{U-bar-mu-smoothness}
\begin{aligned}
    &\|\nabla \bar{U}_{\mu}(\vx) - \nabla \bar{U}_{\mu}(\vy)\|_2\\
    &\qquad \leq \bigg(\frac{L d^{\frac{1-\alpha}{2}}}{\mu^{1-\alpha} (1+\alpha)^{1-\alpha}} +  m\bigg)\|\vx - \vy\|_2.
\end{aligned}
\end{equation}
\fi
\paragraph{Distances between probability measures.} Given any two probability measures $P$ and $Q$ on $(\mathbb{R}^d,\mathcal{B}(\mathbb{R}^d))$, where $\mathcal{B}(\mathbb{R}^d)$ is the Borel $\sigma$-field of $\mathbb{R}^d$, the total variation distance between them is defined as
\begin{align*}
    \lv P - Q \rv_{\TV} := \sup_{A \in \mathcal{B}(\mathbb{R}^d)} \lvert P(A) - Q(A) \rvert.
\end{align*}
 The \emph{Kullback-Leibler} divergence between $P$ and $Q$ is defined as: 
\begin{align*}
\KL(P\lvert Q) := \mathbb{E}_{P}\left[\log\left( \frac{\dd P}{\dd Q }\right)\right],
\end{align*}
where $\dd P/\dd Q$ is the Radon-Nikodym derivative of $P$ with respect to $Q$. 

Define a \emph{transference plan} $\zeta$, a distribution on $(\mathbb{R}^d\times \mathbb{R}^d,\mathcal{B}(\mathbb{R}^d \times \mathbb{R}^d))$ such that $\zeta(A \times \mathbb{R}^d) = P(A)$ and $\zeta(\mathbb{R}^d \times A) = Q(A)$ for any $A \in \mathcal{B}(\mathbb{R}^d)$. Let $\Gamma(P,Q)$ denote the set of all such transference plans. Then the $2$-Wasserstein distance is defined as:
\iffullpaper
\begin{align*}
    W_2(P,Q) := \bigg(\inf_{\zeta \in \Gamma(P,Q)}\int_{\vx,\vy \in \mathbb{R}^d} \lv \vx - \vy\rv_2^2 \dd \zeta(\vx,\vy) \bigg)^{1/2}.
\end{align*}
\else
\begin{align*}
   & W_2(P,Q)\\
   &\qquad := \bigg(\inf_{\zeta \in \Gamma(P,Q)}\int_{\vx,\vy \in \mathbb{R}^d} \lv \vx - \vy\rv_2^2 \dd \zeta(\vx,\vy) \bigg)^{1/2}.
\end{align*}
\fi
\section{Sampling for composite potentials}\label{sec:main}

In this section, we consider the setting of composite potentials of the form $\bar{U}(\vx) = U(\vx) + \psi(\vx),$ where $U(\cdot)$ is $(L, \alpha)$-weakly smooth (possibly with $\alpha = 0,$ in which case $U$ is nonsmooth and Lipschitz) and $\psi(\cdot)$ is $m$-smooth and $\lambda$-strongly convex. We provide results for mixing times\footnote{\emph{Mixing time} is defined as the number of iterations needed to reach an $\varepsilon$ accuracy in either 2-Wasserstein or total variation distance.} of different variants of overdamped LMC in both 2-Wasserstein and total variation distance.

We first consider the deterministic smooth approximation of $U,$ which follows from Lemma~\ref{lemma:smooth-approx-for-weak-sm}. This approach does not require making any changes to the standard overdamped LMC. However, it leads to a polynomial dependence of the mixing time on $d$ and $1/\varepsilon$ only when $\alpha$ is bounded away from zero (namely, when $1/\alpha$ can be treated as a constant). 

We then consider another approach that relies on a Gaussian smoothing of $\bar{U}$ and that leads to a polynomial dependence of the mixing time on $d$ and $1/\varepsilon$ for all values of $\alpha.$ In particular, the approach leads to the mixing time for 2-Wasserstein distance that matches the best known mixing time of overdamped LMC when $U$ is smooth ($\alpha = 1$) -- $\widetilde{\mathcal{O}}(d/\varepsilon^2)$, and preserves polynomial-time dependence on $d$ and $1/\varepsilon$ even if $U$ is nonsmooth ($\alpha = 0$), in which case the mixing time scales as $\widetilde{\mathcal O}(d^{\frac{5}{2}}/\varepsilon^4).$ The analysis requires us to consider a minor modification to standard LMC in which we perturb by a Gaussian random variable the points at which $\nabla\bar{U}$ is queried. Note that it is unclear whether it is possible to obtain such bounds for~\eqref{eq:discrete-langevin} without this modification (see Appendix~\ref{app:shifted-lmc}). 

\subsection{First attempt: Deterministic approximation by a smooth function}\label{sec:main-deterministic}

In the optimization literature, deterministic smooth approximations of weakly smooth functions (as in Lemma~\ref{lemma:smooth-approx-for-weak-sm}) are generally useful for obtaining methods with optimal convergence rates~\citep{nesterov2015universal,devolder2014first}. A natural question is whether the same type of approximation is useful for bounding the mixing times of the Langevin Monte Carlo method invoked for potentials that are weakly smooth. 

We note that it is not obvious that such a deterministic approximation would be useful, 
as the deterministic error introduced by the smooth approximation causes an adversarial bias $2\sqrt{\delta M(\delta)}$ in the Lipschitz approximation of the gradients  (see Eq.~\eqref{eq:smoothnesscondition}). While this bias can be made arbitrarily small for values of $\alpha$ that are bounded away from zero, when $\alpha = 0,$ $M(\delta) = L^2/\delta,$ and the induced bias is constant for any value of $\delta.$

We show that it is possible to bound the mixing times of LMC when the potential is ``not too nonsmooth''. In particular, we show that the upper bound on the mixing time of LMC when applied to an $(L, \alpha)$-weakly smooth potential scales with {$\mathrm{poly}((\frac{1}{\varepsilon})^{1/\alpha})$} in both the 2-Wasserstein and total variation distance, which is polynomial in $1/\varepsilon$ for $\alpha$ bounded away from zero. Although we do not prove any lower bounds on the mixing time in this case, the obtained result aligns well with our observation that the deterministic bias cannot be controlled for the deterministic smooth approximation of a nonsmooth Lipschitz function, as explained above. 
%
%
Technical details are deferred to Appendix~\ref{app:det-approx}.



\subsection{Gaussian smoothing}\label{sec:main-gaussian}

The main idea is summarized as follows. Recall that LMC with respect to the potential $\bar{U}$ can be stated as:
\begin{equation}\tag{LMC}
    \vx_{k+1} = \vx_k - \eta \nabla \bar{U}(\vx_k) + \sqrt{2\eta}\vxi_{k},
\end{equation}
where $\vxi_k \sim \mathcal{N}(\mathbf{0}, I_{d\times d})$ are independent Gaussian random vectors. This method corresponds to the Euler-Mayurama discretization of the Langevin diffusion. 

Consider a modification of~\eqref{eq:discrete-langevin} in which we add another Gaussian term:
\begin{align} \label{eq:smoothed-langevin}
    \vx_{k+1} = \vx_k -\eta \nabla \bar{U}(\vx_k) + \sqrt{2\eta}\vxi_{k} + \mu \vomega_{k},
\end{align}
where $\vomega_{k} \sim \mathcal{N}(\mathbf{0}, I_{d\times d})$ and is independent of $\vxi_{k}$.
Observe that~\eqref{eq:smoothed-langevin} is simply another~\eqref{eq:discrete-langevin} with a slightly higher level of noise---$\sqrt{2\eta}\vxi_{k} + \mu \vomega_{k}$ instead of $\sqrt{2\eta}\vxi_{k}$. 
Let $\vy_k := \vx_k - \mu \vomega_{k-1}.$ Then:
\iffullpaper
\begin{equation}\tag{S-LMC}\label{eq:auxillary-sequence}
\begin{aligned}
\vy_{k+1} &= \vy_k + \mu \vomega_{k-1} - \eta \nabla \bar{U}(\vy_k + \mu \vomega_{k-1}) + \sqrt{2\eta}\vxi_{k} \\
 &= \vy_k  - \eta \left[\nabla \bar{U}(\vy_k + \mu \vomega_{k-1}) -\frac{\mu}{\eta}\vomega_{k-1} \right]+ \sqrt{2\eta}\vxi_{k}.  
\end{aligned}
\end{equation}
\else
\begin{align}
\vy_{k+1} 
 =&\; \vy_k  - \eta \Big[\nabla \bar{U}(\vy_k + \mu \vomega_{k-1}) -\frac{\mu}{\eta}\vomega_{k-1} \Big]\notag\\
&+ \sqrt{2\eta}\vxi_{k}.  \tag{S-LMC}\label{eq:auxillary-sequence}
\end{align}
\fi
Taking expectations on both sides with respect to $\vomega_{k-1}$:
$$
\mathbb{E}_{\vomega_{k-1}}[\vy_{k+1}] = \vy_k - \eta \nabla \bar{U}_\mu(\vy_k) + \sqrt{2\eta}\vxi_{k},
$$
where $\bar{U}_\mu$ is the Gaussian smoothing of $\bar{U},$ as defined in Section~\ref{sec:prelims}. Thus, we can view the sequence $\{\vy_k\}$ in Eq.~\eqref{eq:auxillary-sequence} as obtained by simply transforming the standard LMC chain to another LMC chain using stochastic estimates $\nabla \bar{U}(\vy_k + \mu \vomega_{k-1}) -\frac{\mu}{\eta}\vomega_{k-1}$ of the gradients. 
%
However, the variance of this gradient estimate is too high to handle nonsmooth functions, and, as before, our bound on the mixing time of this chain blows up as $\alpha \downarrow 0$ (see Appendix~\ref{app:shifted-lmc}).

Thus, instead of working with the algorithm defined in~\eqref{eq:auxillary-sequence}, we correct for the extra induced variance and consider the sequence of iterates defined by:
\iffullpaper
\begin{align}
    \vy_{k+1}  = \vy_k  - \eta \nabla \bar{U}(\vy_k + \mu \vomega_{k-1}) + \sqrt{2\eta}\vxi_{k} . \tag{P-LMC}\label{eq:modifed-LMC}
\end{align}
\else
\begin{equation}\tag{P-LMC}\label{eq:modifed-LMC}
    \begin{aligned}
    \vy_{k+1}  =\,&\, \vy_k  - \eta \nabla \bar{U}(\vy_k + \mu \vomega_{k-1})\\
    &+ \sqrt{2\eta}\vxi_{k}. 
\end{aligned}
\end{equation}
\fi
This sequence will have a sufficiently small bound on the variance to obtain the desired results.

\begin{restatable}{lemma}{variancebound}\label{lemma:variancebound}
For any $\vx \in \mathbb{R}^d$, and $\vz\sim \mathcal{N}(\mathbf{0},I_{d\times d})$, let $G(\vx, \vz) := \nabla \bar{U}(\vx+\mu \vz)$ denote a stochastic gradient of $\bar{U}_{\mu}$. Then $G(\vx, \vz)$ is an unbiased estimator of $\nabla \bar{U}_\mu$ whose (normalized) variance satisfies:
\iffullpaper
\begin{align*}
  \sigma^2 : = \frac{\mathbb{E}_{\vz}\left[\left\lv\nabla \bar{U}_{\mu}(\vx) - G(\vx, \vz)\right\rv_2^2\right]}{d}
  \le 4d^{\alpha-1}\mu^{2\alpha}L^2 + 4 \mu^2 m^2.
\end{align*}
\else
\begin{align*}
  \sigma^2 &: = \frac{\mathbb{E}_{\vz}\left[\left\lv\nabla \bar{U}_{\mu}(\vx) - G(\vx, \vz)\right\rv_2^2\right]}{d}\\
  &\,\le 4d^{\alpha-1}\mu^{2\alpha}L^2 + 4 \mu^2 m^2.
\end{align*}
\fi
\end{restatable}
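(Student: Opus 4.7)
The unbiasedness is essentially a differentiation-under-the-integral-sign computation. I would recall that $\bar{U}_\mu(\vx) = \mathbb{E}_{\vz}[\bar{U}(\vx+\mu\vz)] = (2\pi)^{-d/2}\int \bar{U}(\vx+\mu\vz)e^{-\|\vz\|^2/2}\,\dd\vz$. Since $\bar{U}$ is convex with subgradients satisfying Eq.~\eqref{eq:change-in-barU-grads}, its growth is at most polynomial and the Gaussian weight justifies interchanging gradient and integral, yielding $\nabla \bar{U}_\mu(\vx) = \mathbb{E}_{\vz}[\nabla \bar{U}(\vx+\mu\vz)]$. This shows that $G(\vx,\vz)$ is unbiased for $\nabla \bar{U}_\mu(\vx)$.

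For the variance bound, the key step is to use the elementary fact that for a random vector $Y$ and any deterministic point $c$, $\mathbb{E}[\|Y-\mathbb{E}Y\|_2^2] \le \mathbb{E}[\|Y-c\|_2^2]$. Applying this with $Y = \nabla \bar{U}(\vx+\mu\vz)$ and $c = \nabla \bar{U}(\vx)$ gives
\[
\mathbb{E}_{\vz}\bigl[\|\nabla \bar{U}_\mu(\vx) - G(\vx,\vz)\|_2^2\bigr] \le \mathbb{E}_{\vz}\bigl[\|\nabla \bar{U}(\vx+\mu\vz)-\nabla \bar{U}(\vx)\|_2^2\bigr].
\]
I would then split $\bar{U} = U+\psi$ and use $\|a+b\|_2^2 \le 2\|a\|_2^2+2\|b\|_2^2$ together with $(L,\alpha)$-weak smoothness of $U$ (Assumption~(A\ref{assumption:holder})) and $m$-smoothness of $\psi$ to bound the integrand by $2L^2\mu^{2\alpha}\|\vz\|_2^{2\alpha} + 2m^2\mu^2\|\vz\|_2^2$.

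Finally, I would compute the Gaussian moments. The term $\mathbb{E}[\|\vz\|_2^2] = d$ is standard. For the H\"older term, since $2\alpha \in [0,2]$, the map $t \mapsto t^{\alpha}$ is concave on $[0,\infty)$, so Jensen's inequality yields $\mathbb{E}[\|\vz\|_2^{2\alpha}] = \mathbb{E}[(\|\vz\|_2^2)^{\alpha}] \le (\mathbb{E}[\|\vz\|_2^2])^{\alpha} = d^{\alpha}$. Combining these, the unnormalized variance is at most $2L^2\mu^{2\alpha}d^{\alpha} + 2m^2\mu^2 d$, and dividing by $d$ gives a normalized variance bounded by $2L^2 d^{\alpha-1}\mu^{2\alpha} + 2m^2\mu^2$, which is within the stated constant (the paper's factor of $4$ simply leaves slack).

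\textbf{Where the work sits.} There is no real obstacle: the lemma is a clean application of the ``weak smoothness at the scale $\mu\|\vz\|$'' idea. The only mildly delicate point is the Jensen bound on $\mathbb{E}[\|\vz\|_2^{2\alpha}]$, which is what produces the favorable dimension scaling $d^{\alpha-1}$ (rather than $d^{\alpha}$ per coordinate) in the normalized variance. A generic bound via $\|\vz\|_2^{2\alpha} \le 1+\|\vz\|_2^2$ would give a $d$-independent factor that destroys the scaling, so it is worth pointing out that concavity of $t^\alpha$ is what gives the sharp dependence.
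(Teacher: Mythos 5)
Your proof is correct and yields a slightly sharper bound than the paper's. The key difference is the centering step: the paper upper-bounds the variance by introducing a second independent Gaussian $\vw \sim \mathcal{N}(\mathbf{0}, I_{d\times d})$ and applying Jensen's inequality to get
\[
\mathbb{E}_{\vz}\bigl[\lVert \nabla\bar{U}_\mu(\vx) - G(\vx,\vz)\rVert_2^2\bigr] \le \mathbb{E}_{\vz,\vw}\bigl[\lVert \nabla\bar{U}(\vx+\mu\vw) - \nabla\bar{U}(\vx+\mu\vz)\rVert_2^2\bigr],
\]
so the relevant noise is $\mu(\vw-\vz)$ with $\vw-\vz\sim\mathcal{N}(\mathbf{0},2I_{d\times d})$, whose second moment is $2d$; this produces the factor of $4$ in the stated bound. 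You instead center at the deterministic point $\nabla\bar{U}(\vx)$ using the elementary fact that the mean minimizes mean-squared distance, so the noise is just $\mu\vz$ with second moment $d$, saving a factor of $2$ (and of $2^\alpha$ in the H\"{o}lder term) to arrive at $2L^2 d^{\alpha-1}\mu^{2\alpha} + 2m^2\mu^2$. Both routes are equally elementary and both use the same ingredients afterward (the composite splitting of $\nabla\bar{U}$ via Eq.~\eqref{eq:change-in-barU-grads}, Young's inequality, and Jensen on the concave map $t\mapsto t^\alpha$ for the favorable $d^{\alpha-1}$ scaling that you correctly flag as the delicate point); since the constant factor is absorbed elsewhere in the analysis, the improvement is cosmetic, but your version is cleaner in that it avoids introducing the auxiliary random vector $\vw$.

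Your unbiasedness argument is also fine; the paper simply states unbiasedness directly from the representation $\nabla\bar{U}_\mu(\vx) = \mathbb{E}_{\vw}[\nabla\bar{U}(\vx+\mu\vw)]$ without dwelling on the interchange of gradient and integral, but your justification via polynomial growth against a Gaussian weight is the standard one.
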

\begin{rem}
The variance from Lemma~\ref{lemma:variancebound} can be lowered by using multiple independent samples  to estimate $\nabla \bar{U}_\mu$ (instead of a single sample as in~\eqref{eq:modifed-LMC}). However, unlike in the case of nonsmooth optimization~\citep{duchi2012randomized}, such a strategy will \emph{not} reduce the mixing times reported here. This is because the variance from Lemma~\ref{lemma:variancebound} is already low enough to not be a limiting factor in the mixing time bounds.
\end{rem}

Let the distribution of the $k^{th}$ iterate $\vy_k$ be denoted by $\bar{p}_k$, and let $\bar{p}_{\mu}^*\propto \exp(-\bar{U}_{\mu})$ be the distribution with $\bar{U}_{\mu}$ as the potential. Our overall strategy for proving our main result is as follows. First, we show that the Gaussian smoothing does not change the target distribution significantly with respect to the Wasserstein distance, by bounding $W_2(\bar{p}^*, \bar{p}_\mu^*)$ (Lemma~\ref{lemma:wassersteincontrol}). Using Lemma~\ref{lemma:variancebound}, we then invoke a result on mixing times of Langevin diffusion with stochastic gradients, which allows us to bound $W_2(\bar{p}_k, \bar{p}^*_\mu)$. Finally, using the triangle inequality and choosing a suitable step size $\eta$, smoothing radius $\mu,$ and number of steps $K$ so that $W_2(\bar{p}^*, \bar{p}_\mu^*) + W_2(\bar{p}_K, \bar{p}^*_\mu) \leq \varepsilon,$ we establish our final bound on the mixing time of~\eqref{eq:modifed-LMC} 
in Theorem~\ref{thm:g-smoothing-mixing-time-mLMC}. 
\begin{restatable}{lemma}{wassersteinapproxerror} \label{lemma:wassersteincontrol} Let $\bar{p}^*$ and $\bar{p}_{\mu}^*$ be the distributions corresponding to the potentials $\bar{U}$ and $\bar{U}_{\mu}$ respectively. Then:
\iffullpaper
\begin{align*}
 W_2(\bar{p}^*,\bar{p}^*_{\mu}) \le  \frac{8}{\lambda}\Big( \frac{3}{2}+ \frac{d}{2}\log\Big(\frac{2(M+m)}{\lambda}\Big)\Big)^{1/2}\Big(\beta_{\mu} + \sqrt{{\beta_{\mu}}/{2} }\Big),
\end{align*}
\else
\begin{align*}
 W_2(\bar{p}^*,\bar{p}^*_{\mu}) \le &  \frac{8}{\lambda}\Big( \frac{3}{2}+ \frac{d}{2}\log\Big(\frac{2(M+m)}{\lambda}\Big)\Big)^{1/2}\\
 &\cdot\Big(\beta_{\mu} + \sqrt{{\beta_{\mu}}/{2} }\Big),
\end{align*}
\fi
where 
$
\beta_{\mu} := \beta_{\mu}(d, L, m, \alpha) = \frac{L\mu^{1+\alpha}d^{\frac{1+\alpha}{2}}}{\sqrt{2}(1+\alpha)} + \frac{m\mu^2 d}{2}.
$
\end{restatable}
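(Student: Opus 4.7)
The plan is to reduce $W_2(\bar{p}^*, \bar{p}_\mu^*)$ to a KL divergence bound through a weighted Pinsker-type inequality (of Bolley--Villani), with the KL divergence itself controlled by a uniform pointwise bound on $\bar{U}_\mu - \bar{U}$.

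First, I would establish a pointwise bound on $\bar{U}_\mu(\vx) - \bar{U}(\vx) = (U_\mu(\vx) - U(\vx)) + (\psi_\mu(\vx) - \psi(\vx))$. Lemma~\ref{lemma:closeness-of-smoothing}(i) yields $0 \leq U_\mu(\vx) - U(\vx) \leq \frac{L\mu^{1+\alpha}d^{(1+\alpha)/2}}{1+\alpha}$. A second-order Taylor expansion combined with the $m$-smoothness of $\psi$ and $\mathbb{E}_\vxi \lv \mu\vxi \rv_2^2 = \mu^2 d$ yields $0 \leq \psi_\mu(\vx) - \psi(\vx) \leq \frac{m \mu^2 d}{2}$. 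Summing, the pointwise difference is bounded by a universal multiple of $\beta_\mu$. Since $\log(\bar{p}^*/\bar{p}_\mu^*)(\vx) = \bar{U}_\mu(\vx) - \bar{U}(\vx) - \log(Z/Z_\mu)$, bounding $\log \mathbb{E}_{\bar{p}^*}[e^{-(\bar{U}_\mu - \bar{U})}]$ using $e^{-x} \leq 1 - x + x^2/2$ on $x \geq 0$ together with $\log(1 + u) \leq u$ yields the \emph{quadratic} estimate $\KL(\bar{p}^*|\bar{p}_\mu^*) \leq \tfrac{1}{2}\mathbb{E}_{\bar{p}^*}[(\bar{U}_\mu - \bar{U})^2]$, which is of order $\beta_\mu^2$ rather than of order $\beta_\mu$. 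This tighter quadratic bound is essential, since the naive linear estimate would not give the scaling needed in the final step.

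Next, I would bound a square-exponential moment of $\bar{p}_\mu^*$. Since $\bar{U}_\mu$ is $\lambda$-strongly convex (by the preceding lemma on preservation of strong convexity under Gaussian smoothing, together with convexity of $U_\mu$) and $(M+m)$-smooth via~\eqref{U-bar-mu-smoothness}, the standard quadratic-envelope comparison with a Gaussian centered at the mode $\vx_\mu^*$ of $\bar{p}_\mu^*$ gives
\[
\log \int e^{\gamma \lv \vx - \vx_\mu^* \rv_2^2}\,d\bar{p}_\mu^*(\vx) \;\leq\; \frac{d}{2}\log\!\Big(\frac{M+m}{\lambda - 2\gamma}\Big)
\]
for any $\gamma \in (0, \lambda/2)$. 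Choosing $\gamma = \lambda/4$ produces exactly the $\frac{d}{2}\log(2(M+m)/\lambda)$ term appearing in the target inequality.

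Finally, I would apply the Bolley--Villani weighted Pinsker inequality, which bounds $W_2(\bar{p}^*, \bar{p}_\mu^*)$ by a prefactor involving the exponential moment above times a sum of terms proportional to $\sqrt{\KL(\bar{p}^*|\bar{p}_\mu^*)}$ and $\KL(\bar{p}^*|\bar{p}_\mu^*)^{1/4}$. Substituting the quadratic KL bound from the first step converts these into constant multiples of $\beta_\mu$ and $\sqrt{\beta_\mu}$ respectively, so that after consolidating numerical constants the additive structure $\beta_\mu + \sqrt{\beta_\mu/2}$ emerges, and the prefactor $\frac{8}{\lambda}(3/2 + \frac{d}{2}\log(2(M+m)/\lambda))^{1/2}$ is produced by the $\gamma = \lambda/4$ choice. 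The main obstacle I expect is the careful bookkeeping of constants: the quadratic KL estimate in step~1 must be tight enough to produce exactly $\beta_\mu^2$ rather than some larger multiple, and the choice of $\gamma$ in step~2 must be made so that the prefactors in Bolley--Villani consolidate cleanly into the stated form. Conceptually no step is subtle, but the interplay of numerical factors between the KL bound, the exponential moment, and the Bolley--Villani constant is delicate.
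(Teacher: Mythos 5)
Your proposal is correct and follows essentially the same route as the paper's proof: bound the pointwise gap $0\le\bar U_\mu-\bar U$ via the Gaussian-smoothing estimates, deduce $\KL(\bar p^*\mid\bar p_\mu^*)\lesssim\beta_\mu^2$, bound the exponential moment of $\bar p_\mu^*$ using $\lambda$-strong convexity and $(M{+}m)$-smoothness with $\gamma=\lambda/4$, and close with Bolley--Villani. The only cosmetic difference is that you re-derive the quadratic KL estimate $\KL(p\mid\tilde p)\le\tfrac12\int(U-\tilde U)^2\,p$ from $e^{-x}\le1-x+x^2/2$ and $\log(1+u)\le u$, whereas the paper simply cites it as \cite[Lemma~3]{dalalyan2017theoretical} (Lemma~\ref{lemma:dalalyan-bnded-dist} in the appendix); you also obtain the $\psi_\mu-\psi$ bound by a direct Taylor argument, while the paper applies Lemma~\ref{lemma:closeness-of-smoothing}(i) with $\alpha=1$ -- both give the same bound.
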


Our main result is stated in the following theorem.
%
%
%
%
\begin{theorem}\label{thm:g-smoothing-mixing-time-mLMC}
Let the initial iterate $\vy_0$ be drawn from a probability distribution $\bar{p}_0$. If the step size $\eta$ satisfies $\eta < 2/(M+m+\lambda)$, then:
\iffullpaper
\begin{align*}
    W_2(\bar{p}_K,\bar{p}^*) & \le \left(1-\lambda \eta\right)^{K/2}W_2(\bar{p}_0,\bar{p}^*_{\mu}) + \left(\frac{2(M+m)}{\lambda}\eta d\right)^{1/2}+ \sigma\sqrt{\frac{ (1+\eta)\eta d}{\lambda}} \\
    &  \qquad  \qquad \qquad \qquad \qquad  \qquad \qquad \qquad + \frac{8}{\lambda}\bigg( \frac{3}{2}+ \frac{d}{2}\log\bigg(\frac{2(M+m)}{\lambda}\bigg)\bigg)^{1/2} \Big(\beta_{\mu}+ \sqrt{\beta_{\mu}/2}\Big), 
\end{align*}
\else
\begin{align*}
    W_2(\bar{p}_K,\bar{p}^*) &\le \left(1-\lambda \eta\right)^{K/2} W_2(\bar{p}_0,\bar{p}^*_{\mu}) + W_2(\bar{p}^*,\bar{p}^*_{\mu}) \\
    &\; + \Big(\frac{2(M+m)}{\lambda}\eta d\Big)^{1/2} + \sigma\sqrt{\frac{ (1+\eta)\eta d}{\lambda}}, 
    %
\end{align*}
\fi
\iffullpaper
where 
$\sigma^2 \le 4d^{\alpha-1}\mu^{2\alpha}L^2 + 4\mu^2 m^2$, $M =\frac{L d^{\frac{1-\alpha}{2}}}{\mu^{1-\alpha} (1+\alpha)^{1-\alpha}},$ and $\beta_{\mu} = \frac{L\mu^{1+\alpha}d^{\frac{1+\alpha}{2}}}{\sqrt{2}(1+\alpha)} + \frac{m\mu^2 d}{2}$. 
\else
where $$\sigma^2 \le 4d^{\alpha-1}\mu^{2\alpha}L^2 + 4\mu^2 m^2,\; M =\frac{L d^{\frac{1-\alpha}{2}}}{\mu^{1-\alpha} (1+\alpha)^{1-\alpha}},$$
\iffullpaper
\begin{align*}
    \eta   \le  \frac{\varepsilon^2 \mu^{1-\alpha}\lambda}{1000(L+m) d^{\frac{3-\alpha}{2}}} \quad \text{ and }\quad \mu  =  \frac{\varepsilon^{\frac{2}{1+\alpha}} \min\{\lambda^{\frac{2}{1+\alpha}}, 1\}/300}{\sqrt{d}\big(\sqrt{m} + L^{\frac{1}{1+\alpha}}\big)\left[10 + d\log\left(\varepsilon^{-2}{(m+L)d}/{\lambda}\right)\right]^{\frac{1}{2}}}, 
\end{align*}
\else
\begin{align*}
    \eta   &\le  \frac{\varepsilon^2 \mu^{1-\alpha}\lambda}{1000(L+m) d^{\frac{3-\alpha}{2}}},\\ \mu  &=  \frac{\varepsilon^{\frac{2}{1+\alpha}} \min\{\lambda^{\frac{2}{1+\alpha}}, 1\}/300}{\sqrt{d}\big(\sqrt{m} + L^{\frac{1}{1+\alpha}}\big)\left[10 + d\log\left(\varepsilon^{-2}{(m+L)d}/{\lambda}\right)\right]^{\frac{1}{2}}}, 
\end{align*}
\fi
and $W_2(\bar{p}^*,\bar{p}^*_{\mu})$ is bounded as in Lemma~\ref{lemma:wassersteincontrol}.
\fi

Further, if, for $\varepsilon \in (0, d^{1/4}),$ we choose $$K \ge \frac{1}{\lambda \eta} \log\left(\frac{3W_2(\bar{p}_0,\bar{p}^*_{\mu})}{\varepsilon }\right),$$ 
\iffullpaper
where
\begin{align*}
    \eta  \le  \frac{\varepsilon^2 \mu^{1-\alpha}\lambda}{1000(L+m) d^{\frac{3-\alpha}{2}}} \quad \text{ and }\quad \mu =  \frac{\varepsilon^{\frac{2}{1+\alpha}} \min\{\lambda^{\frac{2}{1+\alpha}}, 1\}}{300\sqrt{d}\big(\sqrt{m} + L^{\frac{1}{1+\alpha}}\big)\sqrt{10 + d\log\left(\varepsilon^{-2}{(m+L)d}/{\lambda}\right)}}, 
\end{align*}
\fi
%
then $W_2(\bar{p}_K,\bar{p}^*) \le \varepsilon$.
\end{theorem}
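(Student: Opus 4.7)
The overall strategy is a standard triangle-inequality decomposition combined with a known contraction result for LMC with stochastic gradients applied to the Gaussian-smoothed potential $\bar{U}_\mu$. Specifically, I would start from
\[
W_2(\bar{p}_K,\bar{p}^*) \le W_2(\bar{p}_K,\bar{p}^*_\mu) + W_2(\bar{p}^*_\mu,\bar{p}^*),
\]
then bound the second term using Lemma~\ref{lemma:wassersteincontrol} directly. For the first term, the key observation (already set up in the excerpt) is that the iterates $\{\vy_k\}$ in~\eqref{eq:modifed-LMC} are exactly those produced by stochastic LMC on $\bar{U}_\mu$, using the unbiased gradient estimator $G(\vy_k,\vomega_{k-1}) = \nabla\bar{U}(\vy_k+\mu\vomega_{k-1})$ whose variance is controlled by Lemma~\ref{lemma:variancebound}.

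To analyze this first term, I would check the two structural hypotheses needed by the standard stochastic-LMC Wasserstein bound (as in Durmus, Majewski and Miasojedow, 2019). First, by Eq.~\eqref{U-bar-mu-smoothness}, $\bar{U}_\mu$ is $(M+m)$-smooth with $M = L d^{(1-\alpha)/2}/(\mu^{1-\alpha}(1+\alpha)^{1-\alpha})$. Second, $\bar{U}_\mu$ is $\lambda$-strongly convex: $U_\mu$ is convex since convolution with a Gaussian preserves convexity, and $\psi_\mu$ is $\lambda$-strongly convex by the strong-convexity-preservation lemma, so their sum $\bar{U}_\mu = U_\mu + \psi_\mu$ is $\lambda$-strongly convex. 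Under these two conditions, provided $\eta < 2/(M+m+\lambda)$, the standard one-step synchronous coupling argument—expand $\|\vy_{k+1}-\vy^*_{k+1}\|_2^2$ where $\vy^*_{k+1}$ is obtained from a sample of $\bar{p}_\mu^*$ via the same Langevin update, use strong convexity and smoothness to get a $(1-\lambda\eta)$ contraction on the deterministic drift, then bound the discretization bias by $\sqrt{2(M+m)\eta d/\lambda}$ and the stochastic-gradient bias by $\sigma\sqrt{(1+\eta)\eta d/\lambda}$—yields
\[
W_2(\bar{p}_K,\bar{p}^*_\mu) \le (1-\lambda\eta)^{K/2}W_2(\bar{p}_0,\bar{p}^*_\mu) + \Big(\tfrac{2(M+m)\eta d}{\lambda}\Big)^{1/2} + \sigma\sqrt{\tfrac{(1+\eta)\eta d}{\lambda}}.
\]
Plugging $\sigma^2$ from Lemma~\ref{lemma:variancebound} and combining with Lemma~\ref{lemma:wassersteincontrol} produces the first claim of the theorem.

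For the quantitative mixing-time bound, I would allocate error $\varepsilon/3$ to each of the three nontrivial error terms (the smoothing bias $W_2(\bar{p}^*,\bar{p}^*_\mu)$, the discretization term involving $M$, and the stochastic-gradient term involving $\sigma$), and then pick the number of iterations $K$ large enough that the contraction factor $(1-\lambda\eta)^{K/2}$ kills the initial distance to $\varepsilon/3$. The main technical obstacle is the delicate balancing of $\mu$ and $\eta$: shrinking $\mu$ reduces the smoothing bias $\beta_\mu$ and the variance $\sigma^2$, but inflates the smoothness constant $M \sim L d^{(1-\alpha)/2}\mu^{\alpha-1}$, which in turn forces $\eta$ to be even smaller to keep the discretization term small. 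Solving the two scaling constraints (smoothing bias $\lesssim \varepsilon$ forcing $\mu \sim \varepsilon^{2/(1+\alpha)}/(\sqrt{d}\,\mathrm{polylog})$, and discretization $\lesssim \varepsilon$ forcing $\eta \sim \varepsilon^2\mu^{1-\alpha}\lambda/((L+m)d^{(3-\alpha)/2})$) yields the stated choices. A final check confirms that the variance term $\sigma\sqrt{\eta d/\lambda}$ with these choices is also $\lesssim \varepsilon$ (using $\varepsilon < d^{1/4}$ in the $\alpha=0$ extreme), and the iteration count $K \ge (\lambda\eta)^{-1}\log(3W_2(\bar{p}_0,\bar{p}^*_\mu)/\varepsilon)$ absorbs the remaining contraction error, completing the proof.
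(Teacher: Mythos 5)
Your proposal matches the paper's proof essentially step for step: triangle inequality split, Lemma~\ref{lemma:wassersteincontrol} for the smoothing bias, the observation that \eqref{eq:modifed-LMC} is stochastic LMC on $\bar{U}_\mu$ with unbiased gradients whose variance is controlled by Lemma~\ref{lemma:variancebound}, the invocation of the Durmus--Majewski--Miasojedow Wasserstein contraction bound (Theorem~\ref{thm:dalalyan-karagulyan}), and the final $\varepsilon$-budgeting over the smoothing, discretization, variance, and initial-error terms. The only cosmetic difference is the exact split of the error budget (the paper allots $\varepsilon/6$ to the discretization term rather than $\varepsilon/3$), which does not alter the argument.
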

%
\begin{rem}
Treating $L, m, \lambda$ as constants and using the fact that $W_2(\bar{p}_0, \bar{p}^*_{\mu}) = {\cal O} (\mathrm{poly}(d/\varepsilon))$ \citep[see,][Lemma~13, by choosing the initial distribution $\bar{p}_0$ appropriately]{cheng2018underdamped},
we find that Theorem~\ref{thm:g-smoothing-mixing-time-mLMC} yields a bound of $K = \widetilde{\mathcal{O}}\left(d^{\frac{5-3\alpha}{2}}\big/\varepsilon^{\frac{4}{1+\alpha}}\right)$. When $\alpha = 1$ (the Lipschitz gradient case), we recover the known mixing time of $K = \widetilde{\mathcal{O}}(d/\varepsilon^2)$, while at the other extreme when $\alpha = 0$ (the nonsmooth Lipschitz potential case), we find that $K = \widetilde{\mathcal{O}}(d^{\frac{5}{2}}/\varepsilon^4)$.
\end{rem}

The choice of the smoothing radius $\mu$ is made such that it is large enough to ensure that the smoothed distribution $\bar{p}_{\mu}$ is sufficiently smooth, but not too large so as to ensure that the bias, $W_2(\bar{p}^*,\bar{p}_{\mu})$, is controlled.

\begin{proof}[Proof of Theorem \ref{thm:g-smoothing-mixing-time-mLMC}]
By the triangle inequality, 
\begin{align} \label{eq:wassersteintriangle-m-lmc}
    W_2(\bar{p}_K,\bar{p}^*) \le W_2(\bar{p}_K,\bar{p}^*_{\mu}) +  W_2(\bar{p}^*,\bar{p}^*_{\mu}).
\end{align}
To bound the first term, $W_2(\bar{p}_K,\bar{p}^*_{\mu})$, we invoke~\cite[Theorem 21]{durmus2019analysis} (see Theorem~\ref{thm:dalalyan-karagulyan} in Appendix~\ref{app:auxiliary}). Recall that $\bar{U}_{\mu}$ is continuously differentiable, $(M+m)$-smooth (with $M = \frac{L d^{\frac{1-\alpha}{2}}}{\mu^{1-\alpha} (1+\alpha)^{1-\alpha}} $), and $\lambda$-strongly convex. Additionally, the sequence of points  $\{\vy_k\}_{k=1}^{K}$ can be viewed as a sequence of iterates of overdamped LMC with respect to the potential specified by $\bar{U}_{\mu},$ where the iterates are updated using unbiased stochastic estimates of $\bar{U}_{\mu}$. Thus we have:
\iffullpaper
\begin{align}\label{eq:contractionofwasserstein-m-lmc}
    W_2(\bar{p}_K,\bar{p}^*_{\mu}) \le \left(1-\lambda \eta\right)^{K/2}W_2(\bar{p}_0,\bar{p}^*_{\mu}) + \left(\frac{2(M+m)}{\lambda}\eta d\right)^{1/2}+\sigma\sqrt{\frac{ (1+\eta)\eta d}{\lambda}},
\end{align}
\else
\begin{align}\label{eq:contractionofwasserstein-m-lmc}
    W_2(\bar{p}_K,\bar{p}^*_{\mu}) & \le \left(1-\lambda \eta\right)^{K/2}W_2(\bar{p}_0,\bar{p}^*_{\mu}) \\ &+ \sqrt{\frac{2(M+m)}{\lambda}\eta d}+\sigma\sqrt{\frac{ (1+\eta)\eta d}{\lambda}}, \nonumber
\end{align}
\fi
 and by Lemma~\ref{lemma:variancebound}, $\sigma^2 \le 4d^{\alpha-1}\mu^{2\alpha}L^2 + 4 \mu^2 m^2.$

The last piece we need is control over the distance between 
$\bar{p}^*$ and $\bar{p}_{\mu}^*$. This is established above in Lemma~\ref{lemma:wassersteincontrol}
\iffullpaper, which gives:
\begin{align} \label{eq:wassersteinperturbnation-m-lmc}
     W_2(\bar{p}^*,\bar{p}^*_{\mu}) \le  \frac{8}{\lambda}\left( \frac{3}{2}+ \frac{d}{2}\log\left(\frac{2(M+m)}{\lambda}\right)\right)^{1/2} \left(\beta_{\mu}+ \sqrt{\beta_{\mu}/2}\right),
\end{align}
where $\beta_{\mu}$ is as defined above. Combining Eqs.~\eqref{eq:wassersteintriangle-m-lmc}-\eqref{eq:wassersteinperturbnation-m-lmc}, we get a bound on $W_2(\bar{p}_K,\bar{p}^*)$ in terms of the relevant problem parameters. This proves the first part of the theorem.
\else
. Thus, combining Eqs.~\eqref{eq:wassersteintriangle-m-lmc} and~\eqref{eq:contractionofwasserstein-m-lmc} with Lemma~\ref{lemma:wassersteincontrol}, the first part of the theorem follows. 
\fi

It is straightforward to verify that our choice of $\mu$ ensures that $W_2(\bar{p}^*,\bar{p}_{\mu}^*)\le \varepsilon/3$. The choice of $\eta$ ensures that $(2(M+m)\eta d/\lambda)^{1/2} \le \varepsilon/6$ and the choice of $K$ ensures that the initial error contracts exponentially to $\varepsilon/3$ (see the proof of Theorem~\ref{thm:TVcompositecontract} in Appendix~\ref{sec:omitted-pfs-main} for a similar calculation). This yields the second claim.
\end{proof}

Further, we show that this result can be generalized to total variation distance. 
\iffullpaper
\else
\begin{figure*}[ht!]
\begin{equation}\label{eq:KLboundcomposite-1}
     \KL(\bar{p}_K\lvert \bar{p}_{\mu}^*)\le \bigg(\frac{\bar{M}\sqrt{\frac{2d}{\lambda} +2\lv \vx^* \rv_2^2 }}{2} + \frac{\bar{M}\sqrt{\frac{4d}{\lambda}+ 4\lv \vx^*\rv_2^2 + 2W_2^2(\bar{p}_K,\bar{p}_{\mu^*})}}{2} + \bar{M}\lv \vx^* \rv_2\bigg)W_2(\bar{p}_K,\bar{p}^*_{\mu}). 
 \end{equation}
 \vspace{-10pt}
\end{figure*}
\fi
\iffullpaper
\begin{restatable}{theorem}{tvmainresultcomposite}\label{thm:TVcompositecontract}
Let the initial iterate $\vy_0$ be drawn from a probability distribution $\bar{p}_0$. If we choose the step size such that $\eta < 2/(M+m+\lambda),$ then:
\begin{align*}
    \lv \bar{p}_K &- \bar{p}^* \rv_{\TV} \le \frac{L \mu^{1+\alpha}d^{(1+\alpha)/2}}{1+\alpha} +\frac{ \lambda\mu^2 d}{2}\\
   &+\sqrt{\frac{M+m}{4}\Big({\sqrt{{2d}/{\lambda}+2\lv\vx^*\rv_2^2}} + {\sqrt{{4d}/{\lambda}+4\lv\vx^*\rv_2^2+ 2W_2^2(\bar{p}_K,\bar{p}_{\mu^*})}} + {2\lv \vx^* \rv_2}\Big)W_2(\bar{p}_K,\bar{p}^*_{\mu})},
\end{align*}
where $ W_2(\bar{p}_K,\bar{p}^*_{\mu}) \le \left(1-\lambda \eta\right)^KW_2(\bar{p}_0,\bar{p}^*_{\mu}) + \frac{2(M+m)}{\lambda}(\eta d)^{1/2}+ \frac{\sigma^2 (\eta d)^{1/2}}{M+m+\lambda+\sigma\sqrt{\lambda}}$; $\sigma^2 \le 4d^{\alpha-1}\mu^{2\alpha}L^2 + 4\mu^2 m^2$, and $M =\frac{L d^{\frac{1-\alpha}{2}}}{\mu^{1-\alpha} (1+\alpha)^{1-\alpha}}$.

Further, if, for $\epsilon \in (0, 1],$ we choose 
$
\mu = \min\Big\{ \frac{\varepsilon^{\frac{1}{1+\alpha}}}{4 \max\{1, L^{\frac{1}{1+\alpha}}\}d^{1/2}},\; \sqrt{\frac{\varepsilon\lambda}{2 m^2 d}} \Big\}
   $ 
   and 
$$
 \bar{\varepsilon} = \frac{\varepsilon^2}{4 \max\{(M+m) (\sqrt{2d/\lambda + 2\|\vx^*\|_2^2} + 2\|\vx^*\|_2^2), 1\}},
 $$
then by choosing the step size $\eta$ and number of steps $K$ as
\begin{align*}
    \eta & \leq \frac{\bar{\varepsilon}^2 \lambda}{64 d (M+m)} \quad\text{ and }\quad K \geq \frac{\log(2W_2(\bar{p_0}, \bar{p}_{\mu}^*)/\bar{\varepsilon})}{\lambda \eta},
\end{align*}
we have $\lv \bar{p}_K - \bar{p}^* \rv_{\TV} \le \epsilon$.
\end{restatable}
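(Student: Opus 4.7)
The plan is to decompose the TV distance by the triangle inequality, handle the smoothing-bias piece via pointwise control of the potential, and handle the optimisation-error piece by Pinsker's inequality together with a KL-to-$W_2$ bound fed by the $W_2$ estimate from Theorem~\ref{thm:g-smoothing-mixing-time-mLMC}.

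First I would split
\[
\lv\bar{p}_K - \bar{p}^*\rv_{\TV} \;\leq\; \lv\bar{p}_K - \bar{p}^*_\mu\rv_{\TV} + \lv\bar{p}^*_\mu - \bar{p}^*\rv_{\TV}.
\]
For the bias piece $\lv\bar{p}^*_\mu - \bar{p}^*\rv_{\TV}$, Lemma~\ref{lemma:closeness-of-smoothing}(i) applied to $U$ gives $0 \leq U_\mu - U \leq L\mu^{1+\alpha}d^{(1+\alpha)/2}/(1+\alpha)$ pointwise, and the analogous Gaussian-smoothing bound for the smooth regulariser $\psi$ gives $0 \leq \psi_\mu - \psi \leq \lambda\mu^2 d/2$; a uniform pointwise bound $|\bar{U}_\mu - \bar{U}|\leq c$ transfers directly to a TV bound between the associated Gibbs distributions by comparing the normalised densities $e^{-\bar{U}_\mu}/Z_\mu$ and $e^{-\bar{U}}/Z$, producing the first two explicit summands of the theorem. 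For $\lv\bar{p}_K - \bar{p}^*_\mu\rv_{\TV}$ I would invoke Pinsker's inequality, $\lv\bar{p}_K-\bar{p}^*_\mu\rv_{\TV} \leq \sqrt{\KL(\bar{p}_K\|\bar{p}^*_\mu)/2}$, and then prove the KL-to-$W_2$ estimate displayed in the theorem; combining this with the bound on $W_2(\bar{p}_K,\bar{p}^*_\mu)$ from Theorem~\ref{thm:g-smoothing-mixing-time-mLMC} yields the first assertion.

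For the KL-to-$W_2$ step I would couple $\bar{p}_K$ and $\bar{p}^*_\mu$ optimally and use that $\bar{U}_\mu$ is convex and $\bar{M} = M+m$-smooth (Eq.~\eqref{U-bar-mu-smoothness}) to control $|\mathbb{E}_{\bar{p}_K}[\bar{U}_\mu] - \mathbb{E}_{\bar{p}^*_\mu}[\bar{U}_\mu]|$ by $\tfrac{1}{2}\bigl(\lv\nabla\bar{U}_\mu\rv_{L^2(\bar{p}_K)} + \lv\nabla\bar{U}_\mu\rv_{L^2(\bar{p}^*_\mu)}\bigr)\,W_2(\bar{p}_K,\bar{p}^*_\mu)$, via Cauchy--Schwarz applied to the convexity inequality at each endpoint of the coupling. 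The gradient norms are bounded by smoothness, $\lv\nabla\bar{U}_\mu(\vx)\rv \leq \bar{M}\lv\vx - \vx^*\rv + \lv\nabla\bar{U}_\mu(\vx^*)\rv$, with the reference-point term $\lv\nabla\bar{U}_\mu(\vx^*)\rv$ absorbed into $\bar{M}\lv\vx^*\rv$ using that $\vx^*$ minimises $U$ (so $0\in\partial U(\vx^*)$ by (A1)) and that the remaining part of $\bar{U}_\mu$ has $\bar{M}$-Lipschitz gradient with a convenient reference. The second moment under $\bar{p}^*_\mu$ is then controlled by the standard strong-convexity bound $\mathbb{E}_{\bar{p}^*_\mu}\lv\vx-\vx^*\rv^2 \leq 2d/\lambda + 2\lv\vx^*\rv^2$, and the moment under $\bar{p}_K$ is propagated through the optimal coupling as $\mathbb{E}_{\bar{p}_K}\lv\vx-\vx^*\rv^2 \leq 2\mathbb{E}_{\bar{p}^*_\mu}\lv\vx-\vx^*\rv^2 + 2W_2^2(\bar{p}_K,\bar{p}^*_\mu)$; these are precisely the two square-root factors in the displayed KL bound.

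For the quantitative mixing-time statement, one budgets $\varepsilon$ across the three TV summands: the $\min$ in the definition of $\mu$ makes each smoothing-bias term $\leq \varepsilon/4$, the definition of $\bar\varepsilon$ is chosen so that after Pinsker and the moment-dependent coefficient the Pinsker term is $\leq \varepsilon/2$ (hence $\bar\varepsilon$ scales like $\varepsilon^2$ divided by the moment prefactor), and finally Theorem~\ref{thm:g-smoothing-mixing-time-mLMC} applied at target accuracy $\bar\varepsilon$ gives the stated $\eta$ and $K$. The main obstacle, in my view, is the KL-to-$W_2$ inequality in the \emph{exact} form shown: it forces the gradient bound to be anchored at $\vx^*$ rather than at the (different) minimiser of $\bar{U}_\mu$ so that assumption (A3) and $0\in\partial U(\vx^*)$ can both be used, and it routes the second moment of $\bar{p}_K$ through the coupling with $\bar{p}^*_\mu$, which is what yields the slightly awkward $\sqrt{\,\cdot\,+2W_2^2}$ term; the remainder is a routine combination of Pinsker, triangle inequalities, and the moment and smoothness bounds already assembled in Section~\ref{sec:prelims}.
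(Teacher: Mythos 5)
Your high-level plan --- triangle inequality, pointwise bias control, Pinsker combined with a KL-to-$W_2$ estimate fed by the Wasserstein contraction, and the $\varepsilon$-budgeting in the second half --- matches the paper's route. The genuine gap is in how you propose to \emph{establish} the KL-to-$W_2$ inequality. Your argument optimally couples $\bar{p}_K$ and $\bar{p}^*_\mu$, applies convexity of $\bar{U}_\mu$ and Cauchy--Schwarz, and deduces a bound on $\bigl|\EE_{\bar{p}_K}[\bar{U}_\mu]-\EE_{\bar{p}^*_\mu}[\bar{U}_\mu]\bigr|$. But that difference of expectations is not the KL divergence. Writing $\bar{p}^*_\mu=e^{-\bar{U}_\mu}/Z_\mu$ and letting $H$ denote differential entropy,
\[
\KL\bigl(\bar{p}_K\,\big|\,\bar{p}^*_\mu\bigr)=\bigl(H(\bar{p}^*_\mu)-H(\bar{p}_K)\bigr)+\bigl(\EE_{\bar{p}_K}[\bar{U}_\mu]-\EE_{\bar{p}^*_\mu}[\bar{U}_\mu]\bigr),
\]
and your coupling argument controls only the second bracket; the entropy gap in the first bracket is of indefinite sign and is not controlled by a Lipschitz bound on $\nabla\log\bar{p}^*_\mu$ via a one-line Cauchy--Schwarz step. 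The paper does not rederive this: it directly invokes Proposition~\ref{prop:wassersteinstability} (Polyanskiy--Wu), which asserts exactly the displayed inequality under $\|\nabla\log Q(\vx)\|_2\le c_1\|\vx\|_2+c_2$; the proof there is a genuinely different transport argument that accounts for the entropy term, and is the key external ingredient you would be missing.

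A secondary, smaller issue is the anchoring of the gradient bound. You anchor at $\vx^*$, a minimizer of $U$, and try to absorb $\|\nabla\bar{U}_\mu(\vx^*)\|_2$ into $\bar{M}\|\vx^*\|_2$ on the grounds that $0\in\partial U(\vx^*)$. But $\nabla U_\mu(\vx^*)=\EE_{\vomega}[\nabla U(\vx^*+\mu\vomega)]$ is generically nonzero even if a zero subgradient of $U$ exists at $\vx^*$ (weak smoothness only gives $\|\nabla U_\mu(\vx^*)\|_2\le L\mu^\alpha d^{\alpha/2}$), so the absorption does not go through as stated. The paper instead anchors at $\bar{\vx}^*_\mu$, the minimizer of the smoothed potential $\bar{U}_\mu$, where the gradient is exactly zero, yielding $\|\nabla\bar{U}_\mu(\vx)\|_2\le \bar{M}\|\vx\|_2+\bar{M}\|\bar{\vx}^*_\mu\|_2$ cleanly; this is what feeds the constants $c_1,c_2$ into Proposition~\ref{prop:wassersteinstability}. (Your treatment of the bias piece --- direct density comparison in place of the paper's chain through Lemma~\ref{lemma:dalalyan-bnded-dist} and Pinsker --- is a legitimate variant and gives a bound of the same order.)
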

\else
\begin{restatable}{theorem}{tvmainresultcomposite}\label{thm:TVcompositecontract}
Let the initial iterate $\vy_0$ be drawn from a probability distribution $\bar{p}_0$. If we choose the step size such that $\eta < 2/(M+m+\lambda),$ then:
\begin{align*}
    \lv \bar{p}_K - \bar{p}^* \rv_{\TV} \le &\frac{L \mu^{1+\alpha}d^{(1+\alpha)/2}}{1+\alpha} +\frac{ \lambda\mu^2 d}{2}\\
    &+\sqrt{\KL(\bar{p}_K,\bar{p}^*_{\mu})},
\end{align*}
where $\KL(\bar{p}_K,\bar{p}^*_{\mu})$ is bounded by $W_2(\bar{p}_K,\bar{p}^*_{\mu})$ in Eq.~\eqref{eq:KLboundcomposite-1}, and $W_2(\bar{p}_K,\bar{p}^*_{\mu})$ is bounded as in Eq.~\eqref{eq:contractionofwasserstein-m-lmc}.

Further, if, for $\epsilon \in (0, 1],$ we choose 
\begin{align*}
\mu = \min\bigg\{ \frac{\varepsilon^{\frac{1}{1+\alpha}}}{4 \max\{1, L^{\frac{1}{1+\alpha}}\}d^{1/2}},\; \sqrt{\frac{\varepsilon\lambda}{2 m^2 d}} \bigg\},\\
 \bar{\varepsilon} = \frac{\varepsilon^2}{4 \max\{(M+m) (\sqrt{2d/\lambda + 2\|\vx^*\|_2^2} + 2\|\vx^*\|_2^2), 1\}},
\end{align*}
then  choosing the step size $\eta$ and number of steps $K$ as
\begin{align*}
    \eta & \leq \frac{\bar{\varepsilon}^2 \lambda}{64 d (M+m)} \quad\text{ and }\quad K \geq \frac{\log(2W_2(\bar{p_0}, \bar{p}_{\mu}^*)/\bar{\varepsilon})}{\lambda \eta},
\end{align*}
we have $\lv \bar{p}_K - \bar{p}^* \rv_{\TV} \le \epsilon$.
\end{restatable}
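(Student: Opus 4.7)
The plan is to convert the total-variation bound into a combination of a Wasserstein error (delivered by Theorem~\ref{thm:g-smoothing-mixing-time-mLMC}) and a smoothing bias (delivered by Lemma~\ref{lemma:closeness-of-smoothing}), stitched together by Pinsker's inequality and a KL-to-$W_2$ conversion. First, write the triangle inequality $\|\bar{p}_K - \bar{p}^*\|_{\TV} \le \|\bar{p}_K - \bar{p}_\mu^*\|_{\TV} + \|\bar{p}_\mu^* - \bar{p}^*\|_{\TV}$. For the smoothing-bias term, Lemma~\ref{lemma:closeness-of-smoothing}(i) applied separately to $U$ and to $\psi$ yields the pointwise estimate $0 \le \bar{U}_\mu - \bar{U} \le \gamma_\mu := \frac{L\mu^{1+\alpha}d^{(1+\alpha)/2}}{1+\alpha} + \frac{\lambda\mu^2 d}{2}$. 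Since $d\bar{p}^*/d\bar{p}_\mu^* = (Z_\mu/Z)\,e^{\bar{U}_\mu - \bar{U}}$ with both factors lying in $[e^{-\gamma_\mu}, e^{\gamma_\mu}]$, a short density-ratio calculation gives $\|\bar{p}^* - \bar{p}_\mu^*\|_{\TV}$ of order $\gamma_\mu$, accounting for the first two summands in the displayed bound.

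For the remaining piece, apply Pinsker's inequality to obtain $\|\bar{p}_K - \bar{p}_\mu^*\|_{\TV} \le \sqrt{\KL(\bar{p}_K\lvert \bar{p}_\mu^*)/2}$, reducing the task to bounding a KL divergence. The central step is to establish an inequality of the form $\KL(\bar{p}_K\lvert \bar{p}_\mu^*) \le C\cdot W_2(\bar{p}_K, \bar{p}_\mu^*)$, where $C$ is exactly the coefficient appearing inside the square root in the theorem statement. For this I would take an optimal $W_2$-coupling $(X, Y)$ with $X\sim\bar{p}_K$, $Y\sim\bar{p}_\mu^*$, and exploit the $\bar{M}$-smoothness of $\bar{U}_\mu$ with $\bar{M} = M+m$ from~\eqref{U-bar-mu-smoothness}. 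Smoothness together with $\nabla \bar{U}_\mu(\vx_\mu^*) = 0$ at the minimizer yields $\|\nabla \bar{U}_\mu(z)\|_2 \le \bar{M}\|z - \vx_\mu^*\|_2$; combining this with the entropy identity $\KL(p\lvert q) = H(q) - H(p) + \mathbb{E}_p[\bar{U}_\mu] - \mathbb{E}_q[\bar{U}_\mu]$ and a Cauchy--Schwarz application under the coupling produces a bound that is linear in $W_2$. The two square-root terms in $C$ then arise from the moment bounds $\mathbb{E}_{\bar{p}_\mu^*}\|Y-\vx^*\|_2^2 \le d/\lambda + \|\vx^*\|_2^2$ (using $\lambda$-strong log-concavity of $\bar{p}_\mu^*$) and $\mathbb{E}_{\bar{p}_K}\|X-\vx^*\|_2^2 \le 2d/\lambda + 2\|\vx^*\|_2^2 + 2W_2^2$ (by the coupling plus triangle inequality), which also explains why $W_2^2$ reappears inside the second square root.

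The Wasserstein error $W_2(\bar{p}_K, \bar{p}_\mu^*)$ is then supplied by Theorem~\ref{thm:g-smoothing-mixing-time-mLMC}, whose proof views $\{\vy_k\}$ as stochastic-gradient LMC on $\bar{U}_\mu$ with variance controlled by Lemma~\ref{lemma:variancebound}. Combining these estimates yields the first displayed inequality of the theorem. For the quantitative mixing-time claim, first pick $\mu$ so that $\gamma_\mu \le \varepsilon/2$, which gives exactly the stated choice of $\mu$. Next define an intermediate Wasserstein target $\bar\varepsilon$ so that $\sqrt{C\bar\varepsilon/2} \le \varepsilon/2$; unwinding the benign implicit dependence of $C$ on $W_2^2$ (negligible once $\bar\varepsilon \le 1$) yields $\bar\varepsilon$ as in the statement. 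Finally choose $\eta = O(\bar\varepsilon^2\lambda/(d\bar{M}))$ and $K \ge (\lambda\eta)^{-1}\log(2 W_2(\bar{p}_0,\bar{p}_\mu^*)/\bar\varepsilon)$ so that the discretization error and the contracting initial-error term from Theorem~\ref{thm:g-smoothing-mixing-time-mLMC} each fall below $\bar\varepsilon/2$, exactly paralleling the parameter-balancing calculation in the proof of that theorem.

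The main obstacle is the KL-to-$W_2$ inequality itself: the standard relationships between these two quantities go the wrong way (Talagrand's $T_2$ bounds $W_2$ by $\sqrt{\KL}$, not the reverse), and a log-Sobolev-based route would introduce a Fisher-information term that we have no handle on here. The substitute relies directly on the smoothness of $\bar{U}_\mu$ together with second-moment bounds, and the delicate feature is that the second moment of $\bar{p}_K$ is only accessible through the coupling with $\bar{p}_\mu^*$, forcing $W_2^2$ back into $C$ and creating the implicit relationship that must be unwound when choosing the final step size and smoothing radius.
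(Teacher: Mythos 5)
Your proposal tracks the paper's proof outline closely: triangle inequality on $\mathrm{TV}$, bound $\|\bar{p}^* - \bar{p}_\mu^*\|_{\TV}$ via the smoothing error of Lemma~\ref{lemma:closeness-of-smoothing}, bound $\|\bar{p}_K - \bar{p}_\mu^*\|_{\TV}$ via Pinsker and a KL-to-$W_2$ inequality, and then invoke the Wasserstein contraction with the stochastic-gradient-variance bound. The $\mu$, $\eta$, $K$ balancing at the end is also the same. Your density-ratio route to the smoothing-bias term (rather than the paper's use of Lemma~\ref{lemma:dalalyan-bnded-dist} followed by Pinsker) is a harmless variant — it yields $(e^{\gamma_\mu}-1)/2 \le \gamma_\mu$ once $\gamma_\mu$ is small, which the parameter choice ensures.

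However, there is a genuine gap in your proposed derivation of the central inequality $\KL(\bar{p}_K\mid\bar{p}_\mu^*) \lesssim C\cdot W_2(\bar{p}_K,\bar{p}_\mu^*)$. You write $\KL(p\mid q) = \big(H(q) - H(p)\big) + \big(\mathbb{E}_p[\bar{U}_\mu] - \mathbb{E}_q[\bar{U}_\mu]\big)$ and say a Cauchy--Schwarz application under the coupling closes the argument. Cauchy--Schwarz under the optimal coupling does control the potential-energy difference: writing $\mathbb{E}[\bar{U}_\mu(X)-\bar{U}_\mu(Y)]$ as an integral of $\nabla\bar{U}_\mu$ along the segment, using $\|\nabla\bar{U}_\mu(z)\| \le \bar{M}\|z\| + \bar{M}\|\vx^*\|$, and Cauchy--Schwarz gives a term that is linear in $W_2$ with exactly the moment coefficients you describe. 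But it does not touch the differential-entropy difference $H(q)-H(p)$, which is a genuinely separate, coupling-insensitive quantity: two measures can be $W_2$-close while having wildly different entropies, so nothing about the transport plan by itself controls it. Controlling this entropy gap under the linear-growth-of-score assumption on $q$ is precisely the technical content of Proposition 1 of \citet{polyanskiy2016wasserstein}, which the paper cites rather than re-derives (see Proposition~\ref{prop:wassersteinstability} in Appendix~\ref{app:auxiliary}). You correctly observe that the standard implications (Talagrand, LSI/HWI) run the wrong way and that HWI would drag in a Fisher-information term; the fix is not an elementary Cauchy--Schwarz but the Polyanskiy--Wu result itself, or its proof. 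Replacing your sketch with an explicit citation to that proposition (as the paper does) would close the gap, and the remaining parameter-balancing, moment bounds on $\mathbb{E}_{\bar{p}_\mu^*}\|\cdot\|^2$ and $\mathbb{E}_{\bar{p}_K}\|\cdot\|^2$, and the way $W_2^2$ re-enters the constant are all handled correctly and match the paper.
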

\fi

\begin{rem}\label{remark:TV-composite}
Treating $L, \mu, \lambda, \|\vx^*\|$ as constants and using the fact that $W_2(\bar{p}_0,\bar{p}^*_{\mu}) = {\cal O}(\mathrm{poly}(\frac{d}{\varepsilon}))$ \citep[by][Lemma 13, along with an appropriate choice for the initial distribution]{cheng2018underdamped}, Theorem~\ref{thm:TVcompositecontract} gives a bound on the mixing time $K = \widetilde{\cal O}(d^{5 - 3\alpha}/\varepsilon^{\frac{7+\alpha}{1+\alpha}}).$  
When $\alpha = 1$ (Lipschitz gradients), we have $K = \widetilde{\mathcal{O}}(d^2/\epsilon^4)$, while when $\alpha = 0$ (nonsmooth Lipschitz potential) we have $K = \widetilde{\mathcal{O}}(d^5 /\epsilon^7)$. 
While the bound for the smooth case (Lipschitz gradients, $\alpha = 1$) is looser than the best known bound for LMC with a warm start~\citep{dalalyan2017theoretical}, 
we conjecture that it 
is improvable. The main loss is incurred when relating $W_2$ to KL distance, using an inequality from~\citet{polyanskiy2016wasserstein} (see Appendix~\ref{app:auxiliary}). If tighter inequalities were obtained, 
either relating $W_2$ and KL, or 
$W_2$ and TV, this result would immediately improve as a consequence. The results for LMC with non-Lipschitz gradients ($\alpha \in [0, 1)$) are novel. Finally, as a byproduct of our approach, we obtain the first bound for stochastic gradient LMC in TV distance (see Remark~\ref{remark:stochastic-TV} in Appendix~\ref{sec:omitted-pfs-main}). 
\end{rem}

\section{Sampling for regularized potentials}\label{sec:reg-potentials}
Consider now the case in which we are interested in sampling from a distribution $p^* \propto \exp(-U)$. As mentioned in Section~\ref{sec:prelims}, we can use the same analysis as in the previous section, by running \eqref{eq:modifed-LMC} with a regularized potential $\bar{U} = U + \lambda \lv \vx - \vx'\rv_2^2/2$, where $\vx' \in \mathbb{R}^d$. %
To obtain the desired result, the only missing piece is bounding the distance between $\bar{p}^* \propto \exp(-\bar{U})$ and $p^*$, leading to  
the following corollary of Theorem~\ref{thm:TVcompositecontract}.  

\begin{restatable}{corollary}{tvregpotentials}\label{cor:reg-potentials-result}
Let the initial iterate $\vy_0$ satisfy $\vy_0 \sim \bar{p}_0,$ for some distribution $\bar{p}_0$ and let $\bar{p}_K$ denote the distribution of $\vy_K$. If we choose the step-size $\eta$ such that $\eta < 2/(M+2\lambda)$, then:
\iffullpaper
\begin{align*}
    \lv \bar{p}_K -& p^* \rv_{\TV} \le \|\bar{p}_K - \bar{p}^*\|_{\TV} + \frac{\lambda \sqrt{{\cal M}_4}}{2} + \frac{\lambda\lv \vx' - \vx^*\rv_2^2}{2}, 
    %
\end{align*}
\else
\begin{align*}
    \lv \bar{p}_K - p^* \rv_{\TV} \le \;&\|\bar{p}_K - \bar{p}^*\|_{\TV} + \frac{\lambda \sqrt{{\cal M}_4}}{2}\\
    &+ \frac{\lambda\lv \vx' - \vx^*\rv_2^2}{2}, 
\end{align*}
\fi
where $\|\bar{p}_K - \bar{p}^*\|_{\TV}$ is bounded as in Theorem~\ref{thm:TVcompositecontract} and ${\cal M}_4$ is the fourth moment of $p^*.$ 

Further, if, for $\varepsilon' \in (0, 1],$ we choose $\lambda = \frac{4 \epsilon'}{\sqrt{{\cal M}_4}+ \lv \vx' - \vx^*\rv_2^2}$ and all other parameters as in Theorem~\ref{thm:TVcompositecontract} for $\varepsilon = \varepsilon'/2,$ then, 
we have $\lv \bar{p}_K - p^* \rv_{\TV} \le \epsilon'$.
\end{restatable}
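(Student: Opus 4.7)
The plan is to split $\|\bar{p}_K - p^*\|_{\TV}$ into an algorithmic error and a regularization bias via the triangle inequality $\|\bar{p}_K - p^*\|_{\TV} \le \|\bar{p}_K - \bar{p}^*\|_{\TV} + \|\bar{p}^* - p^*\|_{\TV}$, and to bound the first term by invoking Theorem~\ref{thm:TVcompositecontract} directly. Because $\psi(\vx)=\tfrac{\lambda}{2}\|\vx-\vx'\|_2^2$ is simultaneously $\lambda$-strongly convex and $\lambda$-smooth, we are in the composite setting of that theorem with $m=\lambda$, which explains the step-size condition $\eta<2/(M+2\lambda)$ appearing in the statement of the corollary.

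The substantive work lies in controlling the regularization bias $\|\bar{p}^* - p^*\|_{\TV}$. Set $f(\vx):=\tfrac{\lambda}{2}\|\vx-\vx'\|_2^2\ge 0$ and introduce the likelihood ratio $g(\vx):=\bar{p}^*(\vx)/p^*(\vx)=e^{-f(\vx)}/\EE_{p^*}[e^{-f}]$, which satisfies $\EE_{p^*}[g]=1$. Then $\|\bar{p}^*-p^*\|_{\TV}=\tfrac{1}{2}\EE_{p^*}[|1-g|]$, and the identity $\EE_{p^*}[1-g]=0$ implies $\tfrac{1}{2}\EE_{p^*}[|1-g|]=\EE_{p^*}[(1-g)_+]$. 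Writing $c:=-\log\EE_{p^*}[e^{-f}]\ge 0$ so that $g=e^{c-f}$, the term $(1-g)_+$ is supported on $\{f\ge c\}$, and there the elementary inequality $1-e^{-t}\le t$ for $t\ge 0$ gives $(1-g)_+\le f-c\le f$. Hence $\|\bar{p}^*-p^*\|_{\TV}\le \EE_{p^*}[f]=\tfrac{\lambda}{2}\EE_{p^*}[\|\vx-\vx'\|_2^2]$.

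To extract the $\sqrt{\mathcal{M}_4}$- and $\|\vx'-\vx^*\|_2^2$-form, I would expand $\|\vx-\vx'\|_2^2=\|\vx-\vx^*\|_2^2+2\langle \vx-\vx^*,\vx^*-\vx'\rangle+\|\vx^*-\vx'\|_2^2$, bound the cross term by Cauchy--Schwarz, and apply the Jensen-type estimate $\EE_{p^*}[\|\vx-\vx^*\|_2^2]\le (\EE_{p^*}[\|\vx-\vx^*\|_2^4])^{1/2}=\sqrt{\mathcal{M}_4}$ from Assumption~(\textcolor{blue}{A}\ref{assumption:fourthmomentbound}). This yields the first claim. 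For the quantitative part, once the bias is of order $\lambda(\sqrt{\mathcal{M}_4}+\|\vx'-\vx^*\|_2^2)$, the prescribed choice $\lambda=4\varepsilon'/(\sqrt{\mathcal{M}_4}+\|\vx'-\vx^*\|_2^2)$ forces it below $\varepsilon'/2$; invoking Theorem~\ref{thm:TVcompositecontract} with target accuracy $\varepsilon=\varepsilon'/2$ drives the algorithmic term below $\varepsilon'/2$, and the triangle inequality concludes.

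The proof is structurally light, and I expect the main obstacle to be bookkeeping rather than a deep estimate. The parameters $\eta,\mu,K$ prescribed by Theorem~\ref{thm:TVcompositecontract} depend nontrivially on the strong-convexity constant $\lambda$ (through $M\propto 1/\mu^{1-\alpha}$, through the factor $\sqrt{2d/\lambda+2\|\vx^*\|_2^2}$ inside $\bar{\varepsilon}$, and through $W_2(\bar{p}_0,\bar{p}_\mu^*)$ which itself scales polynomially in $1/\lambda$), so after substituting the $\varepsilon'$-dependent value of $\lambda$ one must re-verify that the resulting mixing-time bound remains polynomial in $d$ and $1/\varepsilon'$. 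A secondary nuisance is pinning down the constant $\tfrac{1}{2}$ in the bias bound, which demands handling the cross term in the expansion of $\|\vx-\vx'\|_2^2$ sharply rather than relying on the looser inequality $(a+b)^2\le 2a^2+2b^2$.
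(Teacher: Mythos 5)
Your argument is correct and takes a genuinely different route from the paper's on the one substantive step, bounding the regularization bias $\|\bar{p}^*-p^*\|_{\TV}$. The paper invokes Lemma~\ref{lemma:dalalyan-bnded-dist} (Dalalyan's Lemma~3), which gives the $L^2$-type estimate $\|p^*-\bar{p}^*\|_{\TV}\le\tfrac12\|\bar U-U\|_{L^2(p^*)}$ and then applies Young's inequality inside the resulting fourth power of $\|\vx-\vx'\|_2$. You instead derive an elementary $L^1$ bound $\|p^*-\bar{p}^*\|_{\TV}\le\EE_{p^*}[\bar U-U]$ directly from the likelihood ratio $g=\bar{p}^*/p^*$, via $\EE_{p^*}[(1-g)_+]=\tfrac12\EE_{p^*}|1-g|$ and $1-e^{-t}\le t$. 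This is self-contained (no KL, no Pinsker) and often tighter, since $\|\cdot\|_{L^1(p^*)}\le\|\cdot\|_{L^2(p^*)}$; the tradeoff is that it produces the second moment $\EE_{p^*}\|\vx-\vx^*\|_2^2$ rather than $(\EE_{p^*}\|\vx-\vx^*\|_2^4)^{1/2}$, so you need the extra Jensen step to connect to $\sqrt{{\cal M}_4}$ from Assumption~(\textcolor{blue}{A}\ref{assumption:fourthmomentbound}), whereas the $L^2$ route feeds the fourth moment in directly. One correction to your closing paragraph: the cross term $2\langle\EE_{p^*}[\vx]-\vx^*,\vx^*-\vx'\rangle$ in the expansion of $\|\vx-\vx'\|_2^2$ does \emph{not} vanish---$\vx^*$ is the minimizer of $U$, not the mean of $p^*$---so it cannot be ``handled sharply'' away; Cauchy--Schwarz and AM--GM give $2{\cal M}_4^{1/4}\|\vx^*-\vx'\|_2\le\sqrt{{\cal M}_4}+\|\vx^*-\vx'\|_2^2$ and inflate both coefficients in the bias bound to roughly $\lambda$ rather than $\lambda/2$. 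This is harmless (the paper's own Young-inequality step incurs comparable slippage, and any constant is absorbed into the choice of $\lambda$ for the second claim), but treat it as an unavoidable constant loss rather than a removable bookkeeping nuisance.
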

\begin{proof} 
{By the triangle inequality, 
$$
\|\bar{p}_K - p^*\|_{\TV} \leq \|\bar{p}_K - \bar{p}^*\|_{\TV} + \|p^* - \bar{p}^*\|_{\TV}.
$$
Applying Lemma~\ref{lemma:dalalyan-bnded-dist} from the appendix, 
\begin{align*}
    \|p^* - \bar{p}^*\|_{\TV} \leq &\; 
    \frac{1}{2}\bigg(\int_{\RR^d}(U(\vx) - \bar{U}(\vx))^2{p}^*(\vx)\dd\vx \bigg)^{1/2}\\
    =&\; \frac{1}{2}\bigg(\int_{\RR^d}\Big(\frac{\lambda}{2}\|\vx - \vx'\|_2^2\Big)^2{p}^*(\vx)\dd\vx \bigg)^{1/2} \\
     \le &\; \frac{1}{2}\bigg(2\int_{\RR^d}\Big(\frac{\lambda}{2}\|\vx - \vx^*\|_2^2\Big)^2{p}^*(\vx)\dd\vx\\ &+ 2\int_{\RR^d}\Big(\frac{\lambda}{2}\|\vx^* - \vx'\|_2^2\Big)^2{p}^*(\vx)\dd\vx \bigg)^{1/2}.
\end{align*}
Thus, using Assumption~(\textcolor{blue}{A}\ref{assumption:fourthmomentbound}), we get 
$$
    \|p^* - \bar{p}^*\|_{\TV} \leq \frac{\lambda}{2} \sqrt{{\cal M}_4} + \frac{\lambda \lv \vx' - \vx^*\rv_2^2}{2}. 
$$
}
The rest of the proof follows by 
Theorem~\ref{thm:TVcompositecontract}.
\end{proof}

\begin{rem}
Treating $L, \|\vx^*\|_2, \|\vx' - \vx^*\|_2$ as constants, the upper bound on the mixing time is $K = \widetilde{\mathcal{O}}(\frac{d^{5 - 3\alpha}{{\cal M}_4}^{3/2}}{\varepsilon^{\frac{10+4\alpha}{1+\alpha}}}).$ Thus, when $\alpha = 1,$ we have $K = \widetilde{\mathcal{O}}(\frac{d^2 {{\cal M}_4}^{3/2}}{\varepsilon^7}),$ while when $\alpha = 0,$  $K = \widetilde{\mathcal{O}}(\frac{d^5 {{\cal M}_4}^{3/2}}{\varepsilon^{10}}).$
\end{rem}
\section{Discussion}\label{sec:discussion}

We obtained polynomial-time theoretical guarantees for a variant of LMC---\eqref{eq:modifed-LMC}---that uses Gaussian smoothing and applies to target distributions with nonsmooth log-densities. The smoothing we apply is tantamount to perturbing the gradient query points in LMC by a Gaussian random variable, which is a minor modification to the standard method. 

Beyond its applicability to sampling from more general weakly smooth and nonsmooth target distributions, our work also has some interesting implications. For example, we believe our results can be extended to sampling from structured distributions with nonsmooth and nonconvex negative log-densities, following an argument from, e.g.,~\citet{cheng2018sharp}. It should also be possible to work with stochastic gradients instead of exact gradients by coupling our arguments  with the bounds in \citet{dalalyan2019user} or \cite{durmus2019analysis}. Further, it seems plausible that coupling our results with the results for derivative-free LMC~\citep[][which only applies to distributions with smooth and strongly convex log-densities]{shen2019non} would lead to a more broadly applicable derivative-free LMC.

Several other interesting directions for future research remain. For example, as discussed in Remark~\ref{remark:TV-composite} and Remark~\ref{remark:stochastic-TV} (Appendix~\ref{sec:omitted-pfs-main}), we conjecture that the asymptotic dependence on $d$ and $\varepsilon$ in our bounds on the mixing times for total variation distance (Theorem~\ref{thm:TVcompositecontract}) can be improved to match those obtained for the 2-Wasserstein distance (Theorem~\ref{thm:g-smoothing-mixing-time-mLMC}). Further, in standard settings of LMC with the exact gradients, Metropolis filter is often used to improve the convergence properties of LMC and it leads to lower mixing times \citep[see, e.g.,][]{dwivedi2018log}. However, the performance of Metropolis-adjusted LMC becomes unclear once the gradients are stochastic (as is the case for~\eqref{eq:modifed-LMC}). It is an interesting question whether a Metropolis adjustment can speed up~\eqref{eq:modifed-LMC}.

\subsection*{Acknowledgements}
We thank Fran\c{c}ois Lanusse for his useful pointers to the literature on applied Bayesian statistics with nonsmooth posteriors. 
This research was supported by the 
NSF grants CCF-1740855 and IIS-1619362, and the Mathematical Data Science program of the
Office of Naval Research under grant number N00014-18-1-2764. Part of this work was done while the authors were visiting  Simons Institute for the Theory of Computing.

{
\iffullpaper
\newpage
\fi
\balance
\bibliographystyle{plainnat}
\bibliography{references.bib}
}

\newpage
\iffullpaper
\else
\fancyhead[R]{\thepage}
\onecolumn
\fi
\flushleft{\textbf{\LARGE{Appendix}}}
\appendix
\section{Additional background}\label{app:auxiliary}

Here we state the results from related work that are invoked in our analysis.

First, the smooth approximations of the potentials used in this paper are pointwise larger than the original potentials, and have a bounded distance from the original potentials. This allows us to invoke the following lemma from~\cite{dalalyan2017theoretical}.

\begin{lemma}\label{lemma:dalalyan-bnded-dist}\cite[Lemma 3]{dalalyan2017theoretical}
Let $U$ and $\Tilde{U}$ be two functions such that $U(\vx) \leq \Tilde{U}(\vx),$ $\forall \vx \in \RR^d$ and both $e^{-U}$ and $e^{-\Tilde{U}}$ are integrable. Then the Kullback-Leibler divergence between the distributions defined by densities $p \propto e^{-U}$ and $\Tilde{p} \propto e^{-\Tilde{U}}$ can be bounded as:
$$
\KL(p |\Tilde{p}) \leq \frac{1}{2}\int_{\RR^d} (U(\vx) - \Tilde{U}(\vx))^2 p(\vx)\dd\vx.
$$
As a consequence, $\|p - \Tilde{p}\|_{\TV}\leq \frac{1}{2}\|U - \Tilde{U}\|_{L^2(p)}.$  
\end{lemma}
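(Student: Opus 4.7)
The plan is to prove the KL bound by exploiting the one‑sided Taylor estimate $e^{-h}\le 1-h+h^2/2$ valid precisely when $h\ge 0$, which is the only place the hypothesis $U\le \tilde U$ enters. The TV bound then follows from the KL bound by Pinsker's inequality.

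First I would expand the KL divergence in closed form. Let $Z=\int e^{-U}$ and $\tilde Z=\int e^{-\tilde U}$, so that $p=e^{-U}/Z$ and $\tilde p=e^{-\tilde U}/\tilde Z$. A direct computation gives
\[
\KL(p\,|\,\tilde p) \;=\; \int p(\vx)\bigl[\tilde U(\vx)-U(\vx)\bigr]\dd\vx \;+\; \log\frac{\tilde Z}{Z}
\;=\; \EE_p[h] + \log \EE_p\bigl[e^{-h}\bigr],
\]
where I set $h:=\tilde U-U\ge 0$ and used the identity $\tilde Z/Z = \int e^{-h}(e^{-U}/Z)\dd\vx = \EE_p[e^{-h}]$.

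The key nontrivial step is a pointwise upper bound on $e^{-h}$ on the nonnegative reals: for all $t\ge 0$,
\[
e^{-t} \;\le\; 1 - t + \tfrac{1}{2}t^{2}.
\]
This is the place the assumption $U\le\tilde U$ is essential. I would verify it by taking $\phi(t):=1-t+t^2/2-e^{-t}$, noting $\phi(0)=\phi'(0)=0$ and $\phi''(t)=1-e^{-t}\ge 0$ for $t\ge 0$, so $\phi$ is nondecreasing on $[0,\infty)$ with minimum value $0$. Taking expectations under $p$ yields $\EE_p[e^{-h}]\le 1-\EE_p[h]+\tfrac{1}{2}\EE_p[h^{2}]$, and then applying $\log(1+x)\le x$ (valid for $x>-1$) gives
\[
\log\EE_p[e^{-h}] \;\le\; -\EE_p[h] + \tfrac{1}{2}\EE_p[h^{2}].
\]
Substituting into the expression for $\KL(p\,|\,\tilde p)$ makes the $\EE_p[h]$ terms cancel, leaving
\[
\KL(p\,|\,\tilde p) \;\le\; \tfrac{1}{2}\EE_p\bigl[(\tilde U-U)^{2}\bigr],
\]
which is the first claim.

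The TV consequence is then immediate from Pinsker's inequality $\|p-\tilde p\|_{\TV}\le \sqrt{\tfrac{1}{2}\KL(p\,|\,\tilde p)}$: combining it with the KL bound above yields $\|p-\tilde p\|_{\TV}\le \tfrac{1}{2}\|\tilde U-U\|_{L^{2}(p)}$. There is no real obstacle here; the only subtlety is making sure the quadratic Taylor estimate is applied on the correct side (it would fail for $h<0$), so the proof is just clean accounting once the ordering $U\le\tilde U$ is invoked.
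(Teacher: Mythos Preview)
The paper does not give its own proof of this lemma; it is quoted verbatim from \cite[Lemma~3]{dalalyan2017theoretical} as background. Your argument is correct and is essentially the standard proof: write $\KL(p\,|\,\tilde p)=\EE_p[h]+\log\EE_p[e^{-h}]$ with $h=\tilde U-U\ge 0$, use $e^{-t}\le 1-t+t^2/2$ on $[0,\infty)$ together with $\log(1+x)\le x$, and finish with Pinsker.
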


The next result that we will be invoking allows us to bound the Wasserstein distance between the target distributions corresponding to the composite potential $\bar{U}$ and its Gaussian smoothing $\bar{U}_{\mu}.$

\begin{lemma}\label{lemma:bolley-villani}\cite[Corollary 2.3]{bolley2005weighted}
Let $X$ be a measurable space equipped with a measurable distance $\rho$, let $p \geq 1,$ and let $\nu$ be a probability measure on $X$. Assume that there exist $\vx_0 \in X$ and $\gamma > 0$ such that $\int_X e^{\gamma \rho(\vx_0, \vx)^p}\dd\nu(\vx)$ is finite. Then, for any other probability measure $\mu$ on $X:$
$$
W_p(\mu, \nu) \leq C \Big[\KL(\mu | \nu)^{1/p} + \Big(\frac{\KL(\mu|\nu)}{2}\Big)^{1/(2p)}\Big],
$$
where
$$
C:= 2 \inf_{\vx_0 \in X, \gamma > 0} \bigg(\frac{1}{\gamma}\bigg(\frac{3}{2} + \log \int_{X}e^{\gamma \rho(\vx_0, \vx)^{p}}\dd\nu(\vx)\bigg)\bigg).
$$
\end{lemma}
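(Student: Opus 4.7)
The plan is to reduce $W_p^p(\mu,\nu)$ to a weighted total variation integral via an explicit coupling, bound the latter by $\KL(\mu|\nu)$ using the Donsker--Varadhan variational formula (a ``weighted Csisz\'ar--Kullback--Pinsker'' inequality), and then take a $p$-th root, which produces the two powers $\KL^{1/p}$ and $\KL^{1/(2p)}$.

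\textbf{Step 1 (coupling).} Fix $x_0$ and write $\mu = (\mu\wedge\nu) + \mu_+$ and $\nu = (\mu\wedge\nu) + \nu_+$, where $\mu_+ := (\mu-\nu)_+$, $\nu_+ := (\nu-\mu)_+$, and $|\mu_+| = |\nu_+| = \|\mu-\nu\|_{\TV} =: \eta$. The coupling that places the common mass $\mu\wedge\nu$ on the diagonal and independently product-couples $\mu_+/\eta$ with $\nu_+/\eta$ is admissible, and applying the triangle inequality $\rho(x,y) \le \rho(x_0,x) + \rho(x_0,y)$ together with $(a+b)^p \le 2^{p-1}(a^p+b^p)$ inside $\int \rho(x,y)^p\, d\pi$ yields
$$W_p^p(\mu,\nu) \le 2^{p-1}\!\int \rho(x_0,x)^p\, d|\mu-\nu|(x),$$
since the normalizing factor $1/\eta$ cancels against $|\mu_+|\cdot|\nu_+| = \eta^2$ and $|\mu-\nu| := \mu_+ + \nu_+$.

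\textbf{Step 2 (weighted Pinsker and assembly).} Let $\phi := \rho(x_0,\cdot)^p$ and $Z_\gamma := \int e^{\gamma\phi}\,d\nu$. The Donsker--Varadhan formula gives $\int \phi\,d\tilde\mu \le \gamma^{-1}\log Z_\gamma + \gamma^{-1}\KL(\tilde\mu|\nu)$ for any probability measure $\tilde\mu \ll \nu$. Applying this to the renormalized positive parts $\mu_+/\eta$ and $\nu_+/\eta$, relating $\KL(\mu_+/\eta\,|\,\nu)$ to $\KL(\mu|\nu)$ up to a $-\log\eta$ surcharge, and invoking Pinsker's inequality $\eta \le \sqrt{\KL(\mu|\nu)/2}$, one obtains the weighted CKP bound
$$\int \phi\, d|\mu-\nu| \le \tfrac{1}{\gamma}\Big[\big(\tfrac{3}{2}+\log Z_\gamma\big)\sqrt{2\KL(\mu|\nu)} + 2\KL(\mu|\nu)\Big].$$
Substituting into Step 1 and using $(a+b)^{1/p} \le a^{1/p} + b^{1/p}$ to separate the $\KL$ and $\sqrt{\KL}$ contributions yields the stated inequality with constant $\tfrac{2}{\gamma}(\tfrac{3}{2}+\log Z_\gamma)$; taking the infimum over $(x_0,\gamma)$ produces exactly the $C$ in the statement.

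\textbf{Main obstacle.} The delicate point is reconciling the Donsker--Varadhan step on the renormalized measures $\mu_+/\eta, \nu_+/\eta$ with $\KL(\mu|\nu)$: one must bound $\eta\cdot \KL(\mu_+/\eta\,|\,\nu)$ by $\KL(\mu|\nu)$ plus a term of order $-\eta\log\eta$, and then show via the bound $-x\log x \le 1/e$ on $[0,1]$ combined with Pinsker that all lower-order entropy contributions can be absorbed into the single universal constant $\tfrac{3}{2}$. The remaining ingredients---coupling, triangle inequality, and $(a+b)^{1/p} \le a^{1/p}+b^{1/p}$---are essentially routine.
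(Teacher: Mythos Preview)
The paper does not actually prove this lemma: it is stated in Appendix~A as background material and cited verbatim from \cite[Corollary~2.3]{bolley2005weighted}, with no proof supplied. So there is no ``paper's own proof'' to compare against.

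That said, your proposal is essentially the proof given in the original Bolley--Villani paper. The two-step architecture---(i) a coupling that reduces $W_p^p$ to a $\rho(x_0,\cdot)^p$-weighted total variation integral, and (ii) a weighted Csisz\'ar--Kullback--Pinsker inequality derived from the Donsker--Varadhan variational formula---is precisely how Bolley and Villani obtain their Corollary~2.3 from their Theorem~2.1. Your Step~1 coupling computation is correct, and you have correctly identified that the only genuinely delicate point is tracking the entropy of the renormalized singular parts $\mu_+/\eta$, $\nu_+/\eta$ back to $\KL(\mu|\nu)$ while absorbing the $-\eta\log\eta$ correction into the universal constant $\tfrac{3}{2}$.

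One small caveat: when you carry out Step~2 and then take the $p$-th root, the constant that naturally emerges is
\[
C \;=\; 2\inf_{x_0,\gamma}\Big(\tfrac{1}{\gamma}\big(\tfrac{3}{2}+\log Z_\gamma\big)\Big)^{1/p},
\]
i.e.\ with an outer $1/p$-th power, which is the form in the original \cite{bolley2005weighted}. The statement as quoted in the present paper omits this $1/p$ power on $C$; note, however, that when the paper actually \emph{applies} the lemma (proof of Lemma~\ref{lemma:wassersteincontrol}, Eq.~\eqref{eq:boundonC}) it does take a square root, consistent with $p=2$ in the Bolley--Villani formulation. So your derivation is correct and the discrepancy is a transcription issue in the paper's restatement rather than a flaw in your argument.
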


Another useful result, due to~\cite{polyanskiy2016wasserstein}, lets us bound the KL-divergence between two distributions in terms of their $2$-Wasserstein distance. This is used to relate the TV distance between distributions to their respective Wasserstein distance in Section~\ref{sec:main-gaussian}. 
\begin{proposition}\cite[Proposition 1]{polyanskiy2016wasserstein}\label{prop:wassersteinstability} Let $Q(\vx)$ be a density on $\mathbb{R}^d$ such that for all $\vx \in \mathbb{R}^d$: $\lv \nabla \log Q(\vx) \rv_2 \le c_1 \lv \vx \rv_2 + c_2$ for some $c_1,c_2 \ge 0$. Then,
\begin{align*}
    \KL(P\lvert Q) \le \left( \frac{c_1}{2}\left[\sqrt{\mathbb{E}_{\vx \sim P}\left[\lv \vx \rv_2^2\right]}+ \sqrt{\mathbb{E}_{\vy \sim Q}\left[\lv \vy \rv_2^2\right]}\right]+c_2\right)W_2(P,Q).
\end{align*}
\end{proposition}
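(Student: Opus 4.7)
The plan is to reduce the KL bound to a first-order estimate on $\log q$ along the $W_2$-optimal coupling of $P$ and $Q$, exploiting the gradient-growth hypothesis $\lv \nabla \log Q(\vx)\rv_2 \le c_1\lv \vx\rv_2 + c_2$.

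First, I would take a coupling $(X,Y)\sim \pi$ of $(P,Q)$ realizing $W_2(P,Q)$, so that $\EE_\pi[\lv X-Y\rv_2^2] = W_2^2(P,Q)$, and set $X_t := (1-t)Y+tX$ for $t\in[0,1]$. The fundamental theorem of calculus gives pointwise $\log q(X) - \log q(Y) = \int_0^1 \innp{\nabla \log q(X_t),\, X-Y}\, dt$. Taking expectations under $\pi$, applying Cauchy-Schwarz inside the $dt$-integral, and using the hypothesis on $\nabla \log q$, one obtains
\[
\bigl|\EE_\pi[\log q(X)-\log q(Y)]\bigr| \le W_2(P,Q) \int_0^1 \bigl(c_1\sqrt{\EE_\pi[\lv X_t\rv_2^2]}+c_2\bigr)\, dt.
\]
The Minkowski inequality $\sqrt{\EE_\pi[\lv X_t\rv_2^2]}\le (1-t)\sqrt{\EE_Q[\lv Y\rv_2^2]}+t\sqrt{\EE_P[\lv X\rv_2^2]}$, integrated in $t$, then produces exactly the coefficient $\tfrac{c_1}{2}[\sqrt{\EE_P[\lv X\rv_2^2]}+\sqrt{\EE_Q[\lv Y\rv_2^2]}]+c_2$ appearing on the right-hand side.

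To close the argument, one must identify $\EE_\pi[\log q(Y)-\log q(X)]$ with $\KL(P|Q)$. The natural route is via the Brenier map $T:Q\to P$ (the gradient of a convex potential pushing $Q$ to $P$): taking $X=T(Y)$ as the optimal coupling, the pushforward identity $q(y) = p(T(y))|\det\nabla T(y)|$ combined with change of variables yields $\KL(P|Q) = \EE_Q[\log q(Y)-\log q(T(Y))] - \EE_Q[\log\det \nabla T(Y)]$. The first term is precisely what the preceding estimate controls. The log-determinant term is handled via the elementary inequality $\log\det A \le \mathrm{tr}(A) - d$ applied to the PSD Jacobian $\nabla T$, together with $\mathrm{tr}(\nabla T) - d = \mathrm{div}(T-\mathrm{Id})$ and integration by parts against $q$, which pulls down a factor of $\nabla \log q$ and is again controlled by Cauchy-Schwarz against $W_2(P,Q)$ via the same gradient-growth hypothesis.

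The main obstacle is arranging the two contributions (the first-order score-expansion bound and the log-determinant/entropy-change bound) to combine into the single tight coefficient stated in the proposition, rather than a constant multiple of it. This requires treating the two terms in a coordinated way along the displacement interpolation $t\mapsto (1-t)Y + tT(Y)$ and applying a joint Cauchy-Schwarz, rather than bounding each separately and summing via the triangle inequality.
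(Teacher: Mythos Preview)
The paper does not prove this proposition at all: it is stated in Appendix~\ref{app:auxiliary} as a background result quoted from \cite{polyanskiy2016wasserstein}, with only a short remark afterward on how to verify the hypothesis when $Q\propto e^{-U}$ for smooth $U$. There is therefore no ``paper's own proof'' to compare your attempt against.

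On the substance of your sketch: the first half is clean and correct. Writing $\KL(P|Q) = \big(H(Q)-H(P)\big) + \EE\!\big[\log q(Y)-\log q(X)\big]$ for the optimal coupling $(X,Y)$, the FTC--Cauchy--Schwarz--Minkowski argument along the displacement interpolation does bound the second term by exactly $\big(\tfrac{c_1}{2}[\sqrt{\EE_P\|X\|_2^2}+\sqrt{\EE_Q\|Y\|_2^2}]+c_2\big)W_2(P,Q)$. The gap is in the entropy/log-determinant piece. With $T$ the Brenier map pushing $Q$ to $P$, your identity
\[
\KL(P|Q)=\EE_Q\big[\log q(Y)-\log q(T(Y))\big]-\EE_Q\big[\log\det\nabla T(Y)\big]
\]
is correct, but the inequality $\log\det A\le \mathrm{tr}(A)-d$ gives $-\log\det\nabla T \ge d-\mathrm{tr}(\nabla T)$, i.e.\ a \emph{lower} bound on $-\EE_Q[\log\det\nabla T]$. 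After integration by parts this produces a \emph{lower} bound on $\KL(P|Q)$, not the upper bound you need. Reversing the transport direction (taking $S\colon P\to Q$) fixes the sign but then the integration-by-parts step pulls down $\nabla\log p$ rather than $\nabla\log q$, which the hypothesis does not control. So the obstacle is not merely the sharpness of the constant; it is that your proposed treatment of the log-determinant term yields an inequality in the wrong direction, and the entropy difference $H(Q)-H(P)$ is not obviously bounded by the score-growth condition on $q$ alone. Any complete proof must address this point by a different mechanism than the one you outlined.
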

In particular, if $Q(\vx) \propto e^{-U(\vx)}$ for some $M$-smooth function $U,$ then we immediately have: 
$$
\|\nabla \log Q(\vx) - \nabla \log Q(\vx^*)\|_2 = \|\nabla \log Q(\vx)\|_2 \leq M\|\vx - \vx^*\|_2 \leq M\|\vx\|_2 + M \|\vx^*\|_2,
$$ 
where $\vx^* \in \argmin_{\vx \in \RR^d} U(\vx)$, and the assumption of the proposition is satisfied with 
$$ c_1 = M \quad\text{ and }\quad c_2 = M \|\vx^*\|_2.$$

We will be invoking a result from~\cite{dalalyan2019user} that bounds the Wasserstein distance between the target distribution $p^*$ and the distribution of the $K^{\mathrm{th}}$ iterate of LMC with stochastic gradients. The assumptions about the stochastic gradients $G(\vx, \vz)$ is that they are unbiased and their variance is bounded. Namely:
$$
\mathbb{E}_{\vz_k}[G(\vx_k, \vz_k) =  \nabla U(\vx_k),
$$
and
$$
\mathbb{E}_{\vz_k}[\|G(\vx_k, \vz_k) - \mathbb{E}_{\vz_k'}[G(\vx_k, \vz_k')] \|_2^2] \leq \sigma^2 d,
$$
where the diffusion term $\vxi_{k+1}$ is independent of $(\vz_1, ..., \vz_k)$. The random vectors $(\vz_1, ..., \vz_k)$ corresponding to the error of the gradient estimate are not assumed to be independent in~\cite{dalalyan2019user}; however, in our case it suffices to assume that they are, in fact, independent.

\begin{theorem}\cite[Theorem~21]{durmus2019analysis}\label{thm:dalalyan-karagulyan}
Let $p_K$ be the distribution of the $K^{\mathrm{th}}$ iterate of Langevin Monte Carlo with stochastic gradients, and let $p^* \propto e^{-U}.$ If $U$ is $M$-smooth and $\lambda$-strongly convex and the step size $\eta$ satisfies $\eta \leq \frac{2}{M + \lambda},$ then:
$$
W_2(p_K, p^*) \leq (1-\lambda\eta)^{K/2} W_2(p_0, p^*) + 
\left(\frac{2M\eta d}{\lambda}\right)^{1/2} + \sigma\sqrt{\frac{ (1+\eta)\eta d}{\lambda}}.
$$
\end{theorem}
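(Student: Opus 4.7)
The plan is to prove the recursion on squared 2-Wasserstein distance via a synchronous coupling between the SGLD iterates and the continuous-time Langevin diffusion that preserves $p^*$, then iterate and take the square root at the end (which is what converts a per-step factor $(1-\lambda\eta)$ into $(1-\lambda\eta)^{K/2}$ in the final bound).

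First, I would set up the coupling. Let $(\vx_0,\vy_0)$ be optimally coupled so that $\mathbb{E}\|\vx_0-\vy_0\|^2 = W_2^2(p_0,p^*)$. Given $(\vx_k,\vy_k)$ with $\vy_k\sim p^*$, define $\vy_{k+1}$ by running the Langevin SDE $d\vy_t = -\nabla U(\vy_t)\,dt + \sqrt{2}\,dB_t$ from $\vy_k$ for time $\eta$, driven by a Brownian motion with $B_\eta - B_0 = \sqrt{\eta}\vxi_k$ (the same Gaussian innovation as in SGLD, and independent of $\vz_k$ by construction). Since $p^*$ is stationary for the diffusion, $\vy_{k+1}\sim p^*$, so $\mathbb{E}\|\vx_{k+1}-\vy_{k+1}\|^2$ upper bounds $W_2^2(p_{k+1},p^*)$. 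The one-step difference decomposes as $\vx_{k+1}-\vy_{k+1} = A_k + B_k + C_k$, where
\begin{align*}
A_k &= (\vx_k-\vy_k) - \eta\bigl(\nabla U(\vx_k)-\nabla U(\vy_k)\bigr),\\
B_k &= \int_0^\eta \bigl(\nabla U(\vy_s) - \nabla U(\vy_k)\bigr)\,ds,\\
C_k &= -\eta\bigl(G(\vx_k,\vz_k) - \nabla U(\vx_k)\bigr).
\end{align*}

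Next, I would bound each piece in $L^2$. By $\lambda$-strong convexity and $M$-smoothness, for $\eta \le 2/(M+\lambda)$, the standard gradient-descent contraction gives $\|A_k\|_2 \le (1-\lambda\eta)\|\vx_k-\vy_k\|_2$ deterministically. For $B_k$, use $M$-smoothness inside the integral together with the standard second-moment bound $\mathbb{E}\|\vy_s - \vy_k\|_2^2 \le 2s\,\mathbb{E}\|\nabla U(\vy_k)\|_2^2 + 2sd$, which under stationarity yields $\mathbb{E}\|B_k\|_2^2 \le 2M^2 \eta^3 d$ up to constants (this is exactly the discretization-error estimate used in \cite{dalalyan2017theoretical,durmus2016high}). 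For $C_k$, use the unbiasedness and variance hypothesis to get $\mathbb{E}[C_k\mid \vx_k,\vy_k]=0$ and $\mathbb{E}\|C_k\|_2^2 \le \eta^2 \sigma^2 d$; because $\vz_k$ is independent of the coupling so far, cross terms with $A_k$ and (since the BM driving $\vy_{k+1}$ uses only $\vxi_k$) also with $B_k$ vanish in expectation.

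Then I would collect these into a squared recursion. The independence of $C_k$ from $A_k+B_k$ gives $\mathbb{E}\|\vx_{k+1}-\vy_{k+1}\|_2^2 = \mathbb{E}\|A_k+B_k\|_2^2 + \mathbb{E}\|C_k\|_2^2$. Apply Minkowski ($L^2$-triangle) to $A_k+B_k$ and square:
\begin{equation*}
\sqrt{\mathbb{E}\|A_k+B_k\|_2^2} \le (1-\lambda\eta)\sqrt{\mathbb{E}\|\vx_k-\vy_k\|_2^2} + \sqrt{2M^2\eta^3 d},
\end{equation*}
and then expand the square, absorbing the cross term via the elementary inequality $2ab \le \lambda\eta\, a^2 + (\lambda\eta)^{-1}b^2$ to convert $(1-\lambda\eta)$ into $(1-\lambda\eta)^2\le(1-\lambda\eta)$ on the leading term while the $B_k$ contribution gets divided by $\lambda\eta$. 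The result is a clean recursion of the form
\begin{equation*}
W_2^2(p_{k+1},p^*) \le (1-\lambda\eta)\,W_2^2(p_k,p^*) + \tfrac{2M\eta^2 d}{\lambda} + (1+\eta)\eta\,\sigma^2 d,
\end{equation*}
where the $(1+\eta)$ accounts for the cross-term reshuffling between the $B_k$ and $C_k$ contributions.

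Finally, I would iterate this recursion, sum the geometric series $\sum_{j=0}^{K-1}(1-\lambda\eta)^j \le 1/(\lambda\eta)$, and take square roots using $\sqrt{a+b+c}\le\sqrt{a}+\sqrt{b}+\sqrt{c}$ to arrive at the stated bound. The main obstacle is the discretization estimate on $\mathbb{E}\|B_k\|_2^2$: getting the right dependence on $d$ and $\eta$ (and not losing a factor) requires combining $M$-smoothness with uniform-in-$k$ second-moment control of $\vy_s$ under stationarity of $p^*$, which relies on the strong-log-concavity bound $\mathbb{E}_{p^*}\|\vx-\vx^*\|_2^2 \le d/\lambda$. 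Everything else is routine book-keeping of constants.
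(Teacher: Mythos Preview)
The paper does not prove this theorem at all: it is quoted verbatim as \cite[Theorem~21]{durmus2019analysis} in Appendix~\ref{app:auxiliary} (``Additional background'') and used as a black box in the proofs of Theorems~\ref{thm:g-smoothing-mixing-time-mLMC} and~\ref{thm:TVcompositecontract}. So there is no ``paper's own proof'' to compare your proposal against.

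That said, your sketch is essentially the argument given in the cited reference: synchronous coupling of the discrete chain with the stationary continuous diffusion, the three-way split into a deterministic contraction term, a discretization term, and a zero-mean stochastic-gradient term, followed by a one-step recursion on $W_2^2$ and a geometric sum. One small slip: the co-coercivity inequality for $\lambda$-strongly-convex, $M$-smooth $U$ with $\eta\le 2/(M+\lambda)$ gives $\|A_k\|_2^2 \le (1-\lambda\eta)\|\vx_k-\vy_k\|_2^2$ directly on the \emph{squared} norm (via $1 - 2\eta M\lambda/(M+\lambda)\le 1-\lambda\eta$), not $\|A_k\|_2 \le (1-\lambda\eta)\|\vx_k-\vy_k\|_2$ on the norm itself as you wrote. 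Your final recursion has the right contraction factor, so this is only a wording issue, but the intermediate Minkowski-then-Young manoeuvre you describe is unnecessary once you work at the level of squared norms from the start. Everything else---the independence structure killing the $C_k$ cross terms, the $\mathbb{E}\|B_k\|_2^2 = O(M^2\eta^3 d)$ discretization estimate via stationarity and $\mathbb{E}_{p^*}\|\vx-\vx^*\|_2^2\le d/\lambda$, and the final square-root step---matches the reference.
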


Next we state the results from \cite{durmus2016high} that we use multiple times in our proofs to establish contraction of the solution of the Langevin continuous-time stochastic differential equation:
\begin{align}\label{eq:cont-langevin-diffusion}
    \dd \vy_t = -\nabla U(\vy_t) \dd t + \dd \mB_t,
\end{align}
where $\vy_0 \sim q_0$. Let the distribution of $\vy_t$ be denoted by $q_t$. 

\begin{theorem}\cite[Proposition 1]{durmus2016high}\label{thm:durmus-1} Let the function $U$ be $L$-smooth and $\lambda$-strongly convex, let $q_0 \sim \delta_{\vx}$ (the Dirac-delta distribution at $\vx$), and let $\vx^*$ be the minimizer of $U$. Then:
\begin{enumerate}
    \item For all $t\ge 0$ and $\vx \in \mathbb{R}^d$,
    \begin{align*}
        \int_{\mathbb{R}^d} \lv \vz - \vx^* \rv_2^2 \, q_t(\vz) \dd \vz \le \lv \vx - \vx^* \rv_2^2\, e^{-2\lambda t} + \frac{d}{\lambda}(1-e^{-2\lambda t}).
    \end{align*}
    \item The stationary distribution $p^* \propto \exp(-U)$ satisfies $\int_{\vy \in \mathbb{R}^d} \lv \vy - \vx^* \rv_2^2\, p^*(\vy) \dd \vy \le d/\lambda$. 
    \item For any $\vx \in \mathbb{R}^d$ and $t>0$, $W_2(q_t,p^*) \le e^{-\lambda t}\left\{ \lv \vx -\vx^*\rv_2 + (d/\lambda)^{1/2}\right\}.$ 
\end{enumerate}
\end{theorem}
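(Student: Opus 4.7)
The plan is to handle the three parts in sequence, using Itô's formula for parts (1)--(2) and a synchronous coupling for part (3). Throughout, I take the convention for the SDE that renders $p^* \propto e^{-U}$ stationary, so that applying Itô to a $C^2$ function $V$ along $\vy_t$ contributes a diffusion term $\mathrm{tr}(\nabla^2 V(\vy_t))\,\dd t$.

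For part (1), let $V(\vy) := \lv \vy - \vx^*\rv_2^2$ and $m(t) := \EE[V(\vy_t)]$. Itô's formula gives
\begin{equation*}
\dd V(\vy_t) = -2\innp{\vy_t - \vx^*,\; \nabla U(\vy_t)}\,\dd t + 2d\,\dd t + \innp{2(\vy_t - \vx^*),\, \dd \mB_t}.
\end{equation*}
Using $\nabla U(\vx^*) = 0$ together with $\lambda$-strong convexity, $\innp{\vy_t - \vx^*,\, \nabla U(\vy_t) - \nabla U(\vx^*)} \ge \lambda \lv \vy_t - \vx^*\rv_2^2$. A standard localization argument (leveraging linear growth of $\nabla U$ from $L$-smoothness) shows the stochastic integral is a true martingale; taking expectations yields the linear ODE inequality $\dot m(t) \le -2\lambda m(t) + 2d$, and Grönwall with $m(0) = \lv \vx - \vx^*\rv_2^2$ gives the claim.

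For part (2), I would pass $t \to \infty$ in part (1). Since $U$ is $L$-smooth and $\lambda$-strongly convex, the diffusion has a spectral gap and $q_t \Rightarrow p^*$, while the uniform moment bound from (1) supplies enough uniform integrability to conclude $\EE_{q_t}V \to \EE_{p^*}V$. The right-hand side of part (1) converges to $d/\lambda$, giving the result. (Alternatively, an integration-by-parts/Brascamp--Lieb argument directly against $p^*$ works, but routing through (1) is more economical.)

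For part (3), I would use a synchronous coupling. Let $\vy_t^{(1)}$ solve the SDE with $\vy_0^{(1)} = \vx$, and let $\vy_t^{(2)}$ solve the same SDE driven by the same Brownian motion $\mB_t$ with $\vy_0^{(2)} \sim p^*$ realizing the optimal $W_2$-coupling between $\delta_\vx$ and $p^*$. The noise cancels in $\Delta_t := \vy_t^{(1)} - \vy_t^{(2)}$, yielding the pathwise ODE $\dot\Delta_t = -(\nabla U(\vy_t^{(1)}) - \nabla U(\vy_t^{(2)}))$. Strong convexity gives $\frac{d}{dt}\lv \Delta_t\rv_2^2 \le -2\lambda \lv \Delta_t\rv_2^2$, hence $\lv \Delta_t\rv_2 \le e^{-\lambda t} \lv \Delta_0\rv_2$ pathwise. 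Since the law of $\vy_t^{(2)}$ is $p^*$ for all $t \ge 0$, the pair $(\vy_t^{(1)}, \vy_t^{(2)})$ is a coupling of $q_t$ and $p^*$, so
\begin{equation*}
W_2(q_t, p^*) \le \sqrt{\EE \lv \Delta_t\rv_2^2} \le e^{-\lambda t} \sqrt{\EE_{\vy \sim p^*} \lv \vx - \vy\rv_2^2}.
\end{equation*}
Finally, Minkowski's inequality together with part (2) gives $\sqrt{\EE_{\vy \sim p^*} \lv \vx - \vy\rv_2^2} \le \lv \vx - \vx^* \rv_2 + \sqrt{d/\lambda}$, which completes the proof.

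The main technical obstacle is the rigorous justification of the Itô-plus-expectation step: one must control integrability of $\nabla U(\vy_t)$ along trajectories to promote the stochastic integral from a local martingale to a genuine martingale. Under $L$-smoothness this reduces to propagating moment bounds with a stopping-time argument, which is standard but worth noting. The coupling and Grönwall steps are otherwise routine.
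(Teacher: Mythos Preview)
Your proof is correct and follows the standard route (It\^o's formula plus Gr\"onwall for part~(1), passing to the limit for part~(2), and a synchronous coupling for part~(3)); this is essentially how the result is established in \cite{durmus2016high}. Note, however, that the present paper does not supply its own proof of this statement: Theorem~\ref{thm:durmus-1} is quoted verbatim from \cite[Proposition~1]{durmus2016high} in the ``Additional background'' appendix and is used as a black box, so there is no in-paper argument to compare against.
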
   

Finally, we provide a slight modification of~\cite[Lemma 5]{dalalyan2017theoretical} that we use in the proof of Theorem \ref{thm:tv-composite-weaklysmooth}.
\begin{lemma} \label{lemma:continuoustimecontract-tv-composite} Let the function $\bar{U} = U + \psi$,  where U is $(L,\alpha)$-weakly smooth and $\psi$ is $m$-smooth and $\lambda$-strongly convex. If the initial iterate is chosen as $\vy_0 \sim q_0 = \mathcal{N}(\vx^* ,(M+m)^{-1} I_{d\times d})$ and $\bar{p}^* \propto \exp(-\bar{U}),$ then:
\begin{align*}
    \lv q_t - \bar{p}^* \rv_{\TV} \le \exp\left\{\frac{d}{4}\log\left( \frac{M+m}{\lambda}\right) + \frac{\delta}{4} - \frac{t\lambda}{2}\right\},
\end{align*}
where $q_t$ is the distribution of $\vy_t$ that evolves according to \eqref{eq:cont-langevin-diffusion}.
\end{lemma}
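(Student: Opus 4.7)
The plan is to follow closely the argument of \cite[Lemma 5]{dalalyan2017theoretical}, with the only change being that the (nonexistent) exact quadratic upper bound on $\bar U$ is replaced by the inexact one provided by Lemma~\ref{lemma:smooth-approx-for-weak-sm}. The proof proceeds in three steps: (i) exponential contraction in KL along the diffusion; (ii) a bound on the initial KL; (iii) conversion to total variation.

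For Step (i), observe that since $U$ is convex and $\psi$ is $\lambda$-strongly convex, the composite potential $\bar U$ is itself $\lambda$-strongly convex. By the Bakry--\'Emery criterion, the invariant measure $\bar p^* \propto e^{-\bar U}$ satisfies a logarithmic Sobolev inequality with constant $\lambda$; combined with de Bruijn's identity for the Fokker--Planck flow driven by~\eqref{eq:cont-langevin-diffusion}, this yields
$$\KL(q_t|\bar p^*) \leq e^{-2\lambda t}\,\KL(q_0|\bar p^*).$$

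For Step (ii), write $f^*(\vy) := \bar U(\vy) - \bar U(\vx^*)$ (interpreting $\vx^*$ as the minimizer of $\bar U$, so that $\nabla \bar U(\vx^*) = 0$) and split
$$\KL(q_0|\bar p^*) = -H(q_0) + \EE_{\vy \sim q_0}[f^*(\vy)] + \log\int_{\RR^d} e^{-f^*(\vy)}\,\dd\vy.$$
The Gaussian entropy is $H(q_0) = \frac{d}{2}\log(2\pi e/(M+m))$. For the second term, adding the inexact $M$-smooth majorant for $U$ from Lemma~\ref{lemma:smooth-approx-for-weak-sm} to the $m$-smooth majorant for $\psi$ at $\vx^*$ gives $f^*(\vy) \leq \frac{M+m}{2}\|\vy - \vx^*\|_2^2 + \frac{\delta}{2}$, whence $\EE_{q_0}[f^*] \leq \frac{d}{2} + \frac{\delta}{2}$ because $\EE_{q_0}\|\vy - \vx^*\|_2^2 = d/(M+m)$. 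For the third term, $\lambda$-strong convexity yields $f^*(\vy) \geq \frac{\lambda}{2}\|\vy-\vx^*\|_2^2$, so $\int e^{-f^*} \leq (2\pi/\lambda)^{d/2}$. The Gaussian normalizers cancel and one is left with
$$\KL(q_0|\bar p^*) \leq \frac{d}{2}\log\frac{M+m}{\lambda} + \frac{\delta}{2}.$$

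For Step (iii), Pinsker's inequality and the bound from Step~(i) give $\|q_t - \bar p^*\|_{\TV} \leq e^{-\lambda t}\sqrt{R/2}$ with $R := \frac{d}{2}\log((M+m)/\lambda) + \frac{\delta}{2}$. Applying the elementary estimate $\sqrt{R/2} \leq e^{R/2}$ (valid for all $R \geq 0$ since $e^{R/2} \geq 1 + R/2 \geq \sqrt{R/2}$) absorbs the square root into an exponential and yields the stated bound; the slightly weakened decay rate $\lambda/2$ in the exponent is the only looseness incurred in this last step. The main obstacle is Step~(ii): because $\bar U$ admits no exact quadratic upper bound when $\alpha < 1$, one must invoke Lemma~\ref{lemma:smooth-approx-for-weak-sm}, and the resulting additive error $\delta/2$ is precisely what appears as $\delta/4$ in the exponent of the final inequality.
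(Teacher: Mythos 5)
Your proof is correct but arrives at the bound by a genuinely different route than the paper. Where you obtain exponential decay of the Kullback--Leibler divergence through a logarithmic Sobolev inequality (Bakry--\'Emery) combined with de~Bruijn's identity, and then convert to total variation via Pinsker, the paper instead bounds the $\chi^2$-divergence $\chi^2(q_0\,|\,\bar p^*)$ directly by writing $\bar p^*(\vy)^{-1}$ with the same inexact quadratic majorant, and then appeals to \cite[Lemma 1]{dalalyan2017theoretical}, which gives $\|q_t - \bar p^*\|_{\TV} \le \tfrac{1}{2}e^{-t\lambda/2}\chi^2(q_0\,|\,\bar p^*)^{1/2}$ for strongly log-concave targets. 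The two arguments share the core idea---sandwiching the shifted potential between $\tfrac{\lambda}{2}\|\cdot\|_2^2$ and $\tfrac{M+m}{2}\|\cdot\|_2^2 + \tfrac{\delta}{2}$ to control the initial divergence against the Gaussian $q_0$---and both pick up the $\tfrac{\delta}{4}$ term from the inexact upper bound. Your route is slightly more self-contained (it does not import the $\chi^2$-contraction lemma) and in fact yields the \emph{stronger} decay rate $e^{-\lambda t}$; the remark at the end about ``the slightly weakened decay rate $\lambda/2$'' is a small misreading of your own computation, since $-\lambda t \le -\lambda t/2$ for $t\ge 0$ and the weakening is gratuitous. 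The paper's route is closer to the Dalalyan framework it cites and avoids invoking LSI/de Bruijn, which require some care when $\bar U$ is not twice differentiable (this regularity concern is also present, implicitly, in the paper's use of the Fokker--Planck evolution, so neither approach is clearly cleaner on that front). Either proof suffices; the paper keeps the prefactor $\tfrac{1}{2}$ in the proof but drops it in the lemma statement, so both give a bound at least as strong as what is claimed.
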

\begin{proof}
Using the definition of $p^*$,
\begin{align*}
    \bar{p}^*(\vy)^{-1} =&\; e^{\bar{U}(\vy)} \int_{\mathbb{R}^d} e^{-\bar{U}(\vz)} \dd \vz =e^{\bar{U}(\vy)-\bar{U}(\vx^*)} \int_{\mathbb{R}^d} e^{-\bar{U}(\vz) + \bar{U}(\vx^*)} \dd \vz  \\
     \le & \exp\left\{\langle \nabla \bar{U}(\vx^*), \vy-\vx^* \rangle +\frac{M+m}{2}\lv \vy-\vx^* \rv_2^2 + \frac{\delta}{2}\right\}\cdot\int_{\mathbb{R}^d}\exp\left\{-\langle \nabla \bar{U}(\vx^*),\vz-\vx^*\rangle - \frac{\lambda}{2}\lv \vz - \vx^*\rv_2^2 \right\}\dd \vz\\
     \le & \left(\frac{2\pi}{\lambda}\right)^{d/2} \exp\left\{\frac{M+m}{2}\lv \vy - \vx^* \rv_2^2 + \frac{\delta}{2}\right\} .
\end{align*}
Thus, we have that the $\chi^2$-divergence between $q_0$ and $p^*$ is bounded by
\begin{align*}
    \chi^2 \left(q_0\lvert \bar{p}^*\right)
    &= \mathbb{E}_{\vy \sim \bar{p}^*} \left[\left(\frac{q_0(\vy)}{\bar{p}^*(\vy)}\right)^2\right]\\
    & = \left(\frac{2 \pi}{M+m}\right)^{-d} \int_{\mathbb{R}^d} \exp\left\{-(M+m)\lv \vy - \vx^* \rv_2^2\right\}p^*(\vy)^{-1} \dd \vy \\
    & \le \exp(\delta/2)\left(\frac{2 \pi}{M+m}\right)^{-d} \left(\frac{2\pi}{\lambda}\right)^{d/2} \int_{\mathbb{R}^d} \exp\left\{ -\frac{M+m}{2}\lv \vy - \vx^* \rv_2^2 \right\}\dd \vy \\
    & \le \exp(\delta/2)\left(\frac{2 \pi}{M+m}\right)^{-d} \left(\frac{2\pi}{\lambda}\right)^{d/2} \left(\frac{2\pi}{M+m}\right)^{d/2}\\
    & \le \exp(\delta/2) \left(\frac{M+m}{\lambda}\right)^{d/2}.
\end{align*}
By \cite[Lemma 1]{dalalyan2017theoretical} (which only relies on the strong convexity of $\bar{U}$), we know that:
\begin{align*}
    \lv q_t - \bar{p}^* \rv_{\TV} & \le \frac{\exp(-t\lambda/2)}{2} \chi^2(q_0 \lvert \bar{p}^*)^{1/2} , \qquad \forall t \ge 0.
\end{align*}
Combining this with the upper bound on the initial $\chi^2$ divergence completes the proof.
\end{proof}
\section{Proofs for Gaussian smoothing}

\lemsmoothingfunerr*
\begin{proof}$ $\newline
\noindent\textbf{Proof of Part (i).}
First, it is not hard to show that whenever $U$ is convex and $\mu > 0,$ $U_{\mu}(\vx) \geq U(\vx),$ $\forall \vx.$ 
By the definition of $U_\mu$ and using that $\vxi$ is centered, we have:
$$
U_\mu(\vx) - U(\vx) = \frac{1}{(2\pi)^{d/2}}\int_{\RR^d} \big[U(\vx + \mu \vxi) - U(\vx) - \mu \innp{\nabla U(\vx), \vxi}\big]e^{-\|\vxi\|_2^2/2}\dd \vxi.
$$
Applying Eq.~\eqref{eq:weak-smoothness}:
$$
|U_\mu(\vx) - U(\vx)| \leq \frac{L}{1+\alpha}\mu^{1+\alpha}\frac{1}{(2\pi)^{d/2}}\int_{\RR^d} \|\vxi\|_2^{1+\alpha}e^{-\|\vxi\|_2^2/2}\dd \vxi.
$$
Finally, using~\cite[Lemma 1]{nesterov2017random}, $\frac{1}{(2\pi)^{d/2}}\int_{\RR^d} \|\vxi\|^{1+\alpha}e^{-\|\vxi\|_2^2/2}\dd \vxi \leq d^{(1+\alpha)/2}.$

%
%
%
\noindent\textbf{Proof of Part (ii).} 
First, observe that, by Jensen's inequality and Eq.~\eqref{eq:def-holder-cont-grad}:
\begin{equation}\label{eq:holder-smoothness-of-U-mu}
    \begin{aligned}
        \|\nabla U_\mu(\vy) - \nabla U_\mu(\vx)\|_2 &\leq \frac{1}{(2\pi)^{d/2}}\int_{\RR^d} \|\nabla U(\vy + \mu \vxi) - \nabla U(\vx + \mu \vxi)\|_2 e^{-\|\vxi\|_2^2/2}\dd \vxi \\
        &\leq L \|\vy - \vx\|_2^{\alpha}.
    \end{aligned}
\end{equation}
Further, by~\cite[Eq.~(21)]{nesterov2017random}, the gradient of $U_\mu$ can be expressed as:
$$
\nabla U_\mu(\vx) = \frac{1}{\mu (2\pi)^{d/2}}\int_{\RR^d} U(\vx + \mu \vxi) \vxi e^{-\|\vxi\|_2^2/2}\dd \vxi. 
$$

Thus, applying Jensen's inequality, we also have:
\begin{equation}\label{eq:smoothness-with-grad-approx}
    \begin{aligned}
        \|\nabla U_\mu(\vy) - \nabla U_\mu(\vx)\|_2 &\leq \frac{1}{\mu (2\pi)^{d/2}}\int_{\RR^d} |U(\vx + \mu \vxi) - U(\vy+ \mu \vxi)|\cdot \|\vxi\|_2 e^{-\|\vxi\|_2^2/2}\dd \vxi.
    \end{aligned}
\end{equation}
Using Eq.~\eqref{eq:weak-smoothness}, we have that:
\begin{equation}\notag
    \begin{aligned}
        |U(\vx + \mu \vxi) - U(\vy+ \mu \vxi)| \leq & \min\Big\{\innp{\nabla U(\vy+\mu \vxi), \vx - \vy} + \frac{L}{1+\alpha}\|\vy - \vx\|_2^{1+\alpha},\\
        &\hspace{1cm}\innp{\nabla U(\vx + \mu \vxi), \vy - \vx} + \frac{L}{1+\alpha}\|\vy - \vx\|_2^{1+\alpha}\Big\}\\
        \leq & \frac{1}{2}\innp{\nabla U(\vy+\mu \vxi) - \nabla U(\vx + \mu \vxi), \vx - \vy} + \frac{L}{1+\alpha}\|\vy - \vx\|_2^{1+\alpha}\\
        \leq &\frac{L}{1+\alpha}\|\vy - \vx\|_2^{1+\alpha},
    \end{aligned}
\end{equation}
where the second inequality comes from the minimum being smaller than the mean, and the last inequality is by convexity of $U$ (which implies $\innp{\nabla U(\vx) - \nabla U(\vy), \vx - \vy} \geq 0,$ $\forall \vx, \, \vy$).
Thus, combining with Eq.~\eqref{eq:smoothness-with-grad-approx}, we have:
\begin{equation}\label{eq:smoothness-2}
\begin{aligned}
    \|\nabla U_\mu(\vy) - \nabla U_\mu(\vx)\|_2 &\leq \frac{L}{\mu(1+\alpha)}\|\vy - \vx\|_2^{1+\alpha} \frac{1}{(2\pi)^{d/2}}\int_{\RR^d} \|\vxi\|_2 e^{-\|\vxi\|_2^2/2}\dd \vxi\\
    &= \frac{L}{\mu(1+\alpha)}\|\vy - \vx\|_2^{1+\alpha} d^{1/2}.
\end{aligned}
\end{equation}
Finally, combining Eqs.~\eqref{eq:holder-smoothness-of-U-mu} and~\eqref{eq:smoothness-2}:
\begin{align*}
    \|\nabla U_\mu(\vy) - \nabla U_\mu(\vx)\|_2 &= \|\nabla U_\mu(\vy) - \nabla U_\mu(\vx)\|_2^{\alpha} \cdot \|\nabla U_\mu(\vy) - \nabla U_\mu(\vx)\|_2^{1-\alpha}\\
    &\leq L^{\alpha} \Big(\frac{L d^{1/2}}{\mu(1+\alpha)}\Big)^{1-\alpha}\|\vy - \vx\|_2\\
    &= \frac{L d^{\frac{1-\alpha}{2}}}{\mu^{1-\alpha} (1+\alpha)^{1-\alpha}} \|\vy - \vx\|_2,
\end{align*}
as claimed.
\end{proof}

\lemgstrsc*
\begin{proof}
By the definition of a Gaussian smoothing, $\forall \vx, \vy \in \RR^d$:
\begin{align*}
    \psi_\mu(\vy) - &\psi_{\mu}(\vx) - \innp{\nabla \psi_\mu(\vx), \vy - \vx}\\
    &= \frac{1}{(2\pi)^{d/2}}\int_{\RR^d}\Big( \psi(\vy + \mu \vxi) - \psi(\vx + \mu \vxi) - \innp{\nabla \psi(\vx + \mu \vxi), \vy - \vx}\Big)e^{-\|\vxi\|_2^2/2}\dd\vxi\\
    &\geq \frac{1}{(2\pi)^{d/2}} \int_{\RR^d}\frac{\lambda}{2}\|\vy - \vx\|^2 e^{-\|\vxi\|_2^2/2}\dd\vxi\\
    &= \frac{\lambda}{2}\|\vy - \vx\|^2,
\end{align*}
where we have used $\lambda$-strong convexity of $\psi.$
\end{proof}
\section{Mixing times for deterministic approximations of negative log-density}\label{app:det-approx}

In this section, we analyze the convergence of Langevin diffusion in the \emph{2-Wasserstein} distance and \emph{total variation} distance for target distributions of the form $\bar{p}^* \propto e^{-\bar{U}(\vx)},$ where $\bar{U}(\cdot) = U(\cdot) + \psi(\cdot)$, $U(\cdot)$ is $(L, \alpha)$-weakly-smooth, and $\psi(\cdot)$ is $m$-smooth and $\lambda$-strongly convex. 
The techniques we use here are an extension of similar techniques used previously by~\cite{dalalyan2017theoretical,durmus2016high}.

To analyze the convergence, in both cases we will use a coupling argument that  bounds the discretization error after Euler-Mayurama discretization is applied to the Langevin diffusion. Consider the first process which describes the exact continuous time process:
\begin{align}\label{def:continuousprocess}
    \dd\vx_t = - \nabla \bar{U}(\vx_t) \dd t + \sqrt{2}\dd\mB_t,
\end{align}
with initial condition $\vx_0 \sim p_0 \equiv q_0$. Let the distribution of $\vx_t$ be denoted by $q_t$. Let $p_0 \mathbb{P}_t$ denote the distribution of the entire path $\{\vx_s\}_{s=0}^t$. Consider a second process that describes the Euler-Mayurama discretization of \eqref{def:continuousprocess},
\begin{align}
    \label{def:discreteprocess}
    \dd{\tilde{\vx}}_t = -\vb_t(\tilde{\vx}_t) \dd t + \sqrt{2}\dd \mB_t,
\end{align}
with the same initial condition $\tilde{\bf{x}}_0  \sim p_0 \equiv q_0$,  $\vb_t(\tilde{\vx}_t) = \sum_{k=0}^{\infty}\nabla \bar{U}(\tilde{\vx}_{k\eta})\cdot \mathbb{I}\left[t\in [k\eta,(k+1)\eta)\right]$, and the same Brownian motion (synchronous coupling). Let the distribution of $\tilde{\vx}_t$ be denoted by $\tilde{p}_t.$ 

We will analyze the following (Langevin) iterative algorithm for which the initial point $\vx_0$ satisfies $\vx_0 \sim p_0 \equiv q_0$ and the $k^{\mathrm{th}}$ iterate is given by:
    \begin{align}\label{eq:discr-langevin}
        \vx_k = \vx_{k-1} -\eta \nabla \bar{U}(\vx_{k-1})+\sqrt{2\eta}\vxi_k,
    \end{align}
    where $\vxi_k \sim \mathcal{N}(\mathbf{0}, I_{d\times d}).$ Observe that this algorithm corresponds to the discretized process~\eqref{def:discreteprocess} with a fixed step size $\eta,$ and thus we will use $\bar{p}_k$ to denote the distribution of $\vx_k$.

\subsection{Guarantees for Wasserstein distance}

%
Define the difference process between $\vx_t$ and $\tilde{\vx}_t$ as the process $\vz_t$ which evolves according to:
\begin{align*}
    \dd\vz_t = - \left( \nabla \bar{U}(\vx_t) - \nabla \bar{U}(\vx_0)\right)\dd t.
\end{align*}
To bound the discretization error, here we want to bound the 2-Wasserstein distance between the distributions $q_{\eta},\tilde{p}_{\eta}$ after one step of size $\eta$. This is established in the following lemma.

\begin{lemma}[Discretization Error]\label{lemma:wasserstein-discr-err}
Let $\vx_0 \sim p_0$ for some probability distribution $p_0$ and $\tilde{\vx}_0 = \vx_0$ . Let $q_{\eta}$ denote the distribution of point $\vx_t$ defined by the process~\eqref{def:continuousprocess} and $\tilde{p}_t$ denote the distribution of point $\tilde{\vx}_t$ defined by the process~\eqref{def:discreteprocess}, as described above. If $\eta < 1/(2\lambda)$ then:
\begin{align*}
W_2^2(q_{\eta},\tilde{p}_{\eta}) &\leq \frac{8 d (M + m)^4 }{\lambda}\eta^4+8(M+m)^4\eta^4 W_2^{2}(p_0,\bar{p}^*)\\ & \qquad \qquad \qquad \qquad + 32 \delta (M + m)^2 M \eta^4 + 4d(M + m)^2 \eta^3 + 8 \delta M \eta^2. 
\end{align*}
\end{lemma}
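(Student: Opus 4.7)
\textbf{Proof plan for Lemma~\ref{lemma:wasserstein-discr-err}.}

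The plan is to use the standard synchronous coupling between the continuous process~\eqref{def:continuousprocess} and its Euler--Maruyama discretization~\eqref{def:discreteprocess}: take the same initial point $\vx_0 = \tilde{\vx}_0 \sim p_0$ and the same Brownian motion $\mB$. Then $\tilde{\vx}_\eta = \vx_0 - \eta \nabla \bar{U}(\vx_0) + \sqrt{2}\mB_\eta$ while $\vx_\eta = \vx_0 - \int_0^\eta \nabla \bar{U}(\vx_s)\dd s + \sqrt{2}\mB_\eta$, so the Brownian terms cancel and
\[
\vx_\eta - \tilde{\vx}_\eta = -\int_0^\eta\bigl[\nabla \bar{U}(\vx_s) - \nabla \bar{U}(\vx_0)\bigr]\dd s.
\]
Since this pair is a (not necessarily optimal) coupling of $q_\eta$ and $\tilde{p}_\eta$, $W_2^2(q_\eta,\tilde{p}_\eta) \le \mathbb{E}\|\vx_\eta - \tilde{\vx}_\eta\|_2^2$, and a Cauchy--Schwarz application in the time integral gives the key estimate $W_2^2(q_\eta,\tilde{p}_\eta) \le \eta \int_0^\eta \mathbb{E}\|\nabla \bar{U}(\vx_s) - \nabla \bar{U}(\vx_0)\|_2^2\,\dd s$.

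Next I would invoke the deterministic Lipschitz approximation of $\nabla \bar{U}$ from Eq.~\eqref{eq:Ubarsmoothness} to get $\|\nabla \bar{U}(\vx_s) - \nabla \bar{U}(\vx_0)\|_2 \le (M+m)\|\vx_s - \vx_0\|_2 + 2\sqrt{\delta M}$, and square this using $(a+b)^2 \le 2a^2 + 2b^2$. This reduces everything to bounding $\mathbb{E}\|\vx_s - \vx_0\|_2^2$ for $s \in [0,\eta]$. Using the integral form $\vx_s - \vx_0 = -\int_0^s \nabla \bar{U}(\vx_u)\dd u + \sqrt{2}\mB_s$, together with $(a+b)^2 \le 2a^2 + 2b^2$, Cauchy--Schwarz on the drift integral, and the It\^o isometry $\mathbb{E}\|\mB_s\|_2^2 = sd$, produces
\[
\mathbb{E}\|\vx_s - \vx_0\|_2^2 \le 2s\int_0^s \mathbb{E}\|\nabla \bar{U}(\vx_u)\|_2^2\,\dd u + 4sd.
\]

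The main remaining ingredient is to bound $\mathbb{E}\|\nabla \bar{U}(\vx_u)\|_2^2$ uniformly in $u \in [0,\eta]$. Here I would use that $\vx^* \in \argmin \bar{U}$ satisfies $\nabla \bar{U}(\vx^*) = 0$ and apply Eq.~\eqref{eq:Ubarsmoothness} again, yielding $\|\nabla \bar{U}(\vx_u)\|_2^2 \le 2(M+m)^2 \|\vx_u - \vx^*\|_2^2 + 8\delta M$. To control $\mathbb{E}\|\vx_u - \vx^*\|_2^2$ along the continuous diffusion, I would apply part~(1) of Theorem~\ref{thm:durmus-1} conditionally on $\vx_0$, which gives $\mathbb{E}[\|\vx_u - \vx^*\|_2^2 \mid \vx_0] \le \|\vx_0 - \vx^*\|_2^2 e^{-2\lambda u} + (d/\lambda)(1-e^{-2\lambda u}) \le \|\vx_0 - \vx^*\|_2^2 + d/\lambda$. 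Averaging over $\vx_0 \sim p_0$, and using the elementary bound $\mathbb{E}_{p_0}\|\vx_0 - \vx^*\|_2^2 \le 2 W_2^2(p_0, \bar{p}^*) + 2\int \|\vx - \vx^*\|_2^2 \bar{p}^*(\vx)\dd\vx \le 2 W_2^2(p_0, \bar{p}^*) + 2d/\lambda$ (from part~(2) of Theorem~\ref{thm:durmus-1}), yields $\mathbb{E}\|\vx_u - \vx^*\|_2^2 \le 2 W_2^2(p_0, \bar{p}^*) + 3d/\lambda$.

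Substituting this back into the integral-of-integrals chain and combining the resulting terms gives the claimed five-term bound: the $8\delta M \eta^2$ comes directly from the bias term $2\sqrt{\delta M}$ in the first application of the Lipschitz approximation; the $4d(M+m)^2\eta^3$ comes from the It\^o isometry $4sd$ contribution propagating once; and the three $\eta^4$ terms arise from the drift contribution carrying the quadratic $\|\vx_u - \vx^*\|_2^2$ bound (producing the $W_2^2(p_0,\bar{p}^*)$ and $d/\lambda$ terms) together with the $8\delta M$ term (producing the $\delta(M+m)^2 M$ piece). The hypothesis $\eta < 1/(2\lambda)$ is only used to keep the exponential factors $e^{-2\lambda s} \le 1$ on $[0,\eta]$ benign. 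The main obstacle is bookkeeping: keeping the various $(a+b)^2 \le 2a^2 + 2b^2$ splits and constants aligned so that the final absolute constants match those in the statement, which may require some slack (the plan naturally gives slightly better constants than $8$ and $32$ in two of the terms, so the claimed bound holds a fortiori).
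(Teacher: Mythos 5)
Your plan is correct and follows essentially the same route as the paper's proof: synchronous coupling, Cauchy--Schwarz on the time integral, the inexact Lipschitz bound~\eqref{eq:Ubarsmoothness} with Young's inequality (applied twice, once to $\nabla\bar{U}(\vx_s)-\nabla\bar{U}(\vx_0)$ and once via $\nabla\bar{U}(\vx^*)=0$), It\^o isometry for the Brownian part, and Theorem~\ref{thm:durmus-1} parts~(1) and~(2) with a $W_2$-coupling argument to control $\mathbb{E}\lv\vx_u-\vx^*\rv_2^2$. The only cosmetic difference is that you drop the factor $e^{-2\lambda u}\le 1$ immediately rather than carrying it and later using $1-e^{-2\lambda t}\le 2\lambda t$ as the paper does; this slightly loosens the $W_2^2$-coefficient but, as you note, still lands within the stated constants, so the lemma follows.
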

\begin{proof}
By the definition of Wasserstein distance (as the infimum over all couplings) we have,
\begin{align*}
    W_2^2(q_{\eta},\tilde{p}_{\eta}) \le \mathbb{E}\left[ \lv \tilde{\vx}_{\eta} - \vx_{\eta}\rv_2^2\right]  = \mathbb{E}\left[ \lv \vz_{\eta}\rv_2^2\right] & = \mathbb{E}\left[ \left\lv -\int_{0}^{\eta} \left( \nabla \bar{U}(\vx_t) - \nabla \bar{U}(\vx_0)\right)\dd t\right\rv_2^2\right] \\
    & \le \eta \int_0^{\eta}\mathbb{E}\left[ \left\lv  \left( \nabla \bar{U}(\vx_t) - \nabla \bar{U}(\vx_0)\right)\right\rv_2^2\right] \dd t,
\end{align*}
where we have used Jensen's inequality. Continuing by using the smoothness of $\bar{U}$ (Eq. \eqref{eq:Ubarsmoothness}) and applying Young's inequality ($(a+b)^2 \le 2a^2 + 2b^2$), we get,
\begin{align*}
     W_2^2(q_{\eta},\tilde{p}_{\eta}) & \le \eta \int_{0}^{\eta} \mathbb{E}\left[2(M+m)^2 \lv \vx_t - \vx_0 \rv_2^2 + 8\delta M\right] \dd t\\
     & = 2(M+m)^2 \eta \int_{0}^{\eta} \mathbb{E}\left[ \lv \vx_t - \vx_0 \rv_2^2\right] \dd t + 8 \delta M \eta^2 \\
     & = 2(M+m)^2 \eta \int_{0}^{\eta} \mathbb{E}\left[ \left\lv \int_0^t \left(-\nabla \bar{U}(\vx_s)\dd s + \sqrt{2}\dd\mB_s\right) \right\rv_2^2\right] \dd t + 8 \delta M \eta^2,
\end{align*}
where the last equality is by the definition of the continuous process \eqref{def:continuousprocess}. 

By another application of Young's inequality:
\begin{align}
    \nonumber W_2^2 &(q_{\eta},\tilde{p}_{\eta}) - 8 \delta M \eta^2\\
    \nonumber &\leq  4 (M + m)^2 \eta \int_{0}^{\eta} \mathbb{E}\left[ \left\lv \int_0^t \nabla \bar{U}(\vx_s) \dd s  \right\rv_2^2\right] \dd t + 8 (M + m)^2 \eta \int_0^{\eta} \underbrace{\mathbb{E}\left[\lv \mB_t \rv_2^2\right]}_{= d \cdot t}\dd t \\
    \nonumber &\overset{(i)}{\le }  4 (M + m)^2\eta \int_{0}^{\eta} t \int_0^t \mathbb{E}\left[ \underbrace{\left\lv \nabla \bar{U}(\vx_s) - \nabla \bar{U}(\vx^*)  \right\rv_2^2}_{\le 2 (M + m)^2 \lv \vx_s - \vx^* \rv_2^2 + 16 \delta M}\right] \dd s \dd t + 4d(M + m)^2 \eta^3 \\
    & \le 16 (M + m)^4\eta \int_0^\eta t \int_0^t \mathbb{E}\left[\lv \vx_s - \vx^* \rv_2^2 \right] \dd s \dd t + 32 \delta (M + m)^2 M \eta^4 + 4d(M + m)^2 \eta^3, \label{eq:wassersteinbounddiscrete-onestep}
    \end{align}
 where $(i)$ is again by Jensen's inequality. We now look to bound the term $\mathbb{E}\left[\lv \vx_s - \vx^* \rv_2^2\right]$. By \cite[Proposition 1]{durmus2016high} (see the first part of Theorem \ref{thm:durmus-1}) we have
 \begin{align*}
     \mathbb{E}\left[\lv \vx_s - \vx^* \rv_2^2\right] \le \frac{d}{\lambda} + e^{-2\lambda s} \mathbb{E}\left[\lv \vx_0 - \vx^* \rv_2^2\right] = \frac{d}{\lambda} + e^{-2\lambda s} \mathbb{E}_{\vy \sim p_0} \left[\lv \vy - \vx^* \rv_2^2\right].
 \end{align*}
 Let $\vz \sim \bar{p}^*$ and assume it is optimally coupled to $\vy$,  so that $\mathbb{E}_{\vy \sim p_0,\vz \sim \bar{p}^*}\left[\lv \vy - \vz\rv_2^2\right] = W_2^2(p_0,\bar{p^*})$. Then, combining with the inequality above we get:
 \begin{align*}
     \mathbb{E}\left[\lv \vx_s - \vx^* \rv_2^2\right] &\le \frac{d}{\lambda} + e^{-2\lambda s} \mathbb{E}_{\vy \sim p_0} \left[\lv \vy - \vx^* \rv_2^2\right] \\
     & \le \frac{d}{\lambda} + e^{-2\lambda s} \mathbb{E}_{\vy \sim p_0,\vz \sim \bar{p}^*} \left[\lv \vy - \vz + \vz - \vx^* \rv_2^2\right]\\
     & \le \frac{d}{\lambda} + 2e^{-2\lambda s} \mathbb{E}_{\vy \sim p_0,\vz \sim \bar{p}^*} \left[\lv \vy - \vz  \rv_2^2\right] + 2  \mathbb{E}_{\vz \sim \bar{p}^*} \left[\lv \vz - \vx^* \rv_2^2\right] \\
     & \le \frac{3d}{\lambda} + 2e^{-2\lambda s}W_2^{2}(p_0,\bar{p}^*),
 \end{align*}
 where in the last inequality the bound on $\mathbb{E}_{\vz \sim \bar{p}^*}\left[\lv \vz - \vx^*\rv_2^2\right]\le d/\lambda$ follows from \cite[Proposition~1]{durmus2016high} (see second part of Theorem \ref{thm:durmus-1} in Appendix \ref{app:auxiliary}). Plugging this into \eqref{eq:wassersteinbounddiscrete-onestep} we get
    \begin{align*}
    W_2^2 (q_\eta, \tilde{p}_{\eta}) & \le \frac{48 d (M + m)^4 }{\lambda}\eta\int_0^\eta t \int_0^t \dd s \dd t + 16 (M + m)^4 \eta \cdot W_2^{2}(p_0,\bar{p}^*)\int_0^\eta t (1-e^{-2\lambda t}) \dd t
\\& \qquad \qquad +32 \delta (M + m)^2 M \eta^4 + 4d(M + m)^2 \eta^3+  8 \delta M \eta^2 \\
& \overset{(i)}{\le }\frac{48 d (M + m)^4 }{\lambda}\eta\int_0^\eta t \int_0^t \dd s \dd t + 16 (M + m)^4 \eta \cdot W_2^{2}(p_0,\bar{p}^*)\int_0^\eta t^2  \dd t
\\& \qquad \qquad +32 \delta (M + m)^2 M \eta^4 + 4d(M + m)^2 \eta^3+  8 \delta M \eta^2 \\
    & \le \frac{8 d (M + m)^4 }{\lambda}\eta^4+8(M+m)^4\eta^4 W_2^{2}(p_0,\bar{p}^*)\\ & \qquad \qquad \qquad + 32 \delta (M + m)^2 M \eta^4  +4d(M + m)^2 \eta^3+ 8 \delta M \eta^2.
\end{align*}
where $(i)$ follows as $\eta < 1/(2\lambda)$ which implies $1-e^{-2\lambda t} \le 2\lambda t$.
\end{proof}

\begin{theorem}
Let $\bar{p}_k$ denote the distribution of the $k^{\mathrm{th}}$ iterate $\vx_k$ of the discrete Langevin Monte Carlo given by Eq.~\eqref{eq:discr-langevin}, where $\vx_0 \sim \bar{p}_0.$ If the step size $\eta$ satisfies:
 $$ 
    \eta \le \min\bigg\{\frac{\min\{\lambda, \lambda^2\}\varepsilon^2}{90000 d}\cdot \frac{1}{\big((\frac{24}{\lambda \varepsilon})^{\frac{1-\alpha}{\alpha}}L^{\frac{1}{\alpha}}+m \big)},\, \frac{1}{2\lambda},\, \frac{\lambda}{36(M+m)}\bigg\},
    $$ 
  then, for any  
    \begin{align*}
        k \ge \frac{720000 d}{\min\{1, \lambda\} \varepsilon^2 \lambda^2}\left(\left(\frac{24}{\lambda \varepsilon}\right)^{\frac{1-\alpha}{\alpha}}L^{\frac{1}{\alpha}}+m \right) \log\left(\frac{W_2(\bar{p}_0,\bar{p}^*)}{\varepsilon} \right),
    \end{align*}
    we have that $W_2 (\bar{p}_k, \bar{p}^*) \leq \varepsilon,$ where $\bar{p}^* \propto e^{-\bar{U}}$ is the target distribution.
\end{theorem}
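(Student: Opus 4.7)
The plan is a standard coupling argument that combines continuous-time contraction with a one-step discretization error bound. I would first invoke Lemma~\ref{lemma:smooth-approx-for-weak-sm} with $\delta := (\lambda\varepsilon/24)^{(1+\alpha)/\alpha} L^{-1/\alpha}$. This choice makes the residual gradient bias $2\sqrt{\delta M}$ appearing in Eq.~\eqref{eq:Ubarsmoothness} of order $\lambda\varepsilon$, and yields the approximate Lipschitz constant $M = (24/(\lambda\varepsilon))^{(1-\alpha)/\alpha} L^{1/\alpha}$, which matches the $M+m$ appearing in the statement's step-size bound. With this choice, $\bar{U}$ behaves like an $(M+m)$-smooth, $\lambda$-strongly convex function modulo a small additive gradient bias that can be absorbed into the analysis.

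Fix $k\ge 1$ and couple two processes starting at the same sample $\vx_{k-1}\sim\bar{p}_{k-1}$ and driven by the same Brownian motion: the continuous diffusion~\eqref{def:continuousprocess} (whose time-$\eta$ marginal I denote $q_\eta$) and its Euler--Maruyama discretization~\eqref{def:discreteprocess} at time $\eta$ (whose marginal is $\bar{p}_k$). The triangle inequality gives $W_2(\bar{p}_k,\bar{p}^*)\le W_2(q_\eta,\bar{p}^*)+W_2(q_\eta,\bar{p}_k)$. By $\lambda$-strong convexity of $\bar{U}$ (inherited from $\psi$), the continuous diffusion contracts, $W_2(q_\eta,\bar{p}^*)\le e^{-\lambda\eta} W_2(\bar{p}_{k-1},\bar{p}^*)$ (an instance of Theorem~\ref{thm:durmus-1}, extended from a Dirac initial distribution to an arbitrary one by coupling). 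Lemma~\ref{lemma:wasserstein-discr-err} bounds $W_2(q_\eta,\bar{p}_k)$ by the square root of five terms; applying subadditivity of the square root, each contribution is either a constant independent of $a_{k-1}:=W_2(\bar{p}_{k-1},\bar{p}^*)$ (most notably $2\sqrt{d}\,(M+m)\eta^{3/2}$ from the Brownian term, and the bias $2\sqrt{2\delta M}\,\eta$) or linear in $a_{k-1}$ with coefficient of order $(M+m)^2\eta^2$. Setting $a_k:=W_2(\bar{p}_k,\bar{p}^*)$, these combine to
\begin{align*}
    a_k \le \bigl(e^{-\lambda\eta}+c_1\bigr)\,a_{k-1} + c_0,
\end{align*}
where $c_1 = O((M+m)^2\eta^2)$ and $c_0$ collects the remaining $\eta$-dependent constants.

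The three-way $\eta$-bound in the statement is engineered precisely to make this recurrence contract geometrically with fixed point at most $\varepsilon/2$. The constraint $\eta\le 1/(2\lambda)$ gives $e^{-\lambda\eta}\le 1-\lambda\eta/2$; the constraint $\eta\le\lambda/(36(M+m))$ gives $(M+m)\eta\le\lambda/36$, which I use to rewrite $(M+m)^2\eta^2=((M+m)\eta)\cdot(M+m)\eta\le(\lambda/36)(M+m)\eta$, thereby absorbing one factor of $(M+m)\eta$ into the $\lambda\eta$ budget and showing $c_1\le\lambda\eta/8$; and $\eta\le\min\{\lambda,\lambda^2\}\varepsilon^2/(90000\,d(M+m))$ together with the choice of $\delta$ above forces $c_0\le\lambda\eta\varepsilon/8$. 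The recurrence then reads $a_k\le(1-\lambda\eta/4)a_{k-1}+c_0$; unrolling gives $a_k\le(1-\lambda\eta/4)^k a_0+4c_0/(\lambda\eta)\le(1-\lambda\eta/4)^k a_0+\varepsilon/2$, and the stated lower bound $k\gtrsim(1/(\lambda\eta))\log(a_0/\varepsilon)$ drives the first term below $\varepsilon/2$, yielding $a_k\le\varepsilon$.

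The main obstacle is precisely the absorption step for $c_1$: a naive treatment of the $(M+m)^2\eta^2\,a_{k-1}$ term from Lemma~\ref{lemma:wasserstein-discr-err} would require $\eta\lesssim\lambda/(M+m)^2$, and recovering the stated linear-in-$(M+m)$ dependence requires the simultaneous use of both $\eta\le\lambda/(36(M+m))$ and $\eta\le 1/(2\lambda)$ in the bookkeeping. The second-moment bound $\int\|\vz-\vx^*\|_2^2\,\bar{p}^*(\dd\vz)\le d/\lambda$ from Theorem~\ref{thm:durmus-1}, already used in the proof of Lemma~\ref{lemma:wasserstein-discr-err}, is what ties the constants together. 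Finally, the $\varepsilon^{-(1-\alpha)/\alpha}$ blow-up of $M$ (and hence of the iteration count) as $\alpha\to 0$ is the exact limitation that motivates the Gaussian-smoothing approach developed in Section~\ref{sec:main-gaussian}.
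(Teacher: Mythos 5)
Your overall structure mirrors the paper's proof exactly: same choice of the smoothing parameter $\delta$ (yielding $M=(24/(\lambda\varepsilon))^{(1-\alpha)/\alpha}L^{1/\alpha}$ and bias $2\sqrt{\delta M}=\lambda\varepsilon/12$), same coupling, same recursion combining the continuous-time contraction $e^{-\lambda\eta}a_{k-1}$ with the one-step discretization bound of Lemma~\ref{lemma:wasserstein-discr-err}, and same unrolling. So the approach is correct and is the paper's own.

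There is, however, a concrete gap in your absorption step for $c_1$. You correctly read off from Lemma~\ref{lemma:wasserstein-discr-err} (the term $8(M+m)^4\eta^4\,W_2^2(p_0,\bar p^*)$) that, after taking a square root, the coefficient multiplying $a_{k-1}$ scales as $(M+m)^2\eta^2$. But then you write $(M+m)^2\eta^2\le(\lambda/36)(M+m)\eta$ and assert this is $\le\lambda\eta/8$. That last inequality would require $M+m\le 36/8=4.5$, which is false in general (indeed $M\to\infty$ as $\alpha\downarrow 0$). You suggest the remaining slack comes from $\eta\le 1/(2\lambda)$, but that constraint involves no $(M+m)$ and cannot help: using it instead gives $(M+m)^2\eta^2\le(M+m)^2\eta/(2\lambda)$, and $(M+m)^2/(2\lambda)\le\lambda/8$ would need $M+m\le\lambda/2$, which is impossible since $m\ge\lambda$. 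So, under the step-size constraints as stated, a coefficient of order $(M+m)^2\eta^2$ cannot be absorbed into a $\lambda\eta$ budget; one would need $\eta\lesssim\lambda/(M+m)^2$. For comparison, the paper's own proof at this step writes the coefficient as $9(M+m)\eta^2$ rather than $\sqrt{8}(M+m)^2\eta^2$ — with that linear-in-$(M+m)$ coefficient the constraint $\eta\le\lambda/(36(M+m))$ does suffice, so the paper and you are not in agreement on the power of $(M+m)$ here, and your (more careful) reading of Lemma~\ref{lemma:wasserstein-discr-err} exposes the tension rather than resolving it.
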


\begin{proof}
     By the triangle inequality, we have that at any step $k$:
    \begin{align*}
        &W_2(\bar{p}_k,\bar{p}^*) \le \underbrace{e^{-\lambda \eta}W_2(\bar{p}_{k-1},\bar{p}^*)}_{\text{Continuous Process Contraction}}\\
        &+ \underbrace{\frac{9 \sqrt{d} (M + m)^2 }{\sqrt{\lambda}}\eta^2+ 9(M+m)\eta^2 W_2(\bar{p}_{k-1},\bar{p}^*)+ 6 \sqrt{\delta} (M + m) \sqrt{M} \eta^2 + 2\sqrt{d}(M + m) \eta^{3/2} + 4 \sqrt{\delta M} \eta}_{\text{Discretization Error}},
    \end{align*}
    where the continuous process contraction follows from \cite[Proposition 1]{durmus2016high} (see third point in Theorem \ref{thm:durmus-1} in Appendix \ref{app:auxiliary}), while the discretization error is due to the Lemma \ref{lemma:wasserstein-discr-err}. First we club together the two terms that contain $W_2(\bar{p}_{k-1},\bar{p}^*)$ and observe that:
    \begin{align*}
        e^{-\lambda \eta} W_2(\bar{p}_{k-1},\bar{p}^*) + 9(M+m)\eta^2W_2(\bar{p}_{k-1},\bar{p}^*) &\le \left(1-\frac{\lambda\eta}{2} + 9(M+m)\eta^2\right)W_2(\bar{p}_{k-1},\bar{p}^*) \\
        & \overset{(i)}{\le} \left(1-\frac{\lambda\eta}{2} + \frac{\lambda \eta}{4} \right)W_2(\bar{p}_{k-1},\bar{p}^*) \\
        & =   \left(1-\frac{\lambda\eta}{4} \right)W_2(\bar{p}_{k-1},\bar{p}^*)\\
        &\le e^{-\lambda\eta/8}W_2(\bar{p}_{k-1},\bar{p}^*),
    \end{align*}
    where $(i)$ follows as $\eta < \lambda/(36(M+m))$.
    
    Assume that:
    $$
    2\sqrt{d}(M+m)\eta^{3/2} \ge \max\left\{6 \sqrt{\delta} (M + m) \sqrt{M} \eta^2, \frac{9 \sqrt{d} (M + m)^2 }{\sqrt{\lambda}}\eta^2\right\}.
    $$ 
    (It is not hard to check that this assumption holds for the choice of the step size $\eta$ and for $\delta$ specified below.) 
    Unrolling the recursive inequality for $W_2(\bar{p}_k, \bar{p}^*)$ over $k$ steps, we get:
    \begin{align*}
        W_2(\bar{p}_k,\bar{p}^*) &\le e^{-\lambda k \eta/ 8} W_2(\bar{p}_{0},\bar{p}^*) + \left(4\sqrt{\delta M}+ 6\sqrt{d}(M+m)\eta^{3/2}\right)\eta\sum_{s=0}^{\infty}e^{-\lambda s\eta/8} \\&\le e^{-\lambda k \eta/8} W_2(\bar{p}_{0},\bar{p}^*) + \frac{4\sqrt{\delta M}\eta}{1-e^{-\lambda \eta/8}} + \frac{6\sqrt{d}(M+m)\eta^{3/2}}{1-e^{-\lambda \eta/8}}.
    \end{align*}
    Recalling that $M = (\frac{1}{\delta})^{\frac{1-\alpha}{1+\alpha}}L^{\frac{2}{1+\alpha}},$ we further have:    
    \begin{align*}
        W_2(\bar{p}_k,\bar{p}^*) 
        & \le e^{-\lambda k \eta/8} W_2(\bar{p}_{0},\bar{p}^*)+\frac{4\sqrt{\delta \cdot(\frac{1}{\delta})^{\frac{1-\alpha}{1+\alpha}}L^{\frac{2}{1+\alpha}}}\eta}{1-e^{-\lambda \eta/8}}+\frac{6\sqrt{d}( (\frac{1}{\delta})^{\frac{1-\alpha}{1+\alpha}}L^{\frac{2}{1+\alpha}}+m)\eta^{3/2}}{1-e^{-\lambda \eta/8}}\\
        & \overset{(i)}{\le} e^{-\lambda k \eta/8} W_2(\bar{p}_{0},\bar{p}^*) + \frac{64 L^{\frac{1}{1+\alpha}}\delta^{\frac{\alpha}{1+\alpha}}\eta}{\lambda \eta}+\frac{96\sqrt{d}( (\frac{1}{\delta})^{\frac{1-\alpha}{1+\alpha}}L^{\frac{2}{1+\alpha}}+m)\eta^{3/2}}{\lambda \eta}\\
        & = e^{-\lambda k \eta/8} W_2(\bar{p}_{0},\bar{p}^*) + \frac{64 L^{\frac{1}{1+\alpha}}\delta^{\frac{\alpha}{1+\alpha}}}{\lambda}+\frac{96\sqrt{d}( (\frac{1}{\delta})^{\frac{1-\alpha}{1+\alpha}}L^{\frac{2}{1+\alpha}}+m)\eta^{1/2}}{\lambda },
    \end{align*}
    where (i) uses $1 - e^{-a} \geq \frac{a}{2},$ which holds for any $a \in [0, 1].$
    
    Choosing $\delta = \Big(\frac{\lambda \varepsilon}{24 L^{\frac{1}{1+\alpha}}}\Big)^{\frac{1+\alpha}{\alpha}}$ and recalling that the step-size is:
    $$ 
    \eta \le \frac{\lambda^2 \varepsilon^2 }{3^2 \cdot (96)^2 d ((\frac{1}{\delta})^{\frac{1-\alpha}{1+\alpha}}L^{\frac{2}{1+\alpha}}+m)^2}= \frac{\lambda^2\varepsilon^2}{90000 d}\cdot \frac{1}{\left(\left(\frac{24}{\lambda \varepsilon}\right)^{\frac{1-\alpha}{\alpha}}L^{\frac{1}{\alpha}}+m \right)},
    $$ 
    we have:
    \begin{align*}
        W_2(\bar{p}_k,\bar{p}^*) &\le e^{-\lambda k \eta} W_2(\bar{p}_{0},\bar{p}^*) + \frac{2\varepsilon}{3}.
    \end{align*}
    Thus, for any:
    \begin{align*}
        k \ge \frac{720000 d}{\min\{1, \lambda\} \varepsilon^2 \lambda^2}\left(\left(\frac{24}{\lambda \varepsilon}\right)^{\frac{1-\alpha}{\alpha}}L^{\frac{1}{\alpha}}+m \right) \log\left(\frac{W_2(\bar{p}_0,\bar{p}^*)}{\varepsilon} \right),
    \end{align*}
    we have $W_2(\bar{p}_k,\bar{p}^*)\le \varepsilon$, as claimed.
\end{proof}
\subsection{Guarantees for total variation distance}

Let $p_0\tilde{\mathbb{P}}_t$ denote the distribution of the entire stochastic process $\{\tilde{\vx}_s\}_{s\in [0, t]}$ described by~\eqref{def:discreteprocess}.  
Similar to~\cite{dalalyan2017theoretical}, we use Girsanov's formula~\cite[Chapter 8]{oksendal2003stochastic} to control the Kullback-Leibler divergence between the distributions $p_0\mathbb{P}_t$ and $p_0\tilde{\mathbb{P}}_t$.
\begin{align}\label{def:girsanovformula}
    \KL\left(p_0 \mathbb{P}_t \lvert p_0 \tilde{\mathbb{P}}_t\right)  = \frac{1}{4} \int_{0}^t\mathbb{E}\left[\lv \nabla \bar{U}(\tilde{\vx}_s) + \vb(\tilde{\vx}_s) \rv_2^2\right]\dd s.
\end{align}
The application of this identity allows us to bound the discretization error, as in the following lemma.
\begin{lemma}[Discretization Error Bound] \label{lem:discretebound}Let $\mathbf{x}^*$ be a point such that $\nabla \bar{U}(\mathbf{x}^*) = 0$. 
Then, for any integer $K\ge 1$ we have:
\begin{align*}
     \KL\left(p_0\mathbb{P}_{K\eta} \lvert p_0 \tilde{\mathbb{P}}_{K\eta}\right) & \le \frac{(M+m)^3\eta^2}{9} \mathbb{E}_{\vy\sim p_0}\left[\lv \vy - \vx^* \rv_2^2\right]+\frac{(M+m)^2Kd\eta^3}{9}\\ &\qquad \qquad \qquad +\frac{(M+m)^2(K+1)\delta\eta^2}{9}+\frac{(M+m)^2Kd\eta^2}{2} + \delta \eta K M.
\end{align*}
\end{lemma}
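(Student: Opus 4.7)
The plan is to apply Girsanov's identity \eqref{def:girsanovformula} as the starting point and then reduce everything to one-step displacement estimates. First I would split the time domain $[0,K\eta]$ into the $K$ pieces $[k\eta,(k+1)\eta)$ on which $\vb_s(\tilde{\vx}_s)=\nabla\bar{U}(\tilde{\vx}_{k\eta})$, so that the integrand becomes $\lv\nabla\bar{U}(\tilde{\vx}_s)-\nabla\bar{U}(\tilde{\vx}_{k\eta})\rv_2^2$. Applying the weakly-smooth Lipschitz approximation \eqref{eq:Ubarsmoothness} together with $(a+b)^2\le 2a^2+2b^2$ turns the integrand into
$$
2(M+m)^2\lv\tilde{\vx}_s-\tilde{\vx}_{k\eta}\rv_2^2+8\delta M,
$$
so that all the $\delta$-linear bias terms (in particular, the final $\delta\eta K M$ summand in the claim) will come from integrating and summing the $8\delta M$ contribution.

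Next, on each subinterval, \eqref{def:discreteprocess} gives the exact expansion
$$
\tilde{\vx}_s-\tilde{\vx}_{k\eta}=-(s-k\eta)\nabla\bar{U}(\tilde{\vx}_{k\eta})+\sqrt{2}\,(\mB_s-\mB_{k\eta}),
$$
so, using independence of the Brownian increment from $\tilde{\vx}_{k\eta}$ to kill the cross term, I obtain
$$
\EE\bigl[\lv\tilde{\vx}_s-\tilde{\vx}_{k\eta}\rv_2^2\bigr]=(s-k\eta)^2\,\EE\bigl[\lv\nabla\bar{U}(\tilde{\vx}_{k\eta})\rv_2^2\bigr]+2d(s-k\eta).
$$
Integrating $(s-k\eta)^2$ and $(s-k\eta)$ over $[k\eta,(k+1)\eta]$ produces the $\eta^3/3$ and $\eta^2/2$ factors; summing the $2d(s-k\eta)$ piece over $k=0,\ldots,K-1$ immediately yields the $(M+m)^2 Kd\eta^2/2$ term in the claim, and summing the $8\delta M$ piece produces (up to constants) the additive $(M+m)^2(K+1)\delta\eta^2/9$ contribution from the bias.

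What remains is $\sum_{k=0}^{K-1}\EE[\lv\nabla\bar{U}(\tilde{\vx}_{k\eta})\rv_2^2]$. Using $\nabla\bar{U}(\vx^*)=0$ together with \eqref{eq:Ubarsmoothness} gives
$$
\EE[\lv\nabla\bar{U}(\tilde{\vx}_{k\eta})\rv_2^2]\le 2(M+m)^2\EE[\lv\tilde{\vx}_{k\eta}-\vx^*\rv_2^2]+8\delta M,
$$
so the problem reduces to controlling $\EE[\lv\tilde{\vx}_{k\eta}-\vx^*\rv_2^2]$ uniformly in $k$. For this I would prove a one-step contraction for the discrete chain \eqref{eq:discr-langevin}, expanding $\tilde{\vx}_{(k+1)\eta}-\vx^*=(\tilde{\vx}_{k\eta}-\vx^*)-\eta\nabla\bar{U}(\tilde{\vx}_{k\eta})+\sqrt{2\eta}\,\vxi_{k+1}$, squaring, and using $\lambda$-strong convexity of $\bar{U}$ together with \eqref{eq:Ubarsmoothness} to obtain, for $\eta$ small enough,
$$
\EE[\lv\tilde{\vx}_{(k+1)\eta}-\vx^*\rv_2^2]\le (1-c\lambda\eta)\,\EE[\lv\tilde{\vx}_{k\eta}-\vx^*\rv_2^2]+O(d\eta+\delta\eta).
$$
Iterating this and summing the resulting geometric series $\sum_k (1-c\lambda\eta)^k\sim 1/(c\lambda\eta)$ exactly gives the non-$K$-scaling $(M+m)^3\eta^2\EE_{\vy\sim p_0}[\lv\vy-\vx^*\rv_2^2]/9$ term, while the $k$-uniform $O(d/\lambda)$ part, multiplied by the overall $(M+m)^2\eta^3/6$ prefactor and summed over $k$, produces the $(M+m)^2 Kd\eta^3/9$ term.

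\textbf{Main obstacle.} The delicate step is the one-step contraction of the discrete chain in the weakly-smooth regime: unlike the smooth strongly-convex analysis, the Lipschitz approximation for $\nabla\bar{U}$ carries the additive $2\sqrt{\delta M}$ bias from \eqref{eq:Ubarsmoothness}, so one must carefully book-keep this bias to ensure it survives the contraction argument without degrading the $(1-c\lambda\eta)$ factor. This forces the step-size restriction $\eta=O(1/(M+m))$ and determines the precise split of the bias terms among the $(K+1)\delta\eta^2$ and $\delta\eta K M$ summands in the final bound.
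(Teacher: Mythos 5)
Your scaffolding matches the paper's: Girsanov's formula, split by subinterval, apply the Lipschitz approximation \eqref{eq:Ubarsmoothness} via Young's inequality, expand $\tilde\vx_s-\tilde\vx_{k\eta}$ using \eqref{def:discreteprocess}, and reduce to controlling $\sum_{k=0}^{K-1}\EE\big[\|\nabla\bar{U}(\tilde{\vx}_{k\eta})\|_2^2\big]$. The gap is in how you control that last sum. The paper (Lemma~\ref{lem:gradientbound}) bounds it by a descent/telescoping argument on function values: using the smooth upper bound \eqref{eq:inexact-grad-model} along one discrete step gives
$\EE[\bar{U}^{(k+1)}]\le\EE[\bar{U}^{(k)}]-\eta\big(1-\tfrac{(M+m)\eta}{2}\big)\EE[\|\nabla\bar{U}(\tilde{\vx}_{k\eta})\|_2^2]+(M+m)\eta d+\tfrac{\delta}{2}$,
which telescopes and, with $\bar{U}^{(0)}-\bar{U}^*\le\tfrac{M+m}{2}\|\tilde\vx_0-\vx^*\|_2^2+\tfrac{\delta}{2}$, gives $\eta\sum_k\EE[\|\nabla\bar{U}(\tilde{\vx}_{k\eta})\|_2^2]\lesssim(M+m)\EE[\|\tilde\vx_0-\vx^*\|_2^2]+(M+m)K\eta d+K\delta$. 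No strong convexity enters.

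You instead convert $\|\nabla\bar{U}(\tilde\vx_{k\eta})\|_2^2\le 2(M+m)^2\|\tilde\vx_{k\eta}-\vx^*\|_2^2+8\delta M$ and control $\EE[\|\tilde\vx_{k\eta}-\vx^*\|_2^2]$ by a one-step contraction with rate $1-c\lambda\eta$. The contraction itself is fine (it follows from strong convexity and a small step size), but the bound it yields is genuinely weaker than the lemma's statement, not merely by constants. After summing the geometric series, $\sum_k\EE[\|\tilde\vx_{k\eta}-\vx^*\|_2^2]\lesssim\tfrac{1}{\lambda\eta}\EE[\|\tilde\vx_0-\vx^*\|_2^2]+\tfrac{K(d+\delta)}{\lambda}$; multiplying by $2(M+m)^2$ and by the outer prefactor $(M+m)^2\eta^3/6$ turns the first term into order $\tfrac{(M+m)^4\eta^2}{\lambda}\EE[\|\tilde\vx_0-\vx^*\|_2^2]$, a factor of the condition number $(M+m)/\lambda$ worse than the claimed $(M+m)^3\eta^2\EE[\|\tilde\vx_0-\vx^*\|_2^2]/9$, and the $Kd$ and $K\delta$ terms pick up an analogous extra $(M+m)^2/\lambda$. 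Since $M=(1/\delta)^{(1-\alpha)/(1+\alpha)}L^{2/(1+\alpha)}$ blows up as $\delta\downarrow 0$ while $\lambda$ is a small regularization parameter, this loss is material. So your route proves a valid discretization bound, but not the one in the lemma; the fix is to replace the squared-distance contraction with the $\lambda$-free telescoping bound on $\sum_k\|\nabla\bar{U}\|_2^2$ from Lemma~\ref{lem:gradientbound}.
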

\begin{proof} By Girsanov's formula Eq. \eqref{def:girsanovformula} and the definition of $\vb(\tilde{\vx})$ we have,
\begin{align*}
     \KL\left(p_0\mathbb{P}_{K\eta} \lvert p_0 \tilde{\mathbb{P}}_{K\eta}\right) & = \frac{1}{4}\int_{0}^{K\eta} \mathbb{E}\left[\lv \nabla \bar{U}(\tilde{\vx}_s) + \vb(\tilde{\vx}_s) \rv_2^2\right]\dd s \\
    & = \frac{1}{4} \sum_{k=0}^{K-1} \int_{k\eta}^{(k+1)\eta} \mathbb{E}\left[\lv \nabla \bar{U}(\tilde{\vx}_s) - \nabla \bar{U}(\tilde{\vx}_{k\eta})\rv_2^2\right] \dd s
\end{align*}
Using the smoothness property of $\bar{U}$, Eq.~\eqref{eq:Ubarsmoothness}, and Young's inequality, we get:
\begin{align}
    \nonumber  \KL\left(p_0\mathbb{P}_{K\eta} \lvert p_0 \tilde{\mathbb{P}}_{K\eta}\right) & \le \frac{(M+m)^2}{2} \sum_{k=0}^{K-1} \int_{k\eta}^{(k+1)\eta} \mathbb{E}\left[\lv  \tilde{\vx}_s -  \tilde{\vx}_{k\eta}\rv_2^2\right] \dd s + \sum_{k=0}^{K-1} \int_{k\eta}^{(k+1)\eta} \delta M \dd s \\
    & = \frac{(M+m)^2}{2} \sum_{k=0}^{K-1} \int_{k\eta}^{(k+1)\eta} \mathbb{E}\left[\lv  \tilde{\vx}_s -  \tilde{\vx}_{k\eta}\rv_2^2\right] \dd s + \delta \eta K M.  \label{eq:discretizationboundfirstpart}
\end{align}
Let us unpack and bound the first term on the right hand side. By the definition of $\tilde{\vx}_s$, we have that for each $k \in \{0,\ldots, K-1\}$:
\begin{align*}
      \int_{k\eta}^{(k+1)\eta} \mathbb{E}\left[\lv  \tilde{\vx}_s -  \tilde{\vx}_{k\eta}\rv_2^2\right] \dd s&  =  \int_{k\eta}^{(k+1)\eta} \mathbb{E}\left[\left\lv  \int_{k\eta}^{s}\left(\nabla \bar{U}(\tilde{\vx}_{k\eta})+\sqrt{2}\dd \mB_r\right)\dd r\right\rv_2^2\right] \dd s \\
      & = \int_{k\eta}^{(k+1)\eta} \left(\mathbb{E}\left[\left\lv\nabla \bar{U}(\tilde{\vx}_{k\eta})\right\rv_2^2(s-k\eta)^2\right]+2d(s-k\eta)\right)\dd s.
\end{align*}
Plugging this back into Eq.~\eqref{eq:discretizationboundfirstpart}, we get:
\begin{align*}
     \KL\left(p_0\mathbb{P}_{K\eta} \lvert p_0 \tilde{\mathbb{P}}_{K\eta}\right) &\le \frac{(M+m)^2\eta^3}{6}\sum_{k=0}^{K-1}\mathbb{E}\left[\left\lv\nabla \bar{U}(\tilde{\vx}_{k\eta})\right\rv_2^2\right] + \frac{dK(M+m)^2\eta^2}{2} + \delta \eta K M.
\end{align*}
By invoking Lemma \ref{lem:gradientbound}, we get the desired result.
\end{proof}

\begin{lemma}\label{lem:gradientbound} Let $\eta \le 1/(2(M+m))$ and let $K \ge 1$ be an integer. Then:
\begin{align*}
    \eta \sum_{k=0}^{K-1} \mathbb{E}\left[\lv \nabla \bar{U}(\tilde{\vx}_{k\eta})\rv_2^2\right] \le \frac{2(M+m)}{3}\mathbb{E}\left[\lv \tilde{\vx}_0 - \vx^* \rv_2^2\right] + \frac{4(M+m)K\eta d}{3} + \frac{2 (K+1)\delta}{3}.
\end{align*}
\end{lemma}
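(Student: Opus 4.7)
The plan is to run a standard ``descent lemma" analysis using the inexact smoothness model for $\bar{U}$ provided by Eq.~\eqref{eq:inexact-grad-model}, applied to one step of the discrete process~\eqref{def:discreteprocess}. Write $\vx_k := \tilde{\vx}_{k\eta}$, so the discretization satisfies the recursion $\vx_{k+1} = \vx_k - \eta \nabla \bar{U}(\vx_k) + \sqrt{2\eta}\, \vxi_{k+1}$, where $\vxi_{k+1}:=(\mB_{(k+1)\eta}-\mB_{k\eta})/\sqrt{\eta}\sim \mathcal{N}(\mathbf{0},I_{d\times d})$ is independent of $\vx_k$. Note that $\bar{U}=U+\psi$ inherits the inexact smoothness of the form Eq.~\eqref{eq:inexact-grad-model} with constant $M+m$ and bias $\delta/2$ (add Lemma~\ref{lemma:smooth-approx-for-weak-sm} for $U$ and the standard smoothness of $\psi$).

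First, I would apply the inexact descent inequality at $(\vx_k,\vx_{k+1})$:
\begin{equation*}
\bar{U}(\vx_{k+1}) \le \bar{U}(\vx_k) + \langle \nabla \bar{U}(\vx_k),\vx_{k+1}-\vx_k\rangle + \tfrac{M+m}{2}\|\vx_{k+1}-\vx_k\|_2^2 + \tfrac{\delta}{2}.
\end{equation*}
Substitute the update rule and take expectations. The two cross terms linear in $\vxi_{k+1}$ vanish by independence, and $\mathbb{E}\|\vxi_{k+1}\|_2^2 = d$, yielding
\begin{equation*}
\mathbb{E}[\bar{U}(\vx_{k+1})] \le \mathbb{E}[\bar{U}(\vx_k)] - \eta\bigl(1-\tfrac{(M+m)\eta}{2}\bigr)\mathbb{E}\|\nabla\bar{U}(\vx_k)\|_2^2 + (M+m)\eta d + \tfrac{\delta}{2}.
\end{equation*}
Since $\eta\le 1/(2(M+m))$, the parenthesized factor is at least $3/4$. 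Rearranging and telescoping the sum over $k=0,\ldots,K-1$ gives
\begin{equation*}
\tfrac{3\eta}{4}\sum_{k=0}^{K-1}\mathbb{E}\|\nabla\bar{U}(\vx_k)\|_2^2 \le \mathbb{E}[\bar{U}(\vx_0)] - \mathbb{E}[\bar{U}(\vx_K)] + K(M+m)\eta d + \tfrac{K\delta}{2}.
\end{equation*}

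Next, I would bound $\mathbb{E}[\bar{U}(\vx_0)]-\mathbb{E}[\bar{U}(\vx_K)]\le \mathbb{E}[\bar{U}(\vx_0)]-\bar{U}(\vx^*)$ using that $\vx^*$ is a critical point of the convex $\bar{U}$ (hence a global minimum). Applying Eq.~\eqref{eq:inexact-grad-model} once more at $(\vx^*,\vx_0)$, together with $\nabla \bar{U}(\vx^*)=0$, yields $\bar{U}(\vx_0)-\bar{U}(\vx^*)\le \tfrac{M+m}{2}\|\vx_0-\vx^*\|_2^2+\tfrac{\delta}{2}$. Taking expectations, plugging in, and multiplying through by $4/3$ gives exactly the claimed bound, with the $2(K+1)\delta/3$ term arising as the sum of the per-step bias $\tfrac{K\delta}{2}$ and the initialization bias $\tfrac{\delta}{2}$, scaled by $4/3$.

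There is no real obstacle here beyond carefully combining the inexact smoothness of $U$ (Lemma~\ref{lemma:smooth-approx-for-weak-sm}) with the smoothness of $\psi$, and then executing the standard one-step descent computation for SGD-like recursions. The only thing to be mindful of is tracking the additive bias $\delta/2$ through both the per-step descent step and the initial value bound, which together produce the $(K+1)$ factor in the final term.
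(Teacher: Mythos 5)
Your proof is correct and follows essentially the same route as the paper: a one-step inexact-smoothness descent bound for $\bar U$ at $(\tilde\vx_{k\eta},\tilde\vx_{(k+1)\eta})$, taking expectations to kill the Gaussian cross term, telescoping, bounding $\bar U^{(0)}-\bar U^{(K)}$ via $\bar U^{(K)}\ge \bar U(\vx^*)$ and a second application of Eq.~\eqref{eq:inexact-grad-model} at $(\vx^*,\tilde\vx_0)$, then multiplying by $4/3$. The bookkeeping of the $\delta/2$ bias in both the per-step bound and the initialization bound, producing the $(K+1)$ factor, is also exactly the paper's accounting.
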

\begin{proof} Let $\bar{U}^{(k)} := \bar{U}(\tilde{\vx}_{k\eta})$. Then, by the smoothness of $\bar{U}$ (Eq.~\eqref{eq:inexact-grad-model}), we have:
\begin{align*}
    \bar{U}^{(k+1)} &\le \bar{U}^{(k)} + \langle \nabla \bar{U}(\tilde{\vx}_{k\eta}),\tilde{\vx}_{(k+1)\eta} - \tilde{\vx}_{k\eta}\rangle + \frac{M+m}{2}\left\lv \tilde{\vx}_{(k+1)\eta} - \tilde{\vx}_{k\eta}\right\rv_2^2 + \frac{\delta}{2} \\
    & = \bar{U}^{(k)} - \eta \lv \nabla \bar{U}(\tilde{\vx}_{k\eta})\rv_2^2 + \sqrt{2\eta}\langle  \nabla \bar{U}(\tilde{\vx}_{k\eta}),\vxi_k\rangle + \frac{M+m}{2}\left\lv \eta  \nabla \bar{U}(\tilde{\vx}_{k\eta}) -\sqrt{2\eta}\vxi_k \right\rv_2^2 +\frac{\delta}{2},
\end{align*}
where $\vxi_k = \int_{s=k\eta}^{(k+1)\eta} \dd \mB_s$ is independent Gaussian noise. Taking expectations on both sides:
\begin{align*}
    \mathbb{E}\left[\bar{U}^{(k+1)}\right] & \le \mathbb{E}\left[\bar{U}^{(k)}\right] -\eta \mathbb{E}\left[\lv \nabla \bar{U}(\tilde{\vx}_{k\eta})\rv_2^2\right] + \frac{M+m}{2}\eta^2\mathbb{E}\left[\lv \nabla \bar{U}(\tilde{\vx}_{k\eta})\rv_2^2\right] + (M+m)\eta d + \frac{\delta}{2}\\
    & = \mathbb{E}\left[\bar{U}^{(k)}\right]  -\eta\left(1-\frac{(M+m)\eta}{2}\right)\mathbb{E}\left[\lv \nabla \bar{U}(\tilde{\vx}_{k\eta})\rv_2^2\right]+ (M+m)\eta d + \frac{\delta}{2}.
\end{align*}
Rearranging the above inequality and summing from $k=0$ to $K-1$, we get that:
\begin{align*}
    \eta \sum_{k=0}^{K-1} \mathbb{E}\left[\lv \nabla \bar{U}(\tilde{\vx}_{k\eta})\rv_2^2\right] &\le \frac{4}{3}\mathbb{E}\left[\bar{U}^{(0)} - \bar{U}^{(K)}\right] + \frac{4(M+m)K\eta d}{3} + \frac{2 K\delta}{3}.
\end{align*}
Let $\bar{U}^* = \inf_{\vx \in \mathbb{R}^d}\bar{U}(\vx)$. Therefore, we have $\bar{U}^{K} \ge \bar{U}^*$ and $\bar{U}^{(0)} - \bar{U}^* \le (M+m)\lv \tilde{\vx}_0 - \vx^* \rv_2^2/2 + \delta/2$. Combining this with the inequality above, we finally have:
\begin{align*}
    \eta \sum_{k=0}^{K-1} \mathbb{E}\left[\lv \nabla \bar{U}(\tilde{\vx}_{k\eta})\rv_2^2\right] &\le \frac{2(M+m)}{3}\mathbb{E}\left[\lv \tilde{\vx}_0 - \vx^* \rv_2^2\right] + \frac{4(M+m)K\eta d}{3} + \frac{2 (K+1)\delta}{3},
\end{align*}
as claimed.
\end{proof}

\begin{theorem} \label{thm:tv-composite-weaklysmooth}Let the initial point be drawn from a Normal distribution $\tilde{\vx}_0 \sim p_0 \equiv \mathcal{N} (\vx^*, (M+m)^{-1}I_{d\times d})$, where $\vx^*$ is a point such that $\nabla \bar{U}(\vx^*) = 0$. Let the step size satisfy $\eta < 1/(2(M+m))$. Then, for any integer $K\ge 1,$ we have:
\begin{align*}
    \lv \bar{p}_{K} -\bar{p}^*\rv_{\TV} \le & \frac{1}{2}\exp\left(\frac{d}{4}\log\left(\frac{M+m}{\lambda}\right)+\frac{\delta}{4}-\frac{K\eta \lambda}{2}\right)+ (M+m)\eta\sqrt{\frac{d}{18}} +(M+m)\sqrt{\frac{Kd\eta^3}{18}}\\ &+(M+m)\eta\sqrt{\frac{(K+1)\delta}{18}}+(M+m)\eta\sqrt{\frac{Kd}{2}} + \sqrt{\frac{\delta \eta K M}{2}}.
\end{align*}
\end{theorem}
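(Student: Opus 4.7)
The plan is to decouple the error into a continuous-time mixing piece and a discretization piece, and to handle each via an already-stated lemma. Let $q_t$ be the distribution of the continuous-time Langevin diffusion~\eqref{def:continuousprocess} at time $t$ started at $q_0 = p_0 = \mathcal{N}(\vx^*,(M+m)^{-1}I_{d\times d})$, and recall that $\bar{p}_K$ is the distribution of the $K$-th iterate of~\eqref{eq:discr-langevin}, which coincides with the time-$K\eta$ marginal of the interpolated discretized process~\eqref{def:discreteprocess} started at the same $p_0$. The triangle inequality for total variation distance then gives
\begin{align*}
 \lv \bar{p}_K - \bar{p}^*\rv_{\TV} \;\le\; \lv \bar{p}_K - q_{K\eta}\rv_{\TV} + \lv q_{K\eta} - \bar{p}^*\rv_{\TV}.
\end{align*}
The second term on the right is exactly what Lemma~\ref{lemma:continuoustimecontract-tv-composite} controls: under the chosen warm start, it is at most $\tfrac{1}{2}\exp\bigl(\tfrac{d}{4}\log\tfrac{M+m}{\lambda} + \tfrac{\delta}{4} - \tfrac{K\eta\lambda}{2}\bigr)$, which is the first summand in the claimed bound. (Strong convexity of $\bar{U}$ is inherited from that of $\psi$, so the lemma applies.)

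For the first term, the plan is to lift the marginals to the path measures $p_0\mathbb{P}_{K\eta}$ and $p_0\tilde{\mathbb{P}}_{K\eta}$ (so that we can compare them via Girsanov), then collapse back by the data-processing inequality, and finally convert KL to TV via Pinsker's inequality:
\begin{align*}
 \lv \bar{p}_K - q_{K\eta}\rv_{\TV} \;\le\; \lv p_0\mathbb{P}_{K\eta} - p_0\tilde{\mathbb{P}}_{K\eta}\rv_{\TV} \;\le\; \sqrt{\tfrac{1}{2}\, \KL\!\bigl(p_0\mathbb{P}_{K\eta}\,\big\lvert\,p_0\tilde{\mathbb{P}}_{K\eta}\bigr)}.
\end{align*}
Lemma~\ref{lem:discretebound} already supplies an explicit bound on the path-space KL in terms of $\mathbb{E}_{\vy\sim p_0}[\lv \vy - \vx^*\rv_2^2]$, $K$, $d$, $\eta$, $\delta$, $M$ and $m$. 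The remaining ingredient is the elementary moment calculation for the warm start: since $p_0 = \mathcal{N}(\vx^*,(M+m)^{-1}I_{d\times d})$, we have $\mathbb{E}_{\vy\sim p_0}[\lv \vy - \vx^*\rv_2^2] = d/(M+m)$, which cancels one factor of $(M+m)$ in the leading term of Lemma~\ref{lem:discretebound}, producing $(M+m)^2\eta^2 d/9$.

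Collecting terms inside the square root and then applying the subadditivity $\sqrt{a_1 + \cdots + a_5}\le \sum_i \sqrt{a_i}$ splits the discretization contribution into five pieces, each of which reproduces one of the five remaining summands in the theorem statement (up to the constants $1/18$, $1/2$, etc., coming from the $1/2$ in Pinsker combined with the $1/9$ and $1/2$ in Lemma~\ref{lem:discretebound}). There is no real obstacle beyond bookkeeping; the nontrivial analytic work (the Girsanov computation with the inexact-gradient smoothness in~\eqref{eq:Ubarsmoothness}, together with the energy-based control of $\sum_k \mathbb{E}\lv \nabla\bar{U}(\tilde{\vx}_{k\eta})\rv_2^2$ under the assumption $\eta \le 1/(2(M+m))$) has already been absorbed into Lemmas~\ref{lem:discretebound} and~\ref{lem:gradientbound}. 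The only step requiring slight care is verifying that the warm-start assumption on $p_0$ simultaneously makes (i) the chi-squared integral in the proof of Lemma~\ref{lemma:continuoustimecontract-tv-composite} finite and (ii) the initial second moment match what the leading $(M+m)^3\eta^2$ term in Lemma~\ref{lem:discretebound} expects, so that no extra dimension-dependent prefactor sneaks in.
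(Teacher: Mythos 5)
Your proposal follows essentially the same route as the paper's proof: triangle inequality between $\bar{p}_K$, the continuous-time marginal $q_{K\eta}$, and $\bar{p}^*$; Lemma~\ref{lemma:continuoustimecontract-tv-composite} for the continuous-time contraction under the Gaussian warm start; data-processing plus Pinsker's inequality to reduce the discretization piece to the path-space KL bound from Lemma~\ref{lem:discretebound}; and the moment identity $\mathbb{E}_{\vy\sim p_0}[\lv \vy - \vx^*\rv_2^2] = d/(M+m)$. The only remaining work is the subadditivity-of-square-roots bookkeeping you describe, which reproduces the five summands of the theorem.
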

\begin{proof}
By applying the triangle inequality to total variation distance, we have:
\begin{align*}
    \lv \tilde{p}_{K\eta} - \bar{p}^* \rv_{\TV} &\le \lv q_{K\eta} - \bar{p}^* \rv_{\TV} + \lv \tilde{p}_{K\eta} - q_{K\eta} \rv_{\TV}.
\end{align*}
Recall that, by definition, $\bar{p}_K$ (distribution of the $K^{th}$ iterate) is the same as $\tilde{p}_{K\eta}$, and $q_{K\eta}$ denotes the distribution of the solution to continuous process defined by \eqref{def:continuousprocess} at time $K\eta$. We start off by choosing the initial distribution to be a Gaussian $\tilde{\vx}_0 \sim {\cal N}(\vx^*,(M+m)^{-1}I_{d\times d})$. Therefore, by Lemma~\ref{lemma:continuoustimecontract-tv-composite}, we have that $ \lv q_{K\eta} - \bar{p}^* \rv_{\TV}$ can be bounded as:
\begin{align*}
    \lv q_{K\eta} - \bar{p}^* \rv_{\TV} &\le \frac{1}{2}\exp\left(\frac{d}{4}\log\left(\frac{M+m}{\lambda}\right)+\frac{\delta}{2}-\frac{K\eta \lambda}{2}\right).
\end{align*}
While by Lemma \ref{lem:discretebound}, which holds for a fixed initial point $\vx$, combined with the convexity of KL-divergence, we get that,
\begin{align*}
    \lv \tilde{p}_{K\eta} - q_{K\eta} \rv_{\TV} \le &
    \lv p_0\mathbb{P}_{K\eta} - p_0 \tilde{\mathbb{P}}_{K\eta} \rv_{\TV} \le  \left( \frac{1}{2}\KL\left(p_0\mathbb{P}_{K\eta} \lvert p_0 \tilde{\mathbb{P}}_{K\eta}\right)\right)^{1/2}\\
     \le &\sqrt{\frac{(M+m)^{3}\eta^2}{18} \mathbb{E}_{\vy\sim p_{0}}\left[\lv \vy - \vx^* \rv_2^2\right]}+\sqrt{\frac{(M+m)^2Kd\eta^3}{18}}\\ & 
    +\sqrt{\frac{(M+m)^2(K+1)\delta\eta^2}{18}}+\sqrt{\frac{(M+m)^2Kd\eta^2}{2}} + \sqrt{\frac{\delta \eta K M}{2}},
\end{align*}
where the first inequality is by the data-processing inequality and the second is by Pinsker's inequality. It is a simple calculation to show that $\mathbb{E}_{\vy\sim p_{0}}\left[\lv \vy - \vx^* \rv_2^2\right] = d/(M+m)$.
Combining this with the inequality above yields the desired claim.
\end{proof}

\begin{corollary}\label{cor:det-TV-bound}
In the setting of the theorem above, if we choose
\begin{align*}
    K & \ge \max\left\{\beta,\frac{d}{4\eta \lambda}\log\left( \frac{M+m}{\lambda}\right)+\frac{\delta}{4\eta\lambda}\right\}, \qquad \delta = \min\Bigg\{\bigg[\frac{\lambda\epsilon^2}{8d\log(\frac{M+m}{\lambda} ) L^{\frac{2}{1+\alpha}}}\bigg]^{\frac{1+\alpha}{2\alpha}},1\Bigg\},\\  \text{ and }\eta & \le \min\left\{1,\frac{1}{2\beta(M+m)},\frac{\lambda \epsilon^2}{32d^2(M+m)^2\log\left( \frac{M+m}{\lambda}\right) }\right\} 
\end{align*}
for some $\beta \ge 1$, where $M = M(\delta) = \left(\frac{1}{\delta}\right)^{\frac{1-\alpha}{1+\alpha}}L^{\frac{2}{1+\alpha}},$ then, $\lv \bar{p}_{K} -\bar{p}^*\rv_{\TV} \le \min\left\{\epsilon,1\right\}$.
\end{corollary}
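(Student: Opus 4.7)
The plan is to substitute the specified choices of $\delta$, $\eta$, and $K$ into the six-term bound provided by Theorem~\ref{thm:tv-composite-weaklysmooth} and argue that each summand is at most $\varepsilon/6$, so that the total TV distance is at most $\varepsilon$ (and never more than $1$). The proof is primarily a bookkeeping exercise; the single substantive choice is the value of $\delta$, which must be calibrated to balance two conflicting effects of the deterministic smooth approximation.

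First I would handle the exponential term $\tfrac{1}{2}\exp\bigl(\tfrac{d}{4}\log((M+m)/\lambda)+\tfrac{\delta}{4}-\tfrac{K\eta\lambda}{2}\bigr)$. The lower bound $K\ge \tfrac{d}{4\eta\lambda}\log((M+m)/\lambda)+\tfrac{\delta}{4\eta\lambda}$ keeps the exponent from blowing up, and the additional freedom $K\ge \beta$ (with $\beta$ absorbing the logarithmic $\log(1/\varepsilon)$ factor needed to push the term below $\varepsilon/6$) completes the control. Next, for the four ``polynomial in $\eta$'' terms $(M+m)\eta\sqrt{d/18}$, $(M+m)\sqrt{Kd\eta^{3}/18}$, $(M+m)\eta\sqrt{(K+1)\delta/18}$, and $(M+m)\eta\sqrt{Kd/2}$, I would use the identity $K\eta\lesssim \tfrac{d\log((M+m)/\lambda)}{\lambda}+\tfrac{\delta}{\lambda}+\beta\eta$ (which follows directly from the choice of $K$) to turn each one into something of the form $(M+m)\sqrt{\eta}\cdot(\text{polynomial in }d, \log((M+m)/\lambda), \delta)$. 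The prescribed step size $\eta\le \tfrac{\lambda\varepsilon^{2}}{32 d^{2}(M+m)^{2}\log((M+m)/\lambda)}$ is specifically tuned so that the first of these four is bounded by $\varepsilon/6$ immediately, and the other three follow by the same reasoning together with $\delta\le 1$.

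The only remaining term, and the main obstacle, is $\sqrt{\delta\eta K M/2}$. Plugging in $K\eta\lesssim \tfrac{d\log((M+m)/\lambda)}{\lambda}$ gives a bound of order $\sqrt{\delta M \cdot \tfrac{d\log((M+m)/\lambda)}{\lambda}}$. Because $M=M(\delta)=\delta^{-(1-\alpha)/(1+\alpha)}L^{2/(1+\alpha)}$, shrinking $\delta$ to reduce the smooth-approximation bias simultaneously inflates $M$; this is the source of the difficulty. Observing that $\delta\cdot M(\delta)=\delta^{2\alpha/(1+\alpha)}L^{2/(1+\alpha)}$, the corollary's choice $\delta=\bigl(\tfrac{\lambda\varepsilon^{2}}{8d\log((M+m)/\lambda)L^{2/(1+\alpha)}}\bigr)^{(1+\alpha)/(2\alpha)}$ is picked precisely to force $\delta M =\tfrac{\lambda\varepsilon^{2}}{8d\log((M+m)/\lambda)}$, which yields $\sqrt{\delta\eta K M/2}\le \varepsilon/6$ as required. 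The mild self-reference in $\log((M+m)/\lambda)$ is harmless, since the dependence on $M$ through the logarithm can be absorbed into a constant without affecting the polynomial rate in $d$ and $1/\varepsilon$. Combining all six bounds gives $\|\bar{p}_{K}-\bar{p}^{*}\|_{\TV}\le \varepsilon$, and the trivial bound $\|\bar{p}_{K}-\bar{p}^{*}\|_{\TV}\le 1$ closes out the $\min\{\varepsilon,1\}$ statement.
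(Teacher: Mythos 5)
Your six-term strategy (bounding each summand from Theorem~\ref{thm:tv-composite-weaklysmooth} by $\varepsilon/6$) and your calibration of $\delta$ so that $\delta M(\delta) = \frac{\lambda\varepsilon^2}{8d\log((M+m)/\lambda)}$ are the right ideas and match what the paper's one-line proof (``elementary algebra'') evidently intends. However, there are two genuine gaps in the reasoning as written.

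First, you invoke the bound $K\eta \lesssim \frac{d\log((M+m)/\lambda)}{\lambda} + \frac{\delta}{\lambda} + \beta\eta$ and say it ``follows directly from the choice of $K$.'' It does not: the corollary only \emph{lower}-bounds $K$, so no upper bound on $K\eta$ is available from the hypotheses. Since four of the six error terms (and the one you work hardest on, $\sqrt{\delta\eta K M/2}$) are \emph{increasing} in $K$, the claim cannot hold for arbitrary $K$ satisfying the stated lower bound; you are implicitly setting $K$ equal to the prescribed threshold, which is a different claim than what is stated and should at least be flagged.

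Second, and more substantively, the exponential term is not actually under control at the stated threshold. With $K\eta\lambda = \frac{d}{4}\log\big(\frac{M+m}{\lambda}\big) + \frac{\delta}{4}$, the exponent $\frac{d}{4}\log\big(\frac{M+m}{\lambda}\big) + \frac{\delta}{4} - \frac{K\eta\lambda}{2}$ equals $\frac{d}{8}\log\big(\frac{M+m}{\lambda}\big) + \frac{\delta}{8} > 0$, so the first term is \emph{at least} $\frac{1}{2}\big(\frac{M+m}{\lambda}\big)^{d/8}$, which is far larger than $\varepsilon/6$ (or even $1$). Your remark that ``$\beta$ absorbs the $\log(1/\varepsilon)$ factor'' does not repair this: $\beta$ is a free parameter in the corollary (it must work for $\beta=1$), and enlarging $\beta$ shrinks the admissible $\eta$ but still leaves the branch $K = \frac{d}{4\eta\lambda}\log(\cdot)+\frac{\delta}{4\eta\lambda}$ in force, with the same positive exponent. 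To make the exponential term $\leq \varepsilon/6$ you would need $K\eta\lambda \geq \frac{d}{2}\log\big(\frac{M+m}{\lambda}\big) + \frac{\delta}{2} + 2\log(3/\varepsilon)$, i.e.\ twice the stated coefficient plus an explicit $\log(1/\varepsilon)$ contribution. This discrepancy likely reflects a typo or omission in the corollary's parameter choices, but as written your proof cannot derive the claimed bound without correcting it, and the hand-wave obscures exactly the step where the argument fails.
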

\begin{proof} The proof follows by invoking the theorem above and by elementary algebra.
\end{proof}

\begin{rem} In the corollary above, if we treat $L,\beta,$ and $\lambda$ as constants, then we find that the mixing time $K$ scales as $\widetilde{\mathcal{O}}(d^{\frac{1+2\alpha}{\alpha}}/\epsilon^{\frac{2}{\alpha}})$. This recovers the rate obtained in \cite{dalalyan2017theoretical} when no warm start is used of $K= \widetilde{\mathcal{O}}(d^3/\epsilon^2)$ in the smooth case, $\alpha = 1$. However, as the potential $U$ gets nonsmooth, that is, $\alpha \to 0$, the mixing time blows up.
\end{rem}

In this section we have established results in the setting when we sample from  distributions with composite potential functions $\bar{U} = U+\psi$, where $U$ is $(L,\alpha)$-weakly smooth and $\psi$ is $m$-smooth and $\lambda$-strongly convex. If however, we are interested in sampling from a distribution with potential $U$ that is $(L,\alpha)$-weakly smooth, then we can add a small regularization to the potential exactly as we do in Section \ref{sec:reg-potentials} to obtain results similar to Corollary \ref{cor:reg-potentials-result}. Again these bounds on the mixing time would blow up as $\alpha \to 0$, but would be polynomial in $d$ and $\epsilon$ when $\alpha$ is sufficiently far from $0$.

\section{Shifted Langevin Monte Carlo}\label{app:shifted-lmc}

Here, we focus on bounding the mixing time of the sequence defined in Eq.~\eqref{eq:auxillary-sequence}, to which we refer as the Shifted Langevin Monte Carlo. Recall that this sequence is given by:
\begin{align}
\vy_{k+1} &= \vy_k + \mu \vomega_{k-1} - \eta \nabla \bar{U}(\vy_k + \mu \vomega_{k-1}) + \sqrt{2\eta}\vxi_{k} \nonumber\\
& = \vy_k  - \eta \left[\nabla \bar{U}(\vy_k + \mu \vomega_{k-1}) -\frac{\mu}{\eta}\vomega_{k-1} \right]+ \sqrt{2\eta}\vxi_{k}.  \notag
\end{align}

The only difference compared to the Perturbed Langevin method analyzed in Section~\ref{sec:main-gaussian} is in the bound on the variance, established in the following lemma.
\begin{lemma} \label{lemma:variancebound-bad}
For any $\vx \in \mathbb{R}^d$, and $\vz\sim \mathcal{N}(\mathbf{0},I_{d\times d})$, let $G(\vx, \vz) := \nabla \bar{U}(\vx+\mu \vz) - \frac{\mu}{\eta}\vz$ denote a stochastic gradient of $\bar{U}_{\mu}$. Then $G(\vx, \vz)$ is an unbiased estimator of $\nabla \bar{U}_\mu$ whose (normalized) variance can be bounded as:
\begin{align*}
  \sigma^2 : = \frac{\mathbb{E}_{\vz}\left[\left\lv\nabla \bar{U}_{\mu}(\vx) - G(\vx, \vz)\right\rv_2^2\right]}{d} \le 8d^{\alpha-1}\mu^{2\alpha}L^2 + 8 \mu^2 m^2 + \frac{2\mu^2}{\eta^2}.
\end{align*}
\end{lemma}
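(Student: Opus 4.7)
The structure mirrors that of Lemma~\ref{lemma:variancebound} in Section~\ref{sec:main-gaussian}; the only new element is the extra additive noise term $-\frac{\mu}{\eta}\vz$ which is pure centered Gaussian. First I would verify unbiasedness: since $\vz$ is centered,
\begin{equation*}
\mathbb{E}_{\vz}[G(\vx,\vz)] \;=\; \mathbb{E}_{\vz}[\nabla\bar{U}(\vx+\mu\vz)] \;-\; \frac{\mu}{\eta}\,\mathbb{E}_{\vz}[\vz] \;=\; \nabla \bar{U}_\mu(\vx).
\end{equation*}

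Next, I would peel the two contributions to the variance apart by Young's inequality $\|a+b\|_2^2\le 2\|a\|_2^2+2\|b\|_2^2$ applied with $a=\nabla\bar U_\mu(\vx)-\nabla\bar U(\vx+\mu\vz)$ and $b=\tfrac{\mu}{\eta}\vz$:
\begin{equation*}
\mathbb{E}_{\vz}\bigl[\|\nabla\bar U_\mu(\vx)-G(\vx,\vz)\|_2^2\bigr] \;\le\; 2\,\mathbb{E}_{\vz}\bigl[\|\nabla\bar U_\mu(\vx)-\nabla\bar U(\vx+\mu\vz)\|_2^2\bigr] \;+\; \frac{2\mu^2}{\eta^2}\,\mathbb{E}_{\vz}[\|\vz\|_2^2].
\end{equation*}
The second term equals $2\mu^2 d/\eta^2$, which gives the last summand in the claimed bound.

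For the first term, I would use Jensen's inequality with an independent copy $\vz'\sim\mathcal{N}(\mathbf 0, I_{d\times d})$: since $\nabla\bar U_\mu(\vx)=\mathbb{E}_{\vz'}[\nabla\bar U(\vx+\mu\vz')]$,
\begin{equation*}
\mathbb{E}_{\vz}\bigl[\|\nabla\bar U_\mu(\vx)-\nabla\bar U(\vx+\mu\vz)\|_2^2\bigr] \;\le\; \mathbb{E}_{\vz,\vz'}\bigl[\|\nabla\bar U(\vx+\mu\vz')-\nabla\bar U(\vx+\mu\vz)\|_2^2\bigr].
\end{equation*}
Now I would apply the composite smoothness bound~\eqref{eq:change-in-barU-grads}, which gives the pointwise estimate $\|\nabla\bar U(\vx+\mu\vz')-\nabla\bar U(\vx+\mu\vz)\|_2 \le L\mu^\alpha\|\vz'-\vz\|_2^\alpha + m\mu\|\vz'-\vz\|_2$, square it using $(s+t)^2\le 2s^2+2t^2$, and take expectations. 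Finally, since $\vz'-\vz\sim\mathcal{N}(\mathbf 0, 2I_{d\times d})$, Jensen's inequality (applied with the concave map $t\mapsto t^\alpha$) gives $\mathbb{E}[\|\vz'-\vz\|_2^{2\alpha}]\le (2d)^\alpha$, while $\mathbb{E}[\|\vz'-\vz\|_2^2]=2d$. Substituting and dividing through by $d$ yields
\begin{equation*}
\sigma^2 \;\le\; 4L^2\mu^{2\alpha}\cdot 2^\alpha d^{\alpha-1} \;+\; 8m^2\mu^2 \;+\; \frac{2\mu^2}{\eta^2} \;\le\; 8 L^2\mu^{2\alpha}d^{\alpha-1} + 8m^2\mu^2 + \frac{2\mu^2}{\eta^2},
\end{equation*}
as claimed. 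There is no serious obstacle here; the only subtlety is making sure to use the independent-copy representation so that the $\alpha$-Hölder term is controlled by a power of a Gaussian norm rather than by a deterministic Lipschitz-in-$\vx$ bound, which would be unavailable when $\alpha<1$.
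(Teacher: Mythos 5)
Your proof is correct and follows essentially the same approach as the paper: split off the pure Gaussian noise term via Young's inequality (contributing $2\mu^2 d/\eta^2$), and then reuse the argument from Lemma~\ref{lemma:variancebound} verbatim — independent-copy Jensen, the composite Hölder/Lipschitz bound~\eqref{eq:change-in-barU-grads}, Young again, and concavity of $t\mapsto t^\alpha$ — picking up an extra factor of $2$ from the first split, which turns the $4$'s in Lemma~\ref{lemma:variancebound} into the $8$'s here.
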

\begin{proof} 
Recall that by definition of $\bar{U}_{\mu}$, we have $\nabla \bar{U}_{\mu}(\vx) = \mathbb{E}_{\vw}\left[\bar{U}(\vx+ \mu \vw)\right]$, where $\vw \sim \mathcal{N}(\mathbf{0},I_{d\times d})$, and is independent of $\vz$. Clearly, $\mathbb{E}_{\vz}[G(\vx, \vz)] = \nabla \bar{U}_{\mu}(\vx).$

We now proceed to bound the variance of $G(\vx, \vz).$ First, using Young's inequality (which implies $(a+b)^2 \leq 2(a^2 + b^2),$ $\forall a, b$) and that $\vz\sim \mathcal{N}(\mathbf{0},I_{d\times d})$, we have:
\begin{align*}
    \mathbb{E}_{\vz}\left[\left\lv\nabla \bar{U}_{\mu}(\vx) - G(\vx, \vz)\right\rv_2^2\right] &\leq 2\mathbb{E}_{\vz}\left[\lv\mathbb{E}_{\vw}\left[\bar{U}(\vx+ \mu \vw)\right] - \nabla \bar{U}(\vx+ \mu\vz)\rv_2^2\right] + \frac{2\mu^2\mathbb{E}_{\vz}\left[\lv \vz\rv_2^2\right]}{\eta^2}\\
    &= 2\mathbb{E}_{\vz}\left[\lv\mathbb{E}_{\vw}\left[\bar{U}(\vx+ \mu \vw)\right] - \nabla \bar{U}(\vx+ \mu\vz)\rv_2^2\right] + \frac{2\mu^2 d}{\eta^2}.
\end{align*}
The rest of the proof follows the same argument as the proof of Lemma~\ref{lemma:variancebound} and is omitted.
\end{proof}


%
%
We can now establish the following theorem.
\begin{theorem}\label{thm:g-smoothing-mixing-time-bad}
Let the initial iterate satisfy $\vy_0 \sim \bar{p}_0$. If we choose the step-size such that we have $\eta < 2/(M+m+\lambda),$ then:
\begin{align*}
    W_2(\bar{p}_K,\bar{p}^*) & \le \left(1-\lambda \eta\right)^{K/2}W_2(\bar{p}_0,\bar{p}^*_{\mu}) + \left(\frac{2(M+m)}{\lambda}\eta d\right)^{1/2}+ \sigma\sqrt{\frac{ \eta d}{\lambda}} \\& \qquad \qquad  \qquad \qquad + \frac{8}{\lambda}\left( \frac{3}{2}+ \frac{d}{2}\log\left(\frac{2(M+m)}{\lambda}\right)\right)^{1/2} \left(\beta_{\mu}+ \sqrt{\beta_{\mu}/2}\right), 
\end{align*}
where $\sigma^2 \le 8d^{\alpha-1}\mu^{2\alpha}L^2 + 8 \mu^2 m^2 + \frac{2\mu^2}{\eta^2}$, $M =\frac{L d^{\frac{1-\alpha}{2}}}{\mu^{1-\alpha} (1+\alpha)^{1-\alpha}}$ and $\beta_{\mu} = \frac{L\mu^{1+\alpha}d^{\frac{1+\alpha}{2}}}{\sqrt{2}(1+\alpha)} + \frac{m\mu^2 d}{2}$.
\end{theorem}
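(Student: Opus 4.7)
The proof proposal follows the same template as Theorem~\ref{thm:g-smoothing-mixing-time-mLMC}, with only the variance bound changing. First I would apply the triangle inequality to split the error as
\begin{equation*}
W_2(\bar{p}_K, \bar{p}^*) \leq W_2(\bar{p}_K, \bar{p}_\mu^*) + W_2(\bar{p}^*, \bar{p}_\mu^*),
\end{equation*}
and bound the second summand directly by Lemma~\ref{lemma:wassersteincontrol}, which delivers exactly the last term of the claim involving $\beta_\mu$.

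The core of the argument is the interpretation of~\eqref{eq:auxillary-sequence} as LMC on the Gaussian-smoothed composite potential $\bar{U}_\mu = U_\mu + \psi_\mu$ driven by a stochastic gradient. Writing $G(\vy_k, \vomega_{k-1}) := \nabla \bar{U}(\vy_k + \mu \vomega_{k-1}) - \tfrac{\mu}{\eta}\vomega_{k-1}$, one has $\mathbb{E}_{\vomega_{k-1}}[G(\vy_k, \vomega_{k-1})] = \nabla \bar{U}_\mu(\vy_k)$, since $\vomega_{k-1}$ is centered and independent of $\vy_k$; the shift $-\tfrac{\mu}{\eta}\vomega_{k-1}$ is a zero-mean control variate. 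Lemma~\ref{lemma:variancebound-bad} provides the normalized variance bound $\sigma^2 \le 8 d^{\alpha-1}\mu^{2\alpha} L^2 + 8\mu^2 m^2 + 2\mu^2/\eta^2$. Next I would verify the regularity of $\bar{U}_\mu$: Lemma~\ref{lemma:closeness-of-smoothing}(ii) gives that $U_\mu$ is $M$-smooth with $M = Ld^{(1-\alpha)/2}/(\mu^{1-\alpha}(1+\alpha)^{1-\alpha})$, and combining with the $m$-smoothness of $\psi_\mu$ (Jensen's inequality) shows that $\bar{U}_\mu$ is $(M+m)$-smooth. Lemma~\ref{lemma:closeness-of-smoothing}'s strong-convexity counterpart (the lemma following it in the preliminaries) ensures $\psi_\mu$, and hence $\bar{U}_\mu$, is $\lambda$-strongly convex.

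With the step-size constraint $\eta < 2/(M+m+\lambda)$, Theorem~\ref{thm:dalalyan-karagulyan} applies to the chain $\{\vy_k\}$ with potential $\bar{U}_\mu$ and gradient oracle $G$, yielding
\begin{equation*}
W_2(\bar{p}_K, \bar{p}_\mu^*) \le (1-\lambda\eta)^{K/2} W_2(\bar{p}_0, \bar{p}_\mu^*) + \sqrt{\tfrac{2(M+m)\eta d}{\lambda}} + \sigma \sqrt{\tfrac{\eta d}{\lambda}}.
\end{equation*}
Adding the Lemma~\ref{lemma:wassersteincontrol} bound for $W_2(\bar{p}^*, \bar{p}_\mu^*)$ completes the proof.

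There is no genuine obstacle: once unbiasedness and the Lipschitz/strong-convexity constants of $\bar{U}_\mu$ are in place, the Durmus--Karagulyan machinery does all the work. The only conceptual point is the appearance of the $2\mu^2/\eta^2$ contribution to $\sigma^2$, which is the price paid for the synthetic cancellation in~\eqref{eq:auxillary-sequence}; this is precisely what forces one to take $\mu \sim \eta$ in any attempt to translate the bound into a mixing-time guarantee, and it is the reason the authors introduce~\eqref{eq:modifed-LMC} in the main body rather than analyzing the Shifted LMC directly.
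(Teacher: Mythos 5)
Your proposal matches the paper's own proof essentially verbatim: the same triangle-inequality decomposition, the same invocation of Lemma~\ref{lemma:wassersteincontrol} for the approximation error $W_2(\bar{p}^*,\bar{p}^*_\mu)$, the same interpretation of~\eqref{eq:auxillary-sequence} as stochastic-gradient LMC on $\bar{U}_\mu$ with unbiased oracle $G$, the same variance input from Lemma~\ref{lemma:variancebound-bad}, and the same application of Theorem~\ref{thm:dalalyan-karagulyan} once the $(M+m)$-smoothness and $\lambda$-strong convexity of $\bar{U}_\mu$ are established via Lemma~\ref{lemma:closeness-of-smoothing} and the strong-convexity lemma. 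The extra paragraph explaining why the $2\mu^2/\eta^2$ term makes Shifted LMC less useful than~\eqref{eq:modifed-LMC} is accurate commentary, not part of the proof, and is consistent with the paper's discussion.
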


\begin{proof}
By a triangle inequality, we can bound above the Wasserstein distance between $p_K$ and $\bar{p}^*$ by:
\begin{align} \label{eq:wassersteintriangle}
    W_2(\bar{p}_K,\bar{p}^*) \le W_2(\bar{p}_K,\bar{p}^*_{\mu}) +  W_2(\bar{p}^*,\bar{p}^*_{\mu}).
\end{align}
To bound the first term---$W_2(\bar{p}_K,\bar{p}^*_{\mu})$---we invoke \cite[Theorem~21]{durmus2019analysis} (see Theorem~\ref{thm:dalalyan-karagulyan} in Appendix~\ref{app:auxiliary}). Recall that $\bar{U}_{\mu}$ is continuously differentiable, $(M+m)$-smooth (with $M = \frac{L d^{\frac{1-\alpha}{2}}}{\mu^{1-\alpha} (1+\alpha)^{1-\alpha}} $) and $\lambda$-strongly convex. Additionally, $\{\vy_k\}_{k=1}^{K}$ can be viewed as iterates of overdamped Langevin MCMC with respect to the potential specified by $\bar{U}_{\mu}$ and is updated using unbiased noisy gradients of $\bar{U}_{\mu}$. Thus, we get:
\begin{align}\label{eq:contractionofwasserstein}
    W_2(\bar{p}_K,\bar{p}^*_{\mu}) \le \left(1-\lambda \eta\right)^{K/2}W_2(\bar{p}_0,\bar{p}^*_{\mu}) + \left(\frac{2(M+m)}{\lambda}\eta d\right)^{1/2}+ \sigma\sqrt{\frac{ \eta d}{\lambda}}.
\end{align}
 As was shown in Lemma~\ref{lemma:variancebound-bad},
\begin{align} \label{eq:sigma-definition}
    \sigma^2 \le 8d^{\alpha-1}\mu^{2\alpha}L^2 + 8 \mu^2 m^2 + \frac{2\mu^2}{\eta^2}.
\end{align}
The last piece we need is control over the distance between the distributions $\bar{p}^*$ and $\bar{p}_{\mu}^*$. Notice that by Lemma \ref{lemma:closeness-of-smoothing} it is possible to control the point-wise distance between $\bar{U}$ and $\bar{U}_{\mu}$, and hence the likelihood ratio and the KL-divergence between $\bar{p}$ and $\bar{p}_{\mu}$. We can then use Lemma~\ref{lemma:bolley-villani} to upper bound the Wasserstein distance between these distribution by the KL-divergence. These calculations are worked out in detail in Lemma \ref{lemma:wassersteincontrol} to get:
\begin{align} \label{eq:wassersteinperturbnation}
     W_2(\bar{p}^*,\bar{p}^*_{\mu}) \le  \frac{8}{\lambda}\left( \frac{3}{2}+ \frac{d}{2}\log\left(\frac{2(M+m)}{\lambda}\right)\right)^{1/2} \left(\beta_{\mu}+ \sqrt{\beta_{\mu}/2}\right),
\end{align}
where $\beta_{\mu}$ is as defined above. By combining Eqs. \eqref{eq:wassersteintriangle}-\eqref{eq:wassersteinboundperturb} we get a bound on $W_2(p_K,\bar{p}^*)$ in terms of the relevant problem parameters. 
\end{proof}


Consider the following choice of $\mu,\eta$ and $K$:
\begin{align}\label{eq:choiceofeta-smoothing}
    K &\ge \frac{1}{\lambda \eta} \log\left(\frac{10W_2(p_0,\bar{p}^*_{\mu})}{\varepsilon }\right),
    \qquad \mu  = \left[\frac{\eta Ld^{\frac{1-\alpha}{2}}}{2\sqrt{\lambda}}\right]^{\frac{1}{2-\alpha}} \text{ and, }\\
    \eta & = \min\left\{\left(\frac{\varepsilon}{1000}\right)^{\frac{2(2-\alpha)}{\alpha}}\frac{\lambda^{\frac{4(2-\alpha)^2}{\alpha(3-\alpha)}}}{L^{2/\alpha}d^{\frac{3-2\alpha}{\alpha}}},\frac{\lambda^{\frac{3-\alpha}{2(1+\alpha)}}\varepsilon^{\frac{2(1-\alpha)}{1+\alpha}}}{5000C_3^{\frac{2(1-\alpha)}{1+\alpha}}L^{\frac{1-\alpha}{1+\alpha}}d^{\frac{(3+\alpha)(1-\alpha)}{2(1+\alpha)}}}\right\}. \nonumber
\end{align}
and consider a regime of \emph{target accuracy} $C_1<\varepsilon < C_2$, for two positive constants $C_1,\, C_2$, such that the following holds:
\begin{enumerate}
    \item $M>m$.
    \item $\frac{\mu^2}{\eta^2} > 4\max\{d^{\alpha-1}\mu^{2\alpha}L^2,\mu^2m^2\}$.
    \item $d\log\left(\frac{2(M+m)}{\lambda}\right)>3$.
    \item $  \frac{L\mu^{1+\alpha}d^{\frac{1+\alpha}{2}}}{\sqrt{2}(1+\alpha)}> \frac{m\mu^2d}{2}$ and $\beta_{\mu} <1$.
    \item $\log\left(\frac{2(M+m)}{\lambda}\right) \le C_3$.
\end{enumerate}
Observe that $K$ blows up as $\alpha \downarrow 0,$ since $\eta$ scales with $(\frac{1000}{\epsilon})^{2(2-\alpha)/\alpha},$ which tends to zero as $\alpha \downarrow 0,$ for any $\varepsilon < 1000.$

A constant $C_2(d,\alpha,L,m,\lambda)$ will exist as our parameters $\mu$ and $\eta$ are monotonically increasing functions of $\eps$ and hence $M \propto 1/\mu^{1-\alpha}$ is a monotonically decreasing function of $\varepsilon$. We choose a lower bound on $C_1\le\varepsilon$ to simplify the presentation of our corollary that follows to ensure that Condition 5 specified above holds; it is possible to get rid of this condition and it would only change the results by poly-logarithmic factors.

\begin{corollary} Under the conditions of Theorem~\ref{thm:g-smoothing-mixing-time-bad}, there exist positive constants $C_1,\, C_2,$ and $C_3$ such that if $C_1 < \varepsilon < C_2$, then under the choice of $\eta,\,\mu,$ and $K$ in Eq.~\eqref{eq:choiceofeta-smoothing}, we have:
\begin{align*}
    W_2(\bar{p}_K,\bar{p}^*) \le \varepsilon.
\end{align*}
\end{corollary}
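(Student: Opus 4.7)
The plan is to invoke Theorem~\ref{thm:g-smoothing-mixing-time-bad} and show that, under the prescribed choices of $\mu$, $\eta$ and $K$ in Eq.~\eqref{eq:choiceofeta-smoothing}, each of the four summands on its right-hand side can be driven below $\varepsilon/4$. The four terms to control are
\[
\text{(I)}\;(1-\lambda\eta)^{K/2}W_2(\bar p_0,\bar p^*_\mu),\quad
\text{(II)}\;\sqrt{\tfrac{2(M+m)\eta d}{\lambda}},\quad
\text{(III)}\;\sigma\sqrt{\tfrac{\eta d}{\lambda}},\quad
\text{(IV)}\;\tfrac{8}{\lambda}\bigl(\tfrac{3}{2}+\tfrac{d}{2}\log\tfrac{2(M+m)}{\lambda}\bigr)^{1/2}\bigl(\beta_\mu+\sqrt{\beta_\mu/2}\bigr).
\]
Terms~(I)--(IV) will be handled in the order (IV), (III), (II), (I), since (I) is reduced to $\varepsilon/4$ merely by taking enough iterations once $\eta$ is fixed.

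First I would simplify $M$ under the choice $\mu=[\eta L d^{(1-\alpha)/2}/(2\sqrt{\lambda})]^{1/(2-\alpha)}$, yielding
$M \asymp L^{1/(2-\alpha)}\lambda^{(1-\alpha)/(2(2-\alpha))}d^{(1-\alpha)/(2(2-\alpha))}/\eta^{(1-\alpha)/(2-\alpha)}$,
so that Condition~5 in the corollary hypothesis ensures $\log\tfrac{2(M+m)}{\lambda}\le C_3$ and (IV) collapses to $\tfrac{8}{\lambda}\sqrt{C_3 d}\,(\beta_\mu+\sqrt{\beta_\mu/2})$. Since $\beta_\mu=\tfrac{L\mu^{1+\alpha}d^{(1+\alpha)/2}}{\sqrt{2}(1+\alpha)}+\tfrac{m\mu^2 d}{2}$ and by Condition~4 the first summand dominates and $\beta_\mu<1$, the dominant contribution is $\tfrac{8}{\lambda}\sqrt{C_3 d}\cdot\sqrt{\beta_\mu/2}$. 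Plugging in the expression for $\mu$ and demanding this be $\le\varepsilon/4$ gives a constraint on $\eta$ of the form $\eta\lesssim \varepsilon^{2(1-\alpha)/(1+\alpha)}\lambda^{(3-\alpha)/(2(1+\alpha))}/(L^{(1-\alpha)/(1+\alpha)}d^{(3+\alpha)(1-\alpha)/(2(1+\alpha))})$ up to the constant $C_3$; this matches the second argument of the $\min$ defining $\eta$ in Eq.~\eqref{eq:choiceofeta-smoothing}.

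Next I would handle (III), which is the main obstacle. By Condition~2, $\sigma^2\le 4\mu^2/\eta^2$, so (III) is bounded by $2\mu\sqrt{d/\lambda}/\sqrt{\eta}$. Substituting $\mu$ and simplifying yields $(\text{III})^2\lesssim \eta^{\alpha/(2-\alpha)} L^{2/(2-\alpha)}\lambda^{-(3-\alpha)/(2-\alpha)}d^{(3-2\alpha)/(2-\alpha)}$, and requiring this to be at most $\varepsilon^2/16$ forces exactly the first argument of the $\min$ defining $\eta$. This is the step that determines the regime where the method gives a polynomial rate, and the reason $K$ blows up as $\alpha\downarrow 0$: the exponent $\alpha/(2-\alpha)$ tends to $0$, so $\eta$ must be taken polynomially small in $\varepsilon$ with an exponent $2(2-\alpha)/\alpha$ that diverges. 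After this, (II) is handled as a routine consequence: with $M\asymp \eta^{-(1-\alpha)/(2-\alpha)}$, the term $\sqrt{(M+m)\eta d/\lambda}$ is a product of a negative power of $\eta$ with $\sqrt{\eta}$, which for the chosen $\eta$ is at most $\varepsilon/4$; Condition~1 lets me replace $M+m$ by $2M$.

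Finally, once $\eta$ is fixed, term~(I) is bounded by $e^{-\lambda\eta K/2}W_2(\bar p_0,\bar p^*_\mu)$, and the prescribed
$K\ge\tfrac{1}{\lambda\eta}\log(10W_2(\bar p_0,\bar p^*_\mu)/\varepsilon)$ makes this at most $\varepsilon/\sqrt{10}<\varepsilon/4$. Summing the four bounds and using $\varepsilon/4+\varepsilon/4+\varepsilon/4+\varepsilon/\sqrt{10}\le\varepsilon$ completes the argument. The bulk of the work, and the only conceptually delicate part, is the bookkeeping in the term~(III) analysis: this is where the dominant $2\mu^2/\eta^2$ variance contribution from Lemma~\ref{lemma:variancebound-bad} enters and dictates the polynomial dependence of $\eta$ on $\varepsilon$, which in turn explains why the naive Shifted LMC iteration~\eqref{eq:auxillary-sequence} cannot handle the nonsmooth limit and motivates the variance correction of~\eqref{eq:modifed-LMC}.
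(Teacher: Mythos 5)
Your overall plan matches the paper's one-line proof ("invoke Theorem~\ref{thm:g-smoothing-mixing-time-bad} and use elementary algebra"): split the four summands, use Conditions~1--5 to simplify $\sigma^2$, $M+m$, $\beta_\mu$ and the $\log$ factor, then read off the constraints on $\eta$ (the two arguments of the $\min$ come exactly from making (III) and (IV) small) and finally pick $K$ large. Your identification of the $2\mu^2/\eta^2$ variance contribution as the dominant one (giving $(\text{III})^2 \asymp \eta^{\alpha/(2-\alpha)}$) is the right bookkeeping and is what forces $\eta \lesssim \varepsilon^{2(2-\alpha)/\alpha}$.

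There is, however, an arithmetic error in the handling of term~(I) that your proposal as written does not survive. With $K\ge\frac{1}{\lambda\eta}\log\bigl(\frac{10W_2(\bar p_0,\bar p^*_\mu)}{\varepsilon}\bigr)$ one has
\[
(1-\lambda\eta)^{K/2}W_2(\bar p_0,\bar p^*_\mu) \;\le\; e^{-\lambda\eta K/2}W_2(\bar p_0,\bar p^*_\mu) \;\le\; \Bigl(\tfrac{\varepsilon}{10W_2(\bar p_0,\bar p^*_\mu)}\Bigr)^{1/2}W_2(\bar p_0,\bar p^*_\mu) \;=\; \sqrt{\tfrac{\varepsilon\,W_2(\bar p_0,\bar p^*_\mu)}{10}},
\]
which is not $\varepsilon/\sqrt{10}$: you dropped the residual $\sqrt{W_2(\bar p_0,\bar p^*_\mu)}$, which can be polynomially large in $d$. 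Even setting that aside, $\varepsilon/\sqrt{10}\approx 0.316\,\varepsilon$ is \emph{larger} than $\varepsilon/4$, so the claimed inequality $\varepsilon/\sqrt{10}<\varepsilon/4$ is false, and the final sum $\frac{3\varepsilon}{4}+\frac{\varepsilon}{\sqrt{10}}$ exceeds $\varepsilon$. The fix is simply to take $K\ge\frac{2}{\lambda\eta}\log\bigl(\frac{10W_2(\bar p_0,\bar p^*_\mu)}{\varepsilon}\bigr)$ (or equivalently interpret the exponent as $(1-\lambda\eta)^K$), which drives $(1-\lambda\eta)^{K/2}W_2(\bar p_0,\bar p^*_\mu)\le\varepsilon/10$ and makes the budget $\frac{3\varepsilon}{4}+\frac{\varepsilon}{10}\le\varepsilon$ close. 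I would also note that Condition~2 only gives $\sigma^2\le 6\mu^2/\eta^2$ rather than $4\mu^2/\eta^2$ (each of the two "small" terms is at most $2\mu^2/\eta^2$, plus the $2\mu^2/\eta^2$ from the shift); this does not change the exponent in $\eta$ but does affect the constant. These slips aside, the decomposition, the use of the five regime conditions, and the explanation of why $\eta$ must shrink as $\alpha\downarrow 0$ are all the right ingredients.
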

\begin{proof}
    The proof follows by invoking Theorem \ref{thm:g-smoothing-mixing-time-bad} and using elementary algebra.
\end{proof} 
%
%
\section{Omitted results and proofs from Section~\ref{sec:main}}\label{sec:omitted-pfs-main}
\variancebound*
\begin{proof} 
Recall that by definition of $\bar{U}_{\mu}$, we have $\nabla \bar{U}_{\mu}(\vx) = \mathbb{E}_{\vw}\left[\bar{U}(\vx+ \mu \vw)\right]$, where $\vw \sim \mathcal{N}(\mathbf{0},I_{d\times d})$, and is independent of $\vz$. Clearly, $\mathbb{E}_{\vz}[G(\vx, \vz)] = \nabla \bar{U}_{\mu}(\vx).$

We now proceed to bound the variance of $G(\vx, \vz).$ First, by the definition of $G(\vx, \vz):$
\begin{align*}
    \mathbb{E}_{\vz}\left[\left\lv\nabla \bar{U}_{\mu}(\vx) - G(\vx, \vz)\right\rv_2^2\right] & = \mathbb{E}_{\vz}\left[\lv\mathbb{E}_{\vw}\left[\nabla\bar{U}(\vx+ \mu \vw)\right] - \nabla \bar{U}(\vx+ \mu\vz)\rv_2^2\right].
\end{align*}
By Jensen's inequality, 
$$
\mathbb{E}_{\vz}\left[\lv\mathbb{E}_{\vw}\left[\nabla\bar{U}(\vx+ \mu \vw)\right] - \nabla \bar{U}(\vx+ \mu\vz)\rv_2^2\right] 
\leq \mathbb{E}_{\vz,\vw}\left[\lv \nabla \bar{U}(\vx+ \mu\vw)-\nabla \bar{U}(\vx+ \mu\vz)\rv_2^2\right].
$$
Hence, using~\eqref{eq:change-in-barU-grads} and applying Young's inequality ($(a+b)^2 \leq 2(a^2 + b^2),$ $\forall a, b$), we further have:
\begin{align*}
    \mathbb{E}_{\vz}\left[\left\lv\nabla \bar{U}_{\mu}(\vx) - G(\vx, \vz)\right\rv_2^2\right] 
    &\leq \mathbb{E}_{\vz,\vw}\left[\Big ( L\lv  \mu(\vw-\vz)\rv_2^{\alpha} + m \|\mu(\vw-\vz)\|_2\Big)^2 \right] \\
    &\leq 2 L^2 \mu^{2\alpha}\mathbb{E}_{\vz,\vw}\left[ \lv\vw-\vz\rv_2^{2\alpha} \right] + 2 m^2\mu^2 \mathbb{E}_{\vz,\vw}[\|\vw-\vz\|_2^2].
\end{align*}
Observe that $f(y) = y^{\alpha}$ is a concave function, $\forall \alpha \in [0, 1].$ Hence, we have that $\mathbb{E}_{\vz,\vw}\left[ \lv\vw-\vz\rv_2^{2\alpha} \right] \leq \big(\mathbb{E}_{\vz,\vw}\left[ \lv\vw-\vz\rv_2^{2} \right]\big)^{\alpha}.$ As $\vw$ and $\vz$ are independent, $\vw - \vz \sim \mathcal{N}(\mathbf{0}, 2 I_{d\times d}).$ Thus, we finally have:
\begin{align*}
    \mathbb{E}_{\vz}\left[\left\lv\nabla \bar{U}_{\mu}(\vx) - G(\vx, \vz)\right\rv_2^2\right] 
    &\leq 4d^{\alpha}\mu^{2\alpha}L^{2} + 4 d \mu^2 m^2, 
\end{align*}
as claimed.
\end{proof}

\wassersteinapproxerror*
\begin{proof} By \cite[Corollary 2.3]{bolley2005weighted} (see Lemma~\ref{lemma:bolley-villani} in Appendix~\ref{app:auxiliary}), we have:
\begin{align} \label{eq:wassersteinboundperturb}
    W_2(\bar{p}^*,\bar{p}^*_{\mu}) \le C_{\bar{U}_{\mu}} \cdot \left(\sqrt{\KL(\bar{p}^* \lvert \bar{p}^*_{\mu})} + \left(\frac{\KL(\bar{p}^*\lvert \bar{p}^*_{\mu})}{2}\right)^{1/4}\right),
\end{align}
where 
\begin{align*}
    C_{\bar{U}_{\mu}} : = 2\inf_{\vy \in \mathbb{R}^d,\gamma > 0} \left(\frac{1}{\gamma}\left( \frac{3}{2} + \log\left( \int_{\RR_d} e^{\gamma \lv \vy-\vx\rv_2^2} \bar{p}^*_{\mu}(\vx)\dd\vx\right)\right) \right)^{1/2}.
\end{align*}
First, let us control the constant $C_{\bar{U}_{\mu}}$. Without loss of generality, let $\bf{0}$ be the global minimizer of $\bar{U}_{\mu}$ (the minimizer is unique, as the function is strongly convex) and let $\bar{U}_{\mu}(\mathbf{0}) = 0$ (as the target distribution is invariant to constant shift in the potential). Choose $\vy = \bf{0}$ and $\gamma = \lambda/4$. By smoothness and strong convexity of $\bar{U}_{\mu}$ (see Section~\ref{sec:prelims}): 
$$
\lambda \lv \vx \rv_2^2/2\le \bar{U}_{\mu}(\vx) \le (M+m)\lv \vx \rv_2^2/2.$$ 
Therefore:
\begin{align}
    \nonumber C_{\bar{U}_{\mu}} 
    &\le \frac{8}{\lambda} \Big(\frac{3}{2} +  \log\Big( \int_{\RR_d} e^{\lambda \lv \vx\rv_2^2/4}\bar{p}^*_{\mu}(\vx)\dd\vx\Big)\Big)^{1/2} 
    \le \frac{8}{\lambda} \bigg(\frac{3}{2} +  \log\bigg( \frac{\int_{\RR_d} e^{\lambda \lv \vx\rv_2^2/4 }e^{-\lambda\lv \vx \rv_2^2/2}\dd \vx}{\int_{\RR_d} e^{-(M+m)\lv \vx\rv_2^2/2}\dd\vx}\bigg) \bigg)^{1/2}\\
    &\le \frac{8}{\lambda} \Big(\frac{3}{2} +  \log\Big( \frac{(4\pi/\lambda)^{d/2}}{(2\pi/(M+m))^{d/2}}\Big) \Big)^{1/2} \le \frac{8}{\lambda}\Big( \frac{3}{2}+ \frac{d}{2}\log\Big(\frac{2(M+m)}{\lambda}\Big)\Big)^{1/2}.  \label{eq:boundonC}
\end{align}
Next, we can control the Kullback-Leibler divergence between the distributions by using \cite[Lemma 3]{dalalyan2017theoretical} (see Lemma~\ref{lemma:dalalyan-bnded-dist} in Appendix~\ref{app:auxiliary}). Using Lemma~\ref{lemma:closeness-of-smoothing}, we have  $0 \leq \bar{U}_{\mu} - \bar{U} \leq \frac{L \mu^{1+\alpha}d^{(1+\alpha)/2}}{1+\alpha} + \frac{m\mu^2 d}{2}$. Therefore:
\begin{align}
    \KL(\bar{p}^*\lvert \bar{p}^*_{\mu}) &\le \frac{1}{2}\int \left(\bar{U}(\vx)-\bar{U}_{\mu}(\vx)\right)^2 \bar{p}^*(\vx)\dd\vx\notag\\
    &\leq \Big(\frac{L \mu^{1+\alpha}d^{(1+\alpha)/2}}{1+\alpha} + \frac{m\mu^2 d}{2}\Big)^2 = {\beta_\mu}^2.\label{eq:KLboundperturb}
\end{align}
Combining Eqs.~\eqref{eq:wassersteinboundperturb}-\eqref{eq:KLboundperturb}, we get the claimed result.
\end{proof}

\tvmainresultcomposite*
\begin{proof}By a triangle inequality, we can upper bound the total variation distance between $\bar{p}_K$ and $\bar{p}^*$:
\begin{align}\label{eq:TVtriangle-composite}
    \lv \bar{p}_K - \bar{p}^* \rv_{\TV} \le \lv \bar{p}_K - \bar{p}^*_{\mu} \rv_{\TV} + \lv \bar{p}^* - \bar{p}^*_{\mu} \rv_{\TV}.
\end{align}
Same as in the proof of Theorem~\ref{thm:g-smoothing-mixing-time-mLMC}, to bound the Wasserstein distance between $\bar{p}_K$ and $\bar{p}_{\mu}^*,$ we  invoke~\cite[Theorem~21]{durmus2019analysis} (see Theorem \ref{thm:dalalyan-karagulyan} in Appendix \ref{app:auxiliary}), which leads to:
\begin{align}\label{eq:contractionofwasserstein-m-lmc-good-composite}
    W_2(\bar{p}_K,\bar{p}^*_{\mu}) \le \left(1-\lambda \eta\right)^{K/2}W_2(\bar{p}_0,\bar{p}^*_{\mu}) + \left(\frac{2(M+m)}{\lambda}\eta d\right)^{1/2}+ \sigma\sqrt{\frac{ (1+\eta)\eta d}{\lambda}},
\end{align}

Our next step is to relate this bound on the $W_2(\bar{p}_K,\bar{p}^*_{\mu})$ to the total variation distance between these distributions. Let $\bar{M} := M+m$, then by the smoothness of $\bar{U}_{\mu}$ (Lemma~\ref{lemma:closeness-of-smoothing}): 
$$
\lv \nabla \bar{U}_{\mu}(\vx) -  \bar{U}_{\mu}(\bar{\vx}_{\mu}^*)\rv_2 = \lv \nabla \bar{U}_{\mu}(\vx) \rv_2 \le \bar{M}\lv \vx -\bar{\vx}_{\mu}^* \rv_2 \le\bar{M} \lv \vx \rv_2 + \bar{M} \lv \bar{\vx}_{\mu}^* \rv_2,
$$ 
where $\bar{\vx}_{\mu}^*$ is the (unique) minimizer of $\bar{U}_{\mu}$. Further, by \cite[Proposition 1]{durmus2016high} (see Theorem \ref{thm:durmus-1} in Appendix \ref{app:auxiliary}) we have that $\mathbb{E}_{x\sim \bar{p}^*_{\mu}}\left[\lv \vx  \rv_2^2\right]\le \frac{2d}{\lambda} + 2\lv \vx^*\rv_2^2$. Using these facts it is also possible to bound the second moment of $\bar{p}_K$. Consider random variables $\vy \sim \bar{p}_K$ and $\vx \sim \bar{p}_{\mu}^*$, such that $\vx$ and $\vy$ are optimally coupled; that is, $\mathbb{E}\left[\lv \vx -\vy \rv_2^2\right] = W_2^2(\bar{p}_K,\bar{p}^*_{\mu})$. Then, using Young's inequality, we have,
 \begin{align*}
     \mathbb{E}\left[\lv \vy\rv_2^2\right] & = \mathbb{E}\left[\lv \vy - \vx + \vx\rv_2^2\right] \le 2\mathbb{E}\left[\lv \vy- \vx\rv_2^2\right] + 2\mathbb{E}\left[\lv \vx\rv_2^2\right]  = \frac{4d}{\lambda} + 4\lv \vx^*\rv_2^2 +2W_2^2(\bar{p}_K,\bar{p}_{\mu^*}).
 \end{align*}
 Thus, by invoking~\cite[Proposition 1]{polyanskiy2016wasserstein} (see Proposition~\ref{prop:wassersteinstability} in Appendix~\ref{app:auxiliary}), we get:
 \begin{align}
    \nonumber \KL(& \bar{p}_K\lvert \bar{p}_{\mu}^*)\\
     &\le \bigg(\frac{\bar{M}\sqrt{\frac{2d}{\lambda} +2\lv \vx^* \rv_2^2 }}{2} + \frac{\bar{M}\sqrt{\frac{4d}{\lambda}+ 4\lv \vx^*\rv_2^2 + 2W_2^2(\bar{p}_K,\bar{p}_{\mu^*})}}{2} + \bar{M}\lv \vx^* \rv_2\bigg)W_2(\bar{p}_K,\bar{p}^*_{\mu}). \label{eq:KLboundcomposite}
 \end{align}
Finally, by Pinsker's inequality, we have:
\begin{align}\label{eq:TVcontraction-m-lmc-composite}
    \lv \bar{p}_K & - \bar{p}^*_{\mu} \rv_{\TV}\notag\\
    &\le \sqrt{\bigg(\frac{\bar{M}\sqrt{\frac{2d}{\lambda} +2\lv \vx^* \rv_2^2 }}{4} + \frac{\bar{M}\sqrt{\frac{4d}{\lambda}+ 4\lv \vx^*\rv_2^2 + 2W_2^2(\bar{p}_K,\bar{p}_{\mu^*})}}{4} + \frac{\bar{M}\lv \vx^* \rv_2}{2}\bigg)W_2(\bar{p}_K,\bar{p}^*_{\mu})},
\end{align}
where $W_2(\bar{p}_K,\bar{p}^*_{\mu})$ was bounded above in Eq.~\eqref{eq:contractionofwasserstein-m-lmc-good-composite}. 
By Lemma~\ref{lemma:wassersteincontrol-nonsmooth} (see Appendix~\ref{sec:omitted-pfs-main}), we have that:
\begin{align} \label{eq:TVapproxerror-composite}
\lv \bar{p}^* - \bar{p}^*_{\mu}\rv_{\TV} \le \frac{L \mu^{1+\alpha}d^{(1+\alpha)/2}}{1+\alpha} +\frac{ \lambda\mu^2 d}{2} .
\end{align}
Combining Eqs.~\eqref{eq:TVtriangle-composite},~\eqref{eq:contractionofwasserstein-m-lmc-good-composite},~\eqref{eq:TVcontraction-m-lmc-composite}  and \eqref{eq:TVapproxerror-composite} yields the first claim. 

For the remaining claim, we first choose $\mu$ so that $\lv \bar{p}^* - \bar{p}^*_{\mu}\rv_{\TV} \le \frac{\varepsilon}{2}.$ It is not hard to verify that the following choice suffices, as $m \geq \lambda$ (smoothness is always at least as high as the strong convexity of a function, and $m$ and $\lambda$ parameters come from the same function $\psi$):
\begin{align*}
   \mu = \min\bigg\{ \frac{\varepsilon^{\frac{1}{1+\alpha}}}{4 \max\{1, L^{\frac{1}{1+\alpha}}\}d^{1/2}},\; \sqrt{\frac{\varepsilon\lambda}{2 m^2 d}} \bigg\}.
\end{align*}
It remains to bound $\|\bar{p}_K - \bar{p}_{\mu}^*\|_{\TV}$ by $\varepsilon/2.$ To do so, we first bound $W_2(\bar{p}_K, \bar{p}_{\mu}^*).$ 

To simplify the upper bound on $W_2(\bar{p}_K, \bar{p}_{\mu}^*)$ from~\eqref{eq:contractionofwasserstein-m-lmc-good-composite}, we first show that under our choice of $\mu,$ $\sigma \le \left(2(M+m)\right)^{1/2}.$ Indeed, as $\sigma, M$ and $m$ are all non-negative, we have that it suffices to show that $\sigma^2 \leq 2(M+m).$ Recall from Lemma~\ref{lemma:variancebound} that:
$$
\sigma^2 \leq \frac{4 \mu^{2\alpha} L^2}{d^{1-\alpha}} + 4\mu^2 m^2. 
$$
By the choice of $\mu,$ as $\epsilon \leq 1$ and $d \geq 1,$ we have that $4\mu^2 m^2 \leq 2 m.$ Hence, to prove the claim, it remains to show that $\frac{4 \mu^{2\alpha} L^2}{d^{1-\alpha}} \leq 2M.$ Recalling that $M = \frac{L d^{\frac{1-\alpha}{2}}}{\mu^{1-\alpha}(1+\alpha)^{1-\alpha}} \geq \frac{L d^{\frac{1-\alpha}{2}}}{\sqrt{2}\mu^{1-\alpha}},$ and using elementary algebra and the choice of $\mu,$ the claim follows.

Hence, we have $W_2(\bar{p}_K,\bar{p}^*_{\mu}) \le \left(1-\lambda \eta\right)^{K/2} W_2(\bar{p}_0,\bar{p}^*_{\mu}) + 2\left(\frac{2(M+m)}{\lambda}\eta d\right)^{1/2}.$ Choosing:
\begin{align*}
    \eta & \leq \frac{\bar{\varepsilon}^2 \lambda}{64 d (M+m)} \quad\text{ and }\quad K \geq \frac{\log(2W_2(\bar{p_0}, \bar{p}_{\mu}^*)/\bar{\varepsilon})}{\lambda \eta},
\end{align*}
for some $\bar{\epsilon},$ ensures $W_2(\bar{p}_K,\bar{p}^*_{\mu}) \leq \bar{\varepsilon}.$ It only remains to choose $\bar{\varepsilon}$ so that $\|\bar{p}_K - \bar{p}_{\mu}^*\|_{\TV},$ which was bounded in Eq.~\eqref{eq:TVcontraction-m-lmc-composite}, is at most $\varepsilon/2$. 
Choosing:
\begin{align*}
    \bar{\varepsilon} = \frac{\varepsilon^2}{4 \max\{(M+m) (\sqrt{2d/\lambda + 2\|\vx^*\|_2^2} + 2\|\vx^*\|_2^2), 1\}}
\end{align*}
suffices, which gives the choice of parameters from the statement of the theorem, completing the proof.
%
\end{proof}
\begin{rem}\label{remark:stochastic-TV}
An interesting byproduct of the sequence of inequalities used in the proof of Theorem~\ref{thm:TVcompositecontract} is that they lead to bounds in total variation distance for~\eqref{eq:discrete-langevin} with stochastic gradients. This follows by combining the result from~\cite[Theorem~21]{durmus2019analysis} (see Theorem~\ref{thm:dalalyan-karagulyan} in Appendix~\ref{app:auxiliary}) with the inequality from Eq.~\eqref{eq:TVcontraction-m-lmc-composite}. Combining the two, we have that under the assumptions of Theorem~\ref{thm:dalalyan-karagulyan}:
\begin{align}\notag
    \lv \bar{p}_K & - \bar{p}^* \rv_{\TV}\notag\\
    &\le \sqrt{\bigg(\frac{{M}\sqrt{\frac{2d}{\lambda} +2\lv \vx^* \rv_2^2 }}{4} + \frac{{M}\sqrt{\frac{4d}{\lambda}+ 4\lv \vx^*\rv_2^2 + 2W_2^2(\bar{p}_K, \bar{p}^*)}}{4} + \frac{{M}\lv \vx^* \rv_2}{2}\bigg)W_2(\bar{p}_K, \bar{p}^*)},
\end{align}
where $W_2(\bar{p}_K, \bar{p}^*)$ is bounded as in Theorem~\ref{thm:dalalyan-karagulyan}. Thus, treating $M, \lambda,$ and $\|\vx^*\|_2$ as constants,~\eqref{eq:discrete-langevin} with stochastic gradients takes at most as many iterations to converge to  $\lv \bar{p}_K  - \bar{p}^* \rv_{\TV}\leq {\cal}{\varepsilon}$ as it takes to converge to $W_2(\bar{p}_K, \bar{p}^*)\leq \varepsilon^2.$ 

Note that the inequality from Eq.~\eqref{eq:TVcontraction-m-lmc-composite} is also precisely the reason the mixing time we obtain for~\eqref{eq:modifed-LMC} with $\alpha = 1$ (smooth potential) in total variation distance is quadratically higher than for the 2-Wasserstein distance. If this inequality improved, our bound would improve as well. 

Finally, we note that an obvious obstacle to carrying out the analysis directly in the total variation distance using the coupling technique (see Appendix~\ref{app:det-approx}) is the application of Girsanov's formula (see the proof of Lemma~\ref{lem:discretebound}). Specifically, when applying Girsanov's formula, we need to bound 
$$
\mathbb{E}\|\nabla \bar{U}_{\mu}(\tilde{\vx}_s) - \nabla \bar{U}(\tilde{\vx}_{k \eta})\|_2^2 = \mathbb{E}\|\nabla \bar{U}_{\mu}(\tilde{\vx}_s) - \nabla \bar{U}_{\mu}(\tilde{\vx}_{k \eta}) + \nabla \bar{U}_{\mu}(\tilde{\vx}_{k \eta}) - \nabla \bar{U}(\tilde{\vx}_{k \eta})\|_2^2.
$$
Applying Young's inequality, we need bounds on $\mathbb{E}\|\nabla \bar{U}_{\mu}(\tilde{\vx}_s) - \nabla \bar{U}_{\mu}(\tilde{\vx}_{k \eta})\|_2^2$ and $\mathbb{E}\| \nabla \bar{U}_{\mu}(\tilde{\vx}_{k \eta}) - \nabla \bar{U}(\tilde{\vx}_{k \eta})\|_2^2$. While bounding the former is not an issue, the latter can only be bounded using the variance bound from Lemma~\ref{lemma:variancebound}. Unfortunately, when $\alpha = 0,$ the bound on the variance is at least  $\frac{4 L^2}{d}$ (a constant independent of the step size), which leads to the similar blow up in the bound on the mixing time as in Corollary~\ref{cor:det-TV-bound}.
\end{rem}

\begin{lemma} \label{lemma:wassersteincontrol-nonsmooth} Let  $\bar{p}^*,$ and $\bar{p}_{\mu}^*$ be the distributions corresponding to the potentials $\bar{U},$ and $\bar{U}_{\mu},$ respectively. Then, we have:
\begin{align*}
 \lv \bar{p}^* - \bar{p}^*_{\mu} \rv_{\TV} & \le \frac{L \mu^{1+\alpha}d^{(1+\alpha)/2}}{1+\alpha} +\frac{ \lambda\mu^2 d}{2}.
\end{align*}
\end{lemma}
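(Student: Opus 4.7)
The plan is to reduce the TV bound to a pointwise estimate on the difference $\bar U_\mu - \bar U$ via Lemma~\ref{lemma:dalalyan-bnded-dist}, which states that if $\bar U \le \bar U_\mu$ pointwise then $\|\bar p^* - \bar p^*_\mu\|_{\TV} \le \tfrac{1}{2}\|\bar U - \bar U_\mu\|_{L^2(\bar p^*)}$. The hypothesis $\bar U \le \bar U_\mu$ is immediate from Jensen's inequality applied to the convex function $\bar U$: $\bar U_\mu(\vx) = \mathbb{E}_{\vxi}[\bar U(\vx + \mu\vxi)] \ge \bar U(\vx)$.

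Next I would exploit the additive structure $\bar U = U + \psi$ together with the linearity of Gaussian smoothing to write $\bar U_\mu(\vx) - \bar U(\vx) = (U_\mu(\vx) - U(\vx)) + (\psi_\mu(\vx) - \psi(\vx))$, and bound each piece using Lemma~\ref{lemma:closeness-of-smoothing}(i). The first term is directly controlled: since $U$ is $(L,\alpha)$-weakly smooth, part (i) yields $U_\mu(\vx) - U(\vx) \le \tfrac{L\mu^{1+\alpha}d^{(1+\alpha)/2}}{1+\alpha}$. For the second term, observe that an $m$-smooth function $\psi$ is a special case of weak smoothness with exponent $\alpha = 1$ and constant $m$, so the same lemma applies (or, equivalently, one repeats the short Taylor/Jensen argument used in its proof) to give $\psi_\mu(\vx) - \psi(\vx) \le \tfrac{1}{2} m \mu^2 d$ (which one may further upper bound by $\tfrac{1}{2}\lambda \mu^2 d$ only if a uniform upper bound of the form $m\le \lambda$ were imposed; the natural constant here is $m$, matching the use of $\beta_\mu$ in Lemma~\ref{lemma:wassersteincontrol}).

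To close, I would combine the two pointwise bounds into a single uniform estimate on $|\bar U_\mu(\vx) - \bar U(\vx)|$ that is independent of $\vx$. Since the bound is uniform, it passes immediately through the $L^2(\bar p^*)$-norm, and substitution into Lemma~\ref{lemma:dalalyan-bnded-dist} produces the claimed TV bound (the factor $\tfrac{1}{2}$ from that lemma is absorbed into the stated constants). There is no substantive obstacle in this argument; the only point that requires care is the clean additive splitting $\bar U_\mu - \bar U = (U_\mu - U) + (\psi_\mu - \psi)$, which relies on Gaussian convolution being linear, and the recognition that $\psi$ being $m$-smooth is just the $\alpha = 1$ instance of the hypothesis of Lemma~\ref{lemma:closeness-of-smoothing}(i).
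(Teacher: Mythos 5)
Your proof is correct and takes essentially the same route as the paper's: both invoke Lemma~\ref{lemma:dalalyan-bnded-dist} to reduce the TV distance to the $L^2(\bar p^*)$ norm of $\bar U_\mu - \bar U$ (the paper writes this as a KL bound followed by Pinsker's inequality, which is literally how the second statement of that lemma is derived), and both obtain the uniform pointwise bound by splitting $\bar U_\mu - \bar U = (U_\mu - U) + (\psi_\mu - \psi)$ and applying Lemma~\ref{lemma:closeness-of-smoothing}(i) to each summand, using that $m$-smoothness of $\psi$ is the $\alpha=1$ instance of weak smoothness. You have also correctly flagged the constant: the natural coefficient in the second term is $m$, and since $m\geq\lambda$ always holds for an $m$-smooth, $\lambda$-strongly convex $\psi$, the $\lambda$ written in the lemma (and propagated into the statement and the choice of $\mu$ in Theorem~\ref{thm:TVcompositecontract}, where the condition $\frac{m\mu^2 d}{2}\le\frac{\varepsilon}{4}$ is what is actually enforced) is a typo for $m$ rather than a genuine improvement, unless $\psi$ is exactly quadratic so that $m=\lambda$.
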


\begin{proof}
We can control the Kullback-Leibler divergence between the distributions by using \cite[Lemma 3]{dalalyan2017theoretical} (see Lemma~\ref{lemma:dalalyan-bnded-dist} in Appendix~\ref{app:auxiliary}). Using Lemma~\ref{lemma:closeness-of-smoothing}, we have  $0 \leq \bar{U}_{\mu} - \bar{U} \leq \frac{L \mu^{1+\alpha}d^{(1+\alpha)/2}}{1+\alpha} + \frac{\lambda\mu^2 d}{2}$. Therefore:
\begin{align}
    \KL(\bar{p}^*\lvert \bar{p}^*_{\mu}) &\le \frac{1}{2}\int \left(\bar{U}(\vx)-\bar{U}_{\mu}(\vx)\right)^2 \bar{p}^*(\vx)\dd\vx\notag\\
    &\leq \Big(\frac{L \mu^{1+\alpha}d^{(1+\alpha)/2}}{1+\alpha} + \frac{\lambda\mu^2 d}{2}\Big)^2 
\end{align}
Invoking Pinsker's inequality: $\lv \bar{p}^* - \bar{p}^*_{\mu}\rv_{\TV} \le \sqrt{\KL(p^* \lvert \bar{p}^*_{\mu})/2}$ yields the claim.
\end{proof}

\end{document}